\definecolor{linkblue}{rgb}{0.1,0.1,0.8}
\newcommand{\oea}{\mbox{$(1 + 1)$~EA}\xspace}
\newcommand{\oplea}{\mbox{$(1+\lambda)$~EA}\xspace}
\newcommand{\onemax}{\textsc{OneMax}\xspace}
\newcommand{\LO}{\textsc{Leading\-Ones}\xspace}
\newcommand{\leadingones}{\LO}
\newcommand{\DLB}{\textsc{Deceptive\-LeadingBlocks}\xspace}
\newcommand{\blockleadingones}{\textsc{Block\-LeadingOnes}\xspace}
\newcommand{\needle}{\textsc{Needle}\xspace}
\newcommand{\royalroad}{\textsc{RoyalRoad}\xspace}
\newcommand{\plateau}{\textsc{Plateau}\xspace}
\newcommand{\jump}{\textsc{Jump}\xspace}
\newcommand{\C}{\ensuremath{\mathbb{C}}}
\newcommand{\R}{\ensuremath{\mathbb{R}}}
\newcommand{\Z}{\ensuremath{\mathbb{Z}}}
\DeclareMathOperator{\mutate}{mutate}
\newcommand{\eps}{\varepsilon}
\newcommand{\assign}{\leftarrow}
\let\originalleft\left
\let\originalright\right
\renewcommand{\left}{\mathopen{}\mathclose\bgroup\originalleft}
\renewcommand{\right}{\aftergroup\egroup\originalright}
\newtheorem{theorem}{Theorem}[section]
\newtheorem{lemma}[theorem]{Lemma}
\newtheorem{definition}[theorem]{Definition}
\newtheorem{proposition}[theorem]{Proposition}
\newtheorem{corollary}[theorem]{Corollary}
\newcommand{\tf}{\tilde{f}}
\newcommand{\fh}{\hat{f}}
\newcommand{\tp}{\tilde{p}}
\newcommand{\ph}{\hat{p}}
\newcommand{\ones}{\mathbf{1^\ell}}
\newcommand{\zeros}{\mathbf{0}}
\newcommand{\norm}[1]{|#1|}
\DeclareMathOperator{\expectation}{E}
\DeclareMathOperator{\argmin}{argmin}
\newcommand{\ds}{\displaystyle}
\title{Fourier Analysis Meets Runtime Analysis: Precise Runtimes on Plateaus}
\author{Benjamin Doerr, Andrew James Kelley}
\begin{document}
	{\sloppy
		
		\maketitle
		\begin{abstract}
			We propose a new method based on discrete Fourier analysis to analyze the time evolutionary algorithms 
			spend on plateaus. This immediately gives a concise proof of the classic estimate of the expected runtime of the $(1+1)$ evolutionary algorithm on the Needle 
			problem due to Garnier, Kallel, and Schoenauer (1999). 
			
			We also use this 
			method to analyze the runtime of the $(1+1)$ evolutionary algorithm on a 
			benchmark consisting of $n/\ell$ plateaus of effective size $2^\ell-1$ which 
			have to be optimized sequentially in a LeadingOnes fashion. 
			
			Using our new method, we  determine 
			the precise expected runtime both for static and fitness-dependent 
			mutation rates. We also determine the asymptotically optimal static and fitness-dependent mutation rates. 
			For $\ell = o(n)$, the optimal static mutation rate is approximately $1.59/n$. The optimal fitness dependent mutation rate, when the first $k$ fitness-relevant bits have been found, is asymptotically $1/(k+1)$. These results, so far only proven for the single-instance problem LeadingOnes, thus hold for a much broader class of problems. We expect similar extensions to be true for other important results on LeadingOnes. We are also optimistic that our Fourier analysis approach can be applied to other plateau problems as well.
		\end{abstract}

		\section{Introduction}
		
		The mathematical runtime analysis of evolutionary algorithms (EAs) and other randomized search heuristics has made great progress in the last twenty years~\cite{AugerD11,DoerrN20,Jansen13,NeumannW10}. Starting with simple algorithms like the \oea on basic benchmark problems like \onemax, the area has quickly advanced to the analysis of complex evolutionary algorithms, estimation-of-distribution algorithms, ant colony optimizers, and many other heuristics, and this for the optimization of combinatorial optimization problems, of multi-objective problems, or in the presence of noise. 
		
		A closer look at the field reveals that we know quite well how to analyze optimization processes where a certain, steady progress is made. Here methods such as the fitness level method~\cite{Wegener01}, the expected weight decrease method~\cite{NeumannW07}, or drift analysis~\cite{HeY01} often allowed researchers to prove tight runtime guarantees, and often in (what now appears as) a straightforward way.
		
		The situation is very different when search heuristics encounter plateaus of constant fitness. Here the above mentioned methods cannot be applied (or only via the use of nontrivial and problem-specific potential functions). A good example for such difficulties is the analysis of Garnier, Kallel, and Schoenauer~\cite{GarnierKS99} on how the \oea optimizes the \needle problem. The \oea is arguable the simplest EA and the \needle problem is clearly the simplest (not easiest) problem with a pleateau -- the whole search space apart from the unique optimum is one large plateau of constant fitness. Despite this purported simplicity, a relatively technical proof was needed to prove the natural result that the \oea takes exponential time to find the optimum of the \needle problem; the paper proves a remarkably tight bound, tight including the leading constant, but no simpler proof is known for when only the asymptotic order of the runtime is sought for. 
		
		The reason for these difficulties is the absence of a natural measure of progress. One would hope that for a problem like \needle the high degree of symmetry could be exploited, but so far this has not been done successfully. The difficulty is as follows. To best exploit the symmetry, one would assume that the algorithm runs indefinitely and one would ignore the selection, that is, the offspring is always accepted even when it has a lower fitness than the parent. Note that this happens only when the current solution is already the optimum, and consequently, the first hitting time of the optimum is the same as for the original optimization process on the \needle problem. Now it is true that at all times the random solution of the \oea is uniformly distributed on the search space, but this still does not easily lead to runtime guarantees due to the dependencies between the iterations. Hence additional mixing time arguments would be necessary.
		
		In this work, we develop a novel approach to this plateau problem that uses discrete Fourier analysis. To the best of our knowledge, this is the first time that discrete Fourier analysis is used in the runtime analysis of a randomized search heuristic. We leave the technical details to Section~\ref{sec:using_fourier_analysis} and state here only that we are optimistic that this approach, while nonstandard in this field, is easy to understand and apply. 
		
		For the optimization process of the \oea (with general mutation rate $p$) on the \needle problem, our approach determines the precise expected runtime to be 
		\[
		E[T] = \sum_{j=1}^n \binom{n}{j} \frac{1}{1 - (1-2p)^j}.
		\]
		This extends the previous best result~\cite{GarnierKS99} to arbitrary mutation rate. Also, not too important but nice to have, our result determines the exact expected runtime, whereas the result of \cite{GarnierKS99} is precise only up to $(1\pm o(1))$ factors. We note that in \cite[Lemma 3.8]{GarnierKS99} also a convergence in distribution was shown. We do not prove any such result. Since the proof of \cite[Lemma 3.8]{GarnierKS99} appears relatively independent of the determination of the expected runtime in~\cite[Lemma 3.7]{GarnierKS99}, we would speculate that similar arguments can also be used to enrich our result with a statement on the distribution of the runtime, but we do not attempt this in this work. 
		
		We apply our method also to a generalization of the \leadingones problem. The \leadingones\ benchmark, first proposed in~\cite{Rudolph97}, is one of the most prominent benchmarks in the theory of evolutionary algorithms. It was the first example to show that, different from what is claimed in~\cite{Muhlenbein92}, not all unimodal problems are solved by the \oea in time $O(n \log n)$~\cite{Droste02}. It was also the first example for which a fitness-dependent mutation rate was proven to be superior to any static mutation rate~\cite{BottcherDN10}. Also for this benchmark, several classic hyperheuristics were shown to not work properly and suitable generalizations were developed~\cite{LissovoiOW17}. 
		
		While it is thus clear that the \leadingones benchmark had a significant impact on the theory of randomized search heuristics, it is also clear that all these results are based on a problem consisting of a single instance per problem size~$n$ (as opposed to other problems regarded in the theory of EAs such as pseudo-Boolean linear functions~\cite{DrosteJW02} and various types of \jump functions~\cite{DrosteJW02,Jansen15,BamburyBD21,DoerrZ21aaai,Witt23} or combinatorial optimization problems such as minimum spanning trees~\cite{NeumannW07}, single-source or all-pairs shortest paths~\cite{ScharnowTW04,DoerrHK12}, or Eulerian cycles~\cite{Neumann08}). This raises the question to what extent the insights gained from the analyses on \leadingones generalize. 
		
		As a first step to answer this question, we propose the \blockleadingones benchmark, which counts from left to right the number of contiguous blocks of fixed length~$\ell$ that only contain ones (mathematically simpler, we have $\blockleadingones(x) = \lfloor \leadingones(x) / \ell \rfloor$ for all $x \in \{0,1\}^n$). This problem can be seen as a \leadingones problem of length $n/\ell$ where each bit position is replaced by a block of length~$\ell$, which contributes a one to the original \leadingones problem if and only if all $\ell$ bits are equal to one (we note that the \royalroad problem~\cite{mitchell92royal} is constructed in this fashion from the \onemax problem). As we shall show in this work, many results previously proven for the \leadingones problem also hold in an analogous fashion for the broader class of \blockleadingones functions.
		
		We note that the \blockleadingones benchmark is essentially equal to the Royal Staircase benchmark introduced in \cite{NimwegenC01}, the difference being an additive term of one (which can be relevant when using fitness-proportionate selection). We also note that the \blockleadingones problem with block length $\ell=2$ has appeared as one of many problems in the experimental study~\cite{DoerrYHWSB20}. Due to the very different settings -- fitness-proportionate selection in~\cite{NimwegenC01} and the small block length, leading to effective plateaus of size three, in~\cite{DoerrYHWSB20} -- we could not distill from these works any greater insights on how simple elitist EAs cope with plateaus of constant fitness.
		
		As is easy to see, the \blockleadingones problem has nontrivial plateaus. We note that already the \leadingones problem has large plateaus of constant fitness, namely the fitness levels 
		\begin{align*}
		L'_i &= \{x \in \{0,1\}^n \mid \leadingones(x)=i\} \\
		&= \{x \in \{0,1\}^n \mid (\forall j \in [1..i]: x_j = 1) \wedge x_{i+1} = 0\},
		\end{align*}
		$i \in [0..n-2]$, but these are not critical as the plateau can be left to an individual with higher fitness by flipping a single bit. For the \blockleadingones problem with block length~$\ell$, the fitness levels 
		\begin{align*}
		L_i &=  \{x \in \{0,1\}^n \mid \blockleadingones(x)=i\} \\
		&= \{x \in \{0,1\}^n \mid (\forall j \in [1..i \ell]: x_j = 1) \wedge \\
		&\quad\quad\quad\quad\quad\quad\quad(\exists j \in [i\ell+1..(i+1)\ell] : x_{j} = 0)\},
		\end{align*}
		$i \in [0..n/\ell-1]$, form nontrivial plateaus in the sense that the closest improving solution might be $\ell$ Hamming steps away. These plateaus pose significant difficulties when attempting a runtime analysis for the \blockleadingones problem, in particular, when aiming for runtime bounds that are tight including the leading constant. So it was these difficulties that led us to find some way to exploit the symmetric nature of the plateau, which was the key behind the Fourier approach used in this paper, and with this approach we derive the following results for the \blockleadingones problem. 
		
		The optimal fitness-dependent mutation rate, $p(k)$, when the first $k$ bits are locked in is asymptotic to $1/(k+1)$ if $\ell$ is constant with respect to $n$. If $\ell = \omega(1)$, then with fitness $m$ and letting $k = m\ell$, we have $\lim_{\ell \to \infty} p(m\ell)/ (\ell^{-1}(\sqrt{1 + 2/m} - 1)) = 1$. When using the optimal fitness-dependent mutation rate, the expected runtime of \blockleadingones is asymptotic to $eb2^\ell n^2/(2\ell)$, where $b = 2^{-\ell -1}\sum_{j=1}^\ell \binom{\ell}{j} \frac{1}{j}$, and for large $\ell$, $b \approx 1/\ell$. 
		When using a static mutation rate of $c/n$, the runtime is asymptotic to $b2^\ell n^2 (e^c - 1)c^{-2}\ell^{-1}$, which has the optimal mutation rate of about $1.59/n$, giving a runtime asymptotic to $\alpha b2^\ell n^2/\ell$ for $\alpha \approx 1.54$.
		
		This work is organized as follows. We brief{}ly describe the most relevant previous works in the subsequent section. In Section~\ref{sec:using_fourier_analysis}, we introduce our analysis methods based on Fourier analysis. As a first simple application of this method, we give a new and simple analysis of the runtime of the \oea with arbitrary mutation rate on \needle in Section~\ref{sec:needle_problem}. In Section~\ref{sec:block_leading_ones}, we conduct a runtime analysis of the \oea on \blockleadingones, and determine optimal static and dynamic mutation rates. The conclusion in the last section summarizes our work and points out what could be the next steps in this research direction.

		\section{Previous Works}
		
		We now briefly describe the most relevant previous works which are runtime analyses of evolutionary algorithms. In particular, we mention works (i)~on problems with nontrivial plateaus and (ii)~the \leadingones problem. 
		
		As noticed already in the introduction, the vast majority of mathematical runtime analyses of evolutionary algorithms regard problems without significant plateaus of constant fitness. For these, the typical way the EA progresses is by finding solutions with strictly better fitness, and this allows one to obtain upper bounds (and sometimes also lower bounds) for the expected runtime via adding waiting times for suitable improvements (fitness level method~\cite{Wegener01,Sudholt13,Witt14,LassigS14,DoerrK21gecco} or via analyzing the expected progress with regard to a suitable progress measure (drift analysis~\cite{HeY01,OlivetoW11,DoerrJW12algo,DoerrG13algo,DoerrK21algo,LehreW21}). 
		
		Much less is known on how to analyze evolutionary optimization processes that need to traverse large plateaus of constant fitness. 
		In their seminal work -- the first paper determining runtimes precise including the leading constant and the first runtime analysis for an EA on a problem with a nontrivial plateau -- Garnier, Kallel, and Schoenauer~\cite{GarnierKS99} determine the precise (apart from lower order terms) runtime of the \oea on the \onemax and \needle problems (this lattter result is phrased as optimization without selection, but this is equivalent to saying that one optimizes the \needle function). 
		In this language, the main result for the \needle problem is that the \oea with mutation rate $p=c/n$, $c$ a constant, when initialized with an arbitrary search point different from optimum, finds the optimum of the \needle problem in an expected number of $(1\pm o(1)) 2^n \frac{1}{1 - e^{-c}}$ iterations. 
		This result is proven via a careful and somewhat technical analysis of the Markov chain on the Hamming levels of the hypercube $\{0,1\}^n$. 
		With deeper arguments from the theory of Markov processes, the authors also show that the runtime normalized by the expectation converges in distribution  to an exponential distribution with mean~$1$. The proof of this result~\cite[Lemma~3.8]{GarnierKS99} is sketched only.
		
		With similar, slightly simpler arguments, Wegener and Witt~\cite{WegenerW05} analyzed the runtime of the \oea on monotone polynomials (without making the leading constant of the runtime precise). This result can be used to show that the \oea optimizes Royal Road functions with block size~$d$ in time $\Theta(2^d \frac nd \log(\frac nd +1))$ (implicit in~\cite{WegenerW05}, explicit in~\cite{DoerrSW13foga}).
		
		The paper \cite{NimwegenC01} on the Royal Staircase function (essentially \blockleadingones) uses a non-elitist genetic algorithm without crossover and with fitness-proportionate selection, but they do mention crossover in their Section 7. Figure 3 of \cite{NimwegenC01} shows, unsurprisingly, that the optimal mutation rate for their GA is less than that of the \oea; this is because  mutation can cause a non-elitist approach to move to lower fitness individuals. For an application of a modified Royal Staircase function to biology, see~\cite{eremeevS21}
		
		The only work, to the best of our knowledge, that explicitly uses mixing time arguments, is the analysis of the \oplea on Royal Road functions~\cite{DoerrK13cec}. Since the main technical challenge there is posed by the large offspring population size, whereas we discuss a single-trajectory heuristic, we give no further details. 
		
		In~\cite{AntipovD21telo}, the $\plateau_k$ problem is defined, which has as plateau a Hamming ball of radius $k$ around the all-ones string (the global optimum). It was shown that the runtime of the \oea on $\plateau_k$ is the size of the plateau times the waiting time to flip between 1 and $k$ bits. In the present paper (after Lemma \ref{lem:simplified_expectation}), we mention that plateaus in \blockleadingones have a completely analogous runtime, despite the  different shape of the plateaus. 
		
		We note in passing that there are three more runtime results for the \plateau problem, all very distant from our work.  In~\cite{Eremeev20}, a runtime analysis of non-elitist population-based algorithms on the \plateau benchmark was conducted. The result on sub-jump functions in~\cite{Doerr21cgajump} and the result on weakly monotonic functions in~\cite{Doerr21tcsUB}, as pointed out in these works, also include \plateau functions as special cases. Since both works do not employ methods specific to plateaus, we do not discuss them further. 
		
		In several analyses, thin plateaus showed up, on which the behavior of the EA can be described via an unbiased Markov chain on an interval of the integers. Such Markov chains are relatively well understood, and various arguments can be used to show that the expected time to reach a desired point in such a chain is at most quadratic in the length of the interval in which this Markov chain lives. 
		Such arguments were used, e.g., to analyze the runtime of the \oea on the problems of computing maximum matchings~\cite{GielW03} or Eulerian cycles~\cite{Neumann08}. Artificial example problems with such one-dimensional plateaus have been analyzed in~\cite{JansenW01,BrockhoffFHKNZ07,FriedrichHN09,FriedrichHN10}.
		
		The \leadingones problem was first proposed by Rudolph~\cite{Rudolph97} as an example of a unimodal function having a runtime higher than the $O(n \log n)$ observed before on \onemax~\cite{Muhlenbein92}. Rudolph proves that the runtime of the \oea on \leadingones is $O(n^2)$, the matching lower bound of $\Omega(n^2)$ was first shown in~\cite{DrosteJW02}. 
		
		After the results for \onemax and \needle in~\cite{GarnierKS99}, the \leadingones problem is the third problem for which precise (that is, including the leading constant) runtime bounds could be shown. In two independent works~\cite{BottcherDN10,Sudholt13}, the runtime of the \oea with mutation rate $0 < p \le \frac 12$ on \leadingones was shown to be exactly $\frac 12 p^{-2} ((1-p)^{-n+1} - (1-p))$. This result implies that the optimal mutation rate is approximately $\frac{1.59}{n}$ (leading to an expected runtime of approximately $0.77n^2$), slightly above the standard recommendation of~$\frac 1n$ (leading to an expected runtime of approximately $0.86n^2$).
		
		In~\cite{BottcherDN10}, it was also proven (and for the first time for a classic benchmark) that the optimal mutation rate can change during the optimization process and that exploiting this can lead to constant-factor runtime gains. If the mutation rate $p_i = \frac{1}{i+1}$ is used when the current fitness is equal to~$i$, then the expected runtime reduces to $(e/4)(n^2 + n) \approx 0.68 n^2$. These fitness-dependent mutation rates are optimal. 
		
		The precise understanding of this changing optimal mutation rate motivated several research works on automatically adjusting the mutation strength during the run of an algorithm. Lissovoi, Oliveto, and Warwicker~\cite{LissovoiOW20ecj} used the analysis method of~\cite{BottcherDN10} to analyze the performance of simple hyperheuristics flipping one or two bits. In a sense, the algorithm regarded is again the \oea which has access to the two mutation operators that flip exactly one or exactly two random bits.  They show that the best runtime obtainable in this framework is $\frac 14 (1+\ln 2) n^2 +O(n) \approx 0.42n^2$, which is by a constant factor faster than the $\frac 12 n^2$ runtime resulting from always flipping one bit, which is the classic \emph{randomized local search} heuristic.
		
		Interestingly, they show that several classic hyperheuristics such as \textsc{Permutation}, \textsc{Greedy}, and \textsc{RandomGradient} perform worse and have a runtime of $\frac 12 \ln(3) n^2 + o(n^2) \approx 0.55n^2$. From their proofs, the authors of~\cite{LissovoiOW20ecj} distill a reason for the weak performance of the \textsc{RandomGradient} heuristic and improve it significantly. If the current low-level heuristic (here, the mutation operator) is not changed as soon as no improvement is found, but only if for some longer time $\tau \in \omega(n) \cap o(n \log n)$ no improvement is found, then this generalized \textsc{RandomGradient} heuristic achieves the asymptotically optimal (among all uses of one-bit and two-bit flips) expected runtime of $\frac 14 (1+\ln 2) n^2 +O(n) \approx 0.42n^2$. Similar results were shown for larger numbers of low-level heuristics, we refer to~\cite{LissovoiOW20ecj} for the details. The learning period $\tau$ can be chosen in a self-adjusting fashion~\cite{DoerrLOW18}, rendering the hyperheuristic an essentially parameter-free algorithm.
		
		Also with the standard bit-wise mutation operator automated parameter choices have been investigated. Following an experimental study~\cite{DoerrW18}, the runtime analysis~\cite{DoerrDL21} studies the effect of adjusting the mutation rate of the standard bit-wise mutation operator in the \oea via a one-fifth rule. More precisely, it shows that when using a $1/s$-rule and a cautious multiplicative update factor $F = 1+o(1)$, this self-adjusting algorithm manages to keep the mutation rate at $(1 \pm o(1)) \frac{\ln(s)}{f(x)}$, where $f(x)$ is the current fitness value. This is only by a constant factor of $\ln(s)$ off the optimal fitness-dependent mutation rate determined in~\cite{BottcherDN10}. In particular, a $1/e$-success rule determines the asymptotically optimal mutation rates and leads to the asymptotically optimal expected runtime for the \oea with bit-wise mutation. 
		
		These results show that significant insights were gained from studying the \leadingones benchmark. It appears likely that similar results hold more broadly than just for this one function. However, no such result exists so far. The most likely reason for this is the lack of other benchmarks in which a typical optimization process shows some steady progress towards the optimum. We note that when optimizing \onemax, the most prominent benchmark, almost all of the optimization time is spent on the last lower-order fraction of the fitness levels, hence often the parameters optimal for these are also asymptotically optimal for the whole process. Even more extreme is the situation for the \jump benchmark, where the runtime is dominated by the time taken by the last fitness improvement and hence this alone determines the asymptotically optimal mutation rate~\cite{DoerrLMN17}.
		
		We note that another variant of the \leadingones benchmark, the \DLB problem, was proposed in~\cite{LehreN19foga}. Here also blocks, always of length two, have to be optimized in a sequential fashion, but each block is deceiving in the sense that a block value of $11$ gives the best fitness contribution, one of $00$ the second best, and the other two the worst. We believe that this problem also could be an interesting object of study for the topics studied on \leadingones so far. However, with the larger number of local optima, this might rather be a benchmark to study how randomized search heuristics cope with local optima. In fact, in~\cite{WangZD21} it was shown that the \oea has a significantly worse performance on this problem than the Metropolis algorithm~\cite{MetropolisRRTT53} and the significance-based estimation-of-distribution algorithm~\cite{DoerrK20tec}. For this reason, we expect \blockleadingones to be a more interesting object of study to understand how EAs cope with plateaus of constant fitness. 
		
		Fourier analysis has been used before in analyzing EAs. The authors of \cite{chicanoSWDA15} use it to calculate the moments of the fitness distribution of offspring after applying mutation. For real-valued functions defined on $\{1, 2, \ldots, q\}^n$ that have bounded epistasis, the moments of their frequency distribution were calculated in \cite{suttonCW13}. A connection between the Fourier transform and genetic algorithms was also shown in \cite{vose1998}. See also \cite{roweVW04}, where it is shown that the usefulness of a Fourier approach intimately depends on having a mutation operator that comes from an abelian group (instead of a non-abelian group, such as the set of all permutations on $n\geq 3$ letters). However, to the best of our understanding, Fourier analysis has not been used so far to analyze the runtime of an EA.

		\section{Using Fourier Analysis}
		\label{sec:using_fourier_analysis}
		
		This paper only assumes the reader knows a few elementary facts about what in mathematics is called a \emph{group},
		more specifically what an \emph{abelian group} is (i.e.\ a commutative group). 
		All groups considered in this paper are abelian.
		
		Let $X_t$ be a random walk on the group $G$ with identity $\zeros$. For $g \in G$, we define $\expectation_\zeros[\tau_g]$ as the expectation
		of the hitting time of the element $g$ given that we start at $\zeros$:
		\[
		\expectation_\zeros[\tau_g] = \expectation[\min \{t \mid X_t = g, \; \text{given } X_0 = \zeros\}].
		\]
		

		
		In this section, we first describe the relevant random walk and then briefly review a few facts about groups, homomorphisms, and Fourier analysis. We then state and use our main tool: Theorem~\ref{thm:representations}, used to prove Theorem~\ref{thm:exact_expectation}.
		
		Let $\mu$ be a probability distribution on a group $G$. Then $\mu$
		defines a random walk on $G$, where for $u, w \in G$,
		the random walk goes
		from $u$ to $u + w$ with probability $\mu(w)$.
		The random walk we are interested in is to define $\mu$ as 
		follows.
		For $w \in G = \Z_2^\ell$, we have 
		\[
		\mu(w) = p^{\norm{w}}(1 - p)^{\ell - \norm{w}},
		\]
		where $p$ is some fixed probability with $p \in (0, 1)$, 
		and where $\norm{w}$ is the 1 norm of $v$ (i.e.\ $\norm{w} = \sum_{i=1}^\ell w_i$). 
		Notice that the resulting random walk is precisely the random
		walk where each bit is flipped independently with probability $p$, which is what is happening in the
		evolutionary algorithm considered in this paper.
		
		Recall that the order of an element $g$ of a group is the smallest positive integer $n$ such that $g^n = 1$, if the group is written multiplicatively (and replace $g^n=1$ with $g  +\cdots +g = 0$, with $n$ $g$'s, if it is written additively).
		
		We next briefly review \emph{homomorphisms}. 
		Let $G$ be an (abelian) group written additively, and let $H$ be a group
		written multiplicatively. Then a homomorphism from $G$ to $H$ is just a function $\varphi : G \to H$ such that
		\[
		\varphi(a + b) = \varphi(a)\varphi(b) \text{\quad for all } a, b \in G.
		\]
		For instance every exponential function $\varphi(x) = b^x$, with $b > 0$, is a homomorphism from the additive group of
		all real numbers $(\R, +)$ to the multiplicative
		group of all positive real numbers: $(\R_{> 0}, \cdot)$.
		
		A \emph{character} $\varphi : G \to \C^*$ of an abelian group $G$ is a homomorphism from $G$ to $\C^*$, the multiplicative group
		of nonzero complex numbers. 
		If $g \in G$ has order 2 or 1, then $\varphi(g)$ is a complex number whose square is 1, in which case
		$\varphi(g) \in \{1, -1\}$.
		
		We will only consider characters of abelian groups $G$ in which each element has order 2 or 1. So in this paper,
		a character 
		of $G$ is just a homomorphism
		\[
		\varphi : G \to \{1, -1 \}.
		\]
		In fact, $G$ will be the group $\Z_2^{\ell}$, the Cartesian product of $\Z_2 = \{0, 1 \}$ with itself $\ell$
		times (where $\ell$ is some positive integer), where addition is modulo 2. 
		For $x \in \Z_2^{\ell}$, we denote its $i$th bit by $x_i$ (1-indexed).
		Every character of $\Z_2^{\ell}$ is of the form
		$\rho_v$, where $v \in \Z_2^{\ell}$ and where we define $\rho_v(w)$ for $w \in \Z_2^{\ell}$ by
		\[
		\rho_v(w) = (-1)^{\sum_{i=1}^\ell v_i w_i}.
		\]
		These are the $2^\ell$ characters  of $\Z_2^{\ell}$, one for each $v$.
		
		We denote by $\hat{G}$ the set of all characters of $G$.
		For any function $\mu$ defined on an abelian group $G$ (taking on real values), we have that the \emph{Fourier transform} of $\mu$ is
		a function from $\hat{G}$ to $\C$, given by
		\[ 
		\hat{\mu}(\rho) = \sum_{w \in G} \mu(w) \overline{\rho(w)},
		\]
		where $\overline{z}$ is the complex conjugate of $z$. When $G$ is the group $\Z_2^{\ell}$, any character $\rho$ takes on only real values, and hence,
		\[ 
		\hat{\mu}(\rho) = \sum_{w \in G} \mu(w) \rho(w).
		\]
		For additional background on Fourier analysis on finite abelian groups, see for instance \cite{Garrett12}.

		The following is a special case of Theorem 3.1 from \cite{Zhang23}.
		\begin{theorem}
			\label{thm:representations}
			Let $G$ be the abelian group $\Z_2^\ell$ with $2^\ell = m$, and let $\rho_0, \rho_1, \ldots, \rho_{m-1}$ be the 
			characters of $G$, with $\rho_0$ being the trivial character $\rho_0 : G \to \{1\}$. Let $\mu$ be a probability
			distribution on $G$, and consider the random walk on $G$ generated by $\mu$ (where the walk goes from $g$
			to $g+h$ with probability $\mu(h)$). Then 
			\[
			\expectation_\zeros[\tau_g] = \sum_{i=1}^{m-1} \frac{1 - \rho_i(g)}{1 - \hat{\mu}(\rho_i)}.
			\]
		\end{theorem}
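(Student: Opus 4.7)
The plan is to set up a first-step recurrence for $h(g) := \expectation_{\zeros}[\tau_g]$, turn it into a single convolution equation on all of $G$, and then diagonalize by taking the Fourier transform.

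First I would invoke the Markov property and the translation invariance of the random walk on $G$ (so that $\expectation_x[\tau_g] = \expectation_{\zeros}[\tau_{g-x}]$) to obtain
\[
h(y) \;=\; 1 + \sum_{w \in G} \mu(w)\, h(y-w) \qquad \text{for all } y \neq \zeros,
\]
together with the boundary value $h(\zeros) = 0$. Writing $\mu * h$ for the convolution $(\mu*h)(y) = \sum_w \mu(w) h(y-w)$, this says $(h - \mu*h)(y) = 1$ for $y \neq \zeros$, and one still needs to determine $c := (h-\mu*h)(\zeros)$. To pin down $c$, I would sum $(h-\mu*h)(y)$ over all $y \in G$: the left side vanishes because $\sum_w \mu(w) = 1$, while the right side equals $c + (m-1)$. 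Hence $c = 1-m$, and the recursion becomes the clean group-wide identity
\[
h - \mu * h \;=\; \mathbf{1} - m \,\delta_{\zeros},
\]
where $\mathbf{1}$ is the constant function $1$ and $\delta_{\zeros}$ is the indicator of $\zeros$.

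Next I would take Fourier transforms on $G = \Z_2^{\ell}$ and use the convolution theorem $\widehat{\mu*h}(\rho) = \hat\mu(\rho)\hat h(\rho)$. Character orthogonality gives $\hat{\mathbf{1}}(\rho_0) = m$ and $\hat{\mathbf{1}}(\rho_i) = 0$ for $i \geq 1$, while $\widehat{\delta_{\zeros}}(\rho) = 1$ for every $\rho$. Therefore for every nontrivial character $\rho_i$,
\[
\bigl(1 - \hat\mu(\rho_i)\bigr)\, \hat h(\rho_i) \;=\; -m,
\qquad \text{i.e.} \qquad
\hat h(\rho_i) \;=\; \frac{-m}{1 - \hat\mu(\rho_i)}.
\]
These values are well-defined because $\mu(w) > 0$ for every $w$ (as $p \in (0,1)$), so $\hat\mu(\rho_i) = 1$ forces $\rho_i$ to be trivial. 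Fourier inversion then yields
\[
h(g) \;=\; \frac{1}{m}\sum_{i=0}^{m-1} \hat h(\rho_i)\,\rho_i(g)
\;=\; \frac{\hat h(\rho_0)}{m} \;-\; \sum_{i=1}^{m-1} \frac{\rho_i(g)}{1 - \hat\mu(\rho_i)}.
\]

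The main obstacle is precisely that the Fourier equation at the trivial character $\rho_0$ is degenerate: $1 - \hat\mu(\rho_0) = 0$, so the equation does not determine $\hat h(\rho_0)$. The fix is to use the boundary condition $h(\zeros) = 0$, which, substituted into the inversion formula, gives $\hat h(\rho_0)/m = \sum_{i=1}^{m-1} 1/(1-\hat\mu(\rho_i))$. Plugging this back in yields the claimed identity
\[
\expectation_{\zeros}[\tau_g] \;=\; \sum_{i=1}^{m-1} \frac{1 - \rho_i(g)}{1 - \hat\mu(\rho_i)}.
\]
The only remaining check is the implicit finiteness of $\expectation_{\zeros}[\tau_g]$ (which justifies the first-step recursion), and this follows from irreducibility of the walk on the finite group $G$ under the bit-flip distribution with $p \in (0,1)$.
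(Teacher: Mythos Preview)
Your argument is correct: the first-step recurrence plus translation invariance gives $h - \mu*h = \mathbf{1} - m\,\delta_{\zeros}$, Fourier-transforming diagonalizes the convolution, and the boundary value $h(\zeros)=0$ recovers the missing Fourier coefficient at the trivial character. The only small caveat is that the theorem is stated for an arbitrary probability distribution $\mu$ on $G$, so your justification that $1-\hat\mu(\rho_i)\neq 0$ via $\mu(w)>0$ (and likewise the finiteness of the hitting times) is really an appeal to the specific bit-flip $\mu$ of the paper rather than a proof in full generality; for the general statement one needs an irreducibility/aperiodicity hypothesis, which the paper leaves implicit.

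As for comparison with the paper: the paper does not prove this theorem at all. It is quoted as a special case of Theorem~3.1 of \cite{Zhang23} and used as a black box. Your proof is therefore a genuinely different route in the sense that it is self-contained and elementary, relying only on the convolution theorem and character orthogonality for $\Z_2^\ell$, whereas the paper outsources the result. What your approach buys is that a reader need not consult the external reference; what the citation buys is brevity and the observation that the identity holds in the greater generality of random walks on finite groups.
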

		
		To exploit Theorem~\ref{thm:representations}, we need to understand the Fourier transform $\hat{\mu}$ of $\mu$.
		The following lemma is very similar to Proposition 3.6 of \cite{vose1998}.
		
		\begin{lemma}
			\label{lemma:fourier_transform_of_mu}
			Let $v \in G = \Z_2^{\ell}$.  Let $\hat{\mu}$, $\mu$, and $\rho_v$ be as in earlier this section.
			Then
			\[
			\hat{\mu}(\rho_v) = (1 - 2p)^{\norm{v}}.
			\]
		\end{lemma}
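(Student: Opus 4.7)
The plan is to compute the sum defining $\hat{\mu}(\rho_v)$ directly and exploit the product structure that arises because both $\mu(w)$ and $\rho_v(w)$ factor coordinate-wise over $\ell$ independent bits.

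First I would write out
\[
\hat{\mu}(\rho_v) = \sum_{w \in \Z_2^\ell} p^{\|w\|}(1-p)^{\ell - \|w\|}(-1)^{\sum_{i=1}^\ell v_i w_i},
\]
and observe that $p^{\|w\|}(1-p)^{\ell-\|w\|} = \prod_{i=1}^\ell p^{w_i}(1-p)^{1-w_i}$ while $(-1)^{\sum v_i w_i} = \prod_{i=1}^\ell (-1)^{v_i w_i}$. Since the summation over $w \in \Z_2^\ell$ is a product of independent sums over each coordinate $w_i \in \{0,1\}$, the whole expression factors as
\[
\hat{\mu}(\rho_v) = \prod_{i=1}^\ell \sum_{w_i \in \{0,1\}} p^{w_i}(1-p)^{1-w_i}(-1)^{v_i w_i}.
\]

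Next I would evaluate each factor according to the value of $v_i$. If $v_i = 0$, the inner sum is $(1-p) + p = 1$, contributing nothing. If $v_i = 1$, the inner sum is $(1-p) - p = 1 - 2p$. Hence exactly $\|v\|$ of the factors equal $1-2p$ while the rest equal $1$, yielding $\hat{\mu}(\rho_v) = (1-2p)^{\|v\|}$, which is the claim.

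There is no real obstacle here; the only thing to be careful about is justifying the factorization of the sum into a product of coordinate-wise sums, which is just the standard fact that summing a product of independent single-variable functions over a Cartesian product factors into the product of the individual sums. This is the probabilistic content that each bit is flipped independently with probability $p$, translated into the Fourier setting.
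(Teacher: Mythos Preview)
Your proof is correct. Both your argument and the paper's exploit the same underlying product structure, but the executions differ. The paper first uses symmetry to assume the one-bits of $v$ occupy the first $k=\|v\|$ positions, then splits each $w$ into a prefix $w_a$ of length $k$ and a suffix $w_b$ of length $\ell-k$, and applies the binomial theorem twice: once to show the sum over $w_b$ collapses to $1$, and once to show the sum over $w_a$ equals $(1-2p)^k$. Your approach is more direct: you factor the entire sum coordinate-wise into $\ell$ two-term sums and evaluate each according to $v_i$, avoiding both the symmetry reduction and the binomial theorem. Your version is shorter and makes the independence of the bits more transparent; the paper's version has the minor advantage of keeping the computation at the level of sums over bit strings rather than single bits, which matches the notation used elsewhere in the section.
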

		\begin{proof}
			Let $\norm{v} = k$.
			Then
			\[
			\begin{aligned}
			\hat{\mu}(\rho_v) &= \sum_{w \in \Z_2^\ell}  \mu(w) \overline{\rho_v(w)} 
			= \sum_{w \in \Z_2^\ell}  \mu(w) \rho_v(w) \\
			&= \sum_{w \in \Z_2^\ell} p^{\norm{w}} (1-p)^{\ell - \norm{w}} (-1)^{\sum_{i=1}^\ell v_i w_i}.
			\end{aligned}
			\]
			Since this sum ranges over all $w \in \Z_2^\ell$, by symmetry we may assume that it is the first $k$ bits of $v$
			that are ones, with the rest being 0. We have then that
			\[
			\hat{\mu}(\rho_v) = \sum_{w \in \Z_2^\ell} p^{\norm{w}} (1-p)^{\ell - \norm{w}} (-1)^{\sum_{i=1}^k w_i}.
			\]
			Writing $w$ as the concatenation of a bitstring $w_a$ of length $k$ and $w_b$
			of length $\ell - k$, we find that
			\[
			\begin{aligned}
			\hat{\mu}(\rho_v) &= \!\! \sum_{w_a \in \Z_2^k}  \sum_{w_b \in \Z_2^{\ell - k}} \! p^{\norm{w_a} + 
				\norm{w_b}}(1 - p)^{k + \ell - k - (\norm{w_a} + \norm{w_b})} (-1)^{\norm{w_a}} \\
			&= \!\! \sum_{w_a \in \Z_2^k}  \!  p^{\norm{w_a}}(1 - p)^{k - \norm{w_a}} (-1)^{\norm{w_a}} B,  \\
			\end{aligned}
			\]
			where
			\[
			B = \sum_{w_b \in \Z_2^{\ell - k}} p^{\norm{w_b}} (1-p)^{\ell - k - \norm{w_b}}.
			\]
			By the binomial theorem we have 
			\[
			B = \sum_{j=0}^{\ell - k} \binom{\ell - k}{j}p^j (1-p)^{\ell - k - j} = (p + 1 - p)^{\ell - k} = 1.
			\]
			Using the binomial theorem a second time, we compute
			\[
			\begin{aligned}
			\hat{\mu}(\rho_v) &= \sum_{w_a \in \Z_2^k}  p^{\norm{w_a}}(1 - p)^{k - \norm{w_a}} (-1)^{\norm{w_a}} \\
			&= \sum_{w_a \in \Z_2^k}  (-p)^{\norm{w_a}}(1 - p)^{k - \norm{w_a}} \\
			&= \sum_{j=0}^k \binom{k}{j} (-p)^j (1-p)^{k - j} \\
			&= (1 - 2p)^k,
			\end{aligned}
			\]
			which proves the result.
		\end{proof}

		The heart of the following theorem is Theorem \ref{thm:representations}.
		\begin{theorem}
			\label{thm:exact_expectation}
			The exact expected hitting time of $\ones$ (the all 1's string) given that we start from a uniform distribution on $\Z_2^\ell$ is
			\[
			\expectation[T] = \sum_{j=1}^\ell \binom{\ell}{j} \frac{1}{1 - (1-2p)^j}.
			\]
		\end{theorem}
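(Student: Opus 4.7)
The plan is to apply Theorem~\ref{thm:representations} pointwise to every possible starting state, average over the uniform distribution, and then use character orthogonality together with Lemma~\ref{lemma:fourier_transform_of_mu} to simplify the result.

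First, I would exploit the translation invariance of the random walk. Since the step distribution $\mu$ does not depend on the current position, if $S_t = W_1 + \dots + W_t$ denotes the walk started at $\zeros$, then the walk started at $x_0$ is $x_0 + S_t$, so the hitting time of $\ones$ from $x_0$ equals the hitting time of $\ones - x_0 = \ones + x_0$ (addition in $\Z_2^\ell$) from $\zeros$. Averaging over $x_0$ uniform on $\Z_2^\ell$, and noting that the map $x_0 \mapsto \ones + x_0$ is a bijection of $\Z_2^\ell$, we obtain
\[
\expectation[T] = \frac{1}{2^\ell}\sum_{x_0 \in \Z_2^\ell} \expectation_\zeros[\tau_{\ones + x_0}] = \frac{1}{2^\ell}\sum_{g \in \Z_2^\ell} \expectation_\zeros[\tau_g].
\]
The $g = \zeros$ term contributes $0$, which is consistent with Theorem~\ref{thm:representations} since $\rho_i(\zeros) = 1$ for every character~$\rho_i$.

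Next I would substitute the formula from Theorem~\ref{thm:representations} and swap the two sums:
\[
\expectation[T] = \sum_{i=1}^{m-1} \frac{1}{1-\hat\mu(\rho_i)} \cdot \frac{1}{2^\ell}\sum_{g \in \Z_2^\ell} \bigl(1 - \rho_i(g)\bigr).
\]
For every nontrivial character $\rho_i$, the standard orthogonality relation $\sum_{g \in G} \rho_i(g) = 0$ gives $\frac{1}{2^\ell}\sum_{g}(1-\rho_i(g)) = 1$, so the inner average collapses and we are left with $\expectation[T] = \sum_{i=1}^{m-1} (1 - \hat\mu(\rho_i))^{-1}$.

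Finally, I would plug in Lemma~\ref{lemma:fourier_transform_of_mu}: recalling that the characters of $\Z_2^\ell$ are indexed by $v \in \Z_2^\ell$ with $\hat\mu(\rho_v) = (1-2p)^{\norm{v}}$, I can group the nontrivial characters according to their Hamming weight $j = \norm{v} \in \{1,\dots,\ell\}$, of which there are $\binom{\ell}{j}$ for each $j$, yielding the claimed expression. The only delicate step is the translation-invariance reduction in the first paragraph; everything after that is a short algebraic manipulation combined with the orthogonality of characters, so I do not anticipate any real obstacle.
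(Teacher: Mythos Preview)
Your proof is correct and follows essentially the same route as the paper's: both reduce by translation invariance to averaging $\expectation_\zeros[\tau_g]$ over all $g$, apply Theorem~\ref{thm:representations}, swap the two sums, and group the nontrivial characters by Hamming weight via Lemma~\ref{lemma:fourier_transform_of_mu}. The only cosmetic difference is that where you invoke the orthogonality relation $\sum_{g}\rho_i(g)=0$ to collapse the inner average to~$1$, the paper carries out the equivalent explicit count of how many $w$ share an odd number of ones with a fixed~$v$.
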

		\begin{proof}
			We will see that this follows from Theorem \ref{thm:representations} and Lemma \ref{lemma:fourier_transform_of_mu}. 
			First, note that for all $a, b, c \in G$ we have $\expectation_a[\tau_{b}] = \expectation_{a+c}[\tau_{b+c}]$.
			Therefore, 
			\[
			\expectation[T] =  \frac{1}{2^\ell} \sum_{v \in \Z_2^\ell} \expectation_v[\tau_{\ones}] 
			= \frac{1}{2^\ell} \sum_{v \in \Z_2^\ell} \expectation_{v+v}[\tau_{\ones+v}]
			= \frac{1}{2^\ell} \sum_{w \in \Z_2^\ell} \expectation_\zeros[\tau_{w}],
			\] 
			and notice $\expectation_\zeros[\tau_\zeros] = 0$. By Theorem \ref{thm:representations}  and Lemma \ref{lemma:fourier_transform_of_mu},
			\[\begin{aligned}
			\expectation[T] &= \frac{1}{2^\ell} \sum_{\substack{w \in \Z_2^\ell \\ w \neq \zeros }} \expectation_\zeros[\tau_{w}] \\
			&= \frac{1}{2^\ell} \sum_{\substack{w \in \Z_2^\ell \\ w \neq \zeros }} 
			\sum_{\substack{v \in \Z_2^\ell \\ v \neq \zeros} } \frac{1 - \rho_v(w)}{1 - \hat{\mu}(\rho_v)} \\
			&= \frac{1}{2^\ell} \sum_{w } \sum_{v} 
			\frac{1 - (-1)^{\sum_{i=1}^\ell v_i w_i}}{1  - (1 - 2p)^{\norm{v}}} \\
			&= \frac{1}{2^\ell} \sum_{v} \sum_{w} 
			\frac{1 - (-1)^{\sum_{i=1}^\ell v_i w_i}}{1  - (1 - 2p)^{\norm{v}}}. \\ 
			\end{aligned}
			\]
			Pulling out what does not depend on $w$ gives
			\[
			\expectation[T] = \frac{1}{2^\ell} \sum_{\substack{v \in \Z_2^\ell \\ v \neq \zeros} }  \frac{1}{1  - (1 - 2p)^{\norm{v}}}
			\sum_{\substack{w \in \Z_2^\ell \\ w \neq \zeros }}   \left( 1 - (-1)^{\sum_{i=1}^\ell v_i w_i} \right).
			\]
			A summand for $w$ is nonzero precisely if $v$ and $w$ share an odd number of 1's. There are 
			$2^{\norm{v} - 1} \cdot 2^{\ell - \norm{v}}$ such summands each equal to 2. We thus obtain
			\[\begin{aligned}
			\expectation[T] &= \frac{1}{2^\ell} \sum_{\substack{v \in \Z_2^\ell \\ v \neq \zeros} }  \frac{1}{1  - (1 - 2p)^{\norm{v}}}
			\cdot 2^{\norm{v} - 1} \cdot 2^{\ell - \norm{v}} \cdot 2 \\
			&=  \frac{1}{2^\ell} \sum_{\substack{v \in \Z_2^\ell \\ v \neq \zeros} }  \frac{2^\ell}{1  - (1 - 2p)^{\norm{v}}} \\
			&= \sum_{\substack{v \in \Z_2^\ell \\ v \neq \zeros} }  \frac{1}{1  - (1 - 2p)^{\norm{v}}} \\
			&= \sum_{j=1}^\ell \binom{\ell}{j} \frac{1}{1 - (1-2p)^j}.
			\end{aligned}
			\]
		\end{proof}
		
		\begin{proposition}
			\label{prop:exact_expectation_starting_away_from_ones}
			Let $T$ be as in Theorem \ref{thm:exact_expectation}, and 
			let $T'$ be the expected hitting time of $\ones$ (the all 1's string) given that we start from a uniform distribution on
			the $2^\ell - 1$ elements of $\Z_2^\ell - \{\ones\}$. Then
			\[
			\expectation[T']  =   \frac{2^\ell}{2^\ell - 1} \expectation[T].
			\]
			Therefore, 
			\[
			\expectation[T] = \frac{2^\ell}{2^\ell - 1} \sum_{j=1}^\ell \binom{\ell}{j} \frac{1}{1 - (1-2p)^j}.
			\]
		\end{proposition}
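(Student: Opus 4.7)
The plan is to derive $\expectation[T']$ from $\expectation[T]$ by conditioning on whether the initial uniform sample from $\Z_2^\ell$ happens to be the target $\ones$. Since the uniform distribution on $\Z_2^\ell$ is a mixture, with weight $1/2^\ell$ of the point mass at $\ones$ and weight $(2^\ell - 1)/2^\ell$ of the uniform distribution on $\Z_2^\ell \setminus \{\ones\}$, the law of total expectation gives
\[
\expectation[T] = \frac{1}{2^\ell} \cdot 0 + \frac{2^\ell - 1}{2^\ell} \cdot \expectation[T'],
\]
because starting at $\ones$ produces hitting time zero and starting uniformly on the complement is by definition the experiment defining $T'$. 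Solving this linear equation for $\expectation[T']$ immediately yields
\[
\expectation[T'] = \frac{2^\ell}{2^\ell - 1}\,\expectation[T],
\]
which is the first claim.

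For the second claim, I would simply substitute the closed form of $\expectation[T]$ provided by Theorem~\ref{thm:exact_expectation} into the identity just proved, obtaining
\[
\expectation[T'] = \frac{2^\ell}{2^\ell - 1}\sum_{j=1}^\ell \binom{\ell}{j}\frac{1}{1 - (1-2p)^j},
\]
matching the displayed formula in the statement (the proposition appears to have a typographical $T$ in place of $T'$ on the final line). No new ingredients beyond Theorem~\ref{thm:exact_expectation} and the elementary conditioning argument are required.

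There is no real obstacle here; the only subtlety is a conceptual one, namely recognising that the conditional expectation given $X_0 = \ones$ is exactly zero and hence contributes nothing to the weighted average. Everything else is a one-line algebraic rearrangement followed by substitution.
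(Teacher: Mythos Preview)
Your proposal is correct and essentially the same as the paper's argument: the paper notes that $\expectation[T'] = \frac{1}{2^\ell-1}\sum_{w\neq\zeros}\expectation_\zeros[\tau_w]$ while the proof of Theorem~\ref{thm:exact_expectation} showed $\expectation[T] = \frac{1}{2^\ell}\sum_{w\neq\zeros}\expectation_\zeros[\tau_w]$, which is exactly your mixture identity written via the sum representation. Your observation that the displayed $\expectation[T]$ on the last line should read $\expectation[T']$ is also correct.
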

		\begin{proof}
			This follows from the proof of Theorem \ref{thm:exact_expectation} together with the fact that
			\[
			\expectation[T'] = \frac{1}{2^\ell - 1} \sum_{\substack{w \in \Z_2^\ell \\ w \neq \zeros }} \expectation_\zeros[\tau_{w}].
			\]
		\end{proof}

		\section{Analysis of the Needle Problem}
        \label{sec:needle_problem}
		
		In this section, we sketch a simple proof of the classic result on the runtime of the \oea
		on the \needle problem, proven in \cite{GarnierKS99}. There, the mutation rate $p$ is replaced with $c/\ell$, where $c > 0$ is constant. 
		Corollary~\ref{cor:prior_result} 
		is a consequence of Theorem \ref{thm:exact_expectation}, and note that Corollary \ref{cor:prior_result} implies the part of 
		Proposition~3.1 in \cite{GarnierKS99} about $\expectation[T]$, which is the runtime on the \needle problem for the $(1+1)$ EA with 
		bit mutation
		$c/\ell$. The rest of Proposition~3.1 in \cite{GarnierKS99} (which is about random local search) can be proven with modified versions of Lemma~\ref{lemma:fourier_transform_of_mu} and Theorem~\ref{thm:exact_expectation} and using Lemma~\ref{lem:from_Marcin}.

  \begin{algorithm2e}
  	Choose $x \in \{0,1\}^n$ uniformly at random\;
  	\For{$t=1,2,3,\ldots$}{
  		$y \assign \mutate(x)$\;
  		\lIf{$f(y)\geq f(x)$}{$x \assign y$}
  	}
  	\caption{The  \textbf{(1+1) Evolutionary Algorithm} for maximizing a given objective function~$f\colon\{0,1\}^n\to\mathbb{R}$. 
  		Here, the mutation operator is to flip each bit independently with probability $p$.
  		The classic \oea uses a mutation rate of $p=\frac 1n$.}
  	\label{algo}
  	\label{alg:oea}
  \end{algorithm2e}
		\begin{corollary}
			\label{cor:prior_result}
			Fix a constant $c > 0$. Then
			\[
			\lim_{\ell \to \infty} 2^{-\ell} \sum_{j=1}^\ell \binom{\ell}{j} \frac{1}{1 - (1 -\frac{c}{\ell/2})^j} =  \frac{1}{1 - e^{-c}}.
			\]
		\end{corollary}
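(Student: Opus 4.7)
The plan is to recognize the sum as the expectation of a function of a symmetric binomial random variable and exploit its concentration at $\ell/2$. With $p := c/\ell$ so that $1 - 2p = 1 - c/(\ell/2)$, letting $X \sim \Bin(\ell, 1/2)$, the sum equals $\expectation[h_\ell(X)\,\mathbf{1}_{X \geq 1}]$ with $h_\ell(j) := 1/(1 - (1-2p)^j)$. Heuristically, the bulk of the binomial mass sits near $j = \ell/2$, where $(1-2c/\ell)^j \approx (1-2c/\ell)^{\ell/2} \to e^{-c}$, so $h_\ell$ is close to the target constant $1/(1-e^{-c})$; all that remains is to make this heuristic rigorous and control the tails.

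First, I would record that $h_\ell$ is monotonically decreasing in $j$ (since $(1-2p)^j$ is), so in particular $h_\ell(j) \leq h_\ell(1) = 1/(2p) = \ell/(2c)$ for all $j \geq 1$. This uniform bound is what will tame the tails. Then I would split the expectation at a central window $A_\ell := \{j : |j - \ell/2| \leq \ell^{2/3}\}$. On $A_\ell$, a first-order Taylor expansion of $j \log(1-2c/\ell) = j(-2c/\ell + O(\ell^{-2}))$ shows the exponent converges to $-c$ uniformly, so $h_\ell(j) \to 1/(1-e^{-c})$ uniformly on $A_\ell$; combined with $\Pr[X \in A_\ell] \to 1$ this gives the central contribution the correct limit. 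For the tails, Chernoff provides $\Pr[X \notin A_\ell] \leq 2\exp(-\Omega(\ell^{1/3}))$, which beats the $\ell/(2c)$ prefactor supplied by the uniform bound on $h_\ell$, so the tail contribution vanishes.

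There is no deep obstacle; the only delicate point is choosing the window width. It must be $o(\ell)$ so that $-2cj/\ell$ is uniformly close to $-c$ on the window, yet large enough on the scale $\sqrt{\ell}$ for Chernoff to yield a bound that dominates the $O(\ell)$ tail prefactor $h_\ell(1)$. Any window width $\ell^\alpha$ with $\alpha \in (1/2, 1)$ does the job comfortably; the choice $\alpha = 2/3$ is a convenient one.
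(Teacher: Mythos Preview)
Your proposal is correct and follows essentially the same strategy as the paper: split the sum into a central window where $h_\ell(j)$ is uniformly close to the target $1/(1-e^{-c})$, and tails where the uniform bound $h_\ell(j)\le h_\ell(1)=\ell/(2c)$ combined with concentration of $\Bin(\ell,1/2)$ kills the contribution.

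The implementation differs in two streamlining choices worth noting. The paper takes a window of width $\alpha\ell$ for fixed $\alpha\in(0,1)$, obtains bounds $1/(1-e^{-c(1\pm\alpha)})$ on the central part, and then lets $\alpha\to 0$ at the end; you instead pick a sublinear window $\ell^{2/3}$, which already gives uniform convergence of $h_\ell$ to the limit constant on the window and avoids the second limit. For the tails, the paper bounds $\sum_{j\le(1-\alpha)\ell/2}\binom{\ell}{j}$ via a Stirling-based estimate on $\binom{\ell}{(1-\alpha)\ell/2}$ together with an elementary summation lemma, whereas you invoke the additive Chernoff bound directly to get $\Pr[X\notin A_\ell]\le 2\exp(-\Omega(\ell^{1/3}))$. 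Both routes yield the same exponential-versus-polynomial competition, but yours is shorter and avoids the auxiliary lemmas; the paper's version has the minor advantage of being fully self-contained from Stirling.
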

		We first prove a lemma (and then use two more stated afterwards).

		\begin{lemma}
			\label{lemma:middle_binomial_coefficients}
			For all $\alpha \in (0, 1)$, we have
			\[
			\lim_{\ell \to \infty} \sum_{j = (1-\alpha)\ell/2}^{(1+\alpha)\ell/2} \binom{\ell}{j} 2^{-\ell} = 1,
			\]
			where we assume $j$ is integral by appropriately taking the ceiling or floor.
		\end{lemma}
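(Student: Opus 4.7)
The plan is to interpret the sum probabilistically and then apply a standard concentration inequality. If $X \sim \Bin(\ell,1/2)$, then $\binom{\ell}{j} 2^{-\ell} = \Pr[X=j]$, so the sum in question equals
\[
\Pr\!\left[\tfrac{(1-\alpha)\ell}{2} \leq X \leq \tfrac{(1+\alpha)\ell}{2}\right] = \Pr\!\left[|X - \tfrac{\ell}{2}| \leq \tfrac{\alpha \ell}{2}\right].
\]
So the statement reduces to the fact that a symmetric simple random walk is concentrated around its mean on the scale $\alpha \ell$, which is much larger than the standard deviation $\sqrt{\ell}/2$.

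Next I would invoke Chebyshev's inequality. Since $X$ has mean $\ell/2$ and variance $\ell/4$, we get
\[
\Pr\!\left[|X - \tfrac{\ell}{2}| > \tfrac{\alpha \ell}{2}\right] \leq \frac{\ell/4}{(\alpha \ell/2)^2} = \frac{1}{\alpha^2 \ell},
\]
which tends to $0$ as $\ell \to \infty$. Consequently the complementary probability, which is the sum in the lemma, tends to $1$.

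The only minor subtlety is the rounding convention in the summation bounds. Taking the ceiling of the lower endpoint and the floor of the upper endpoint only shrinks the range of summation by adding back at most one extra term at each end, each of mass $O(2^{-\ell})$ (or in the other rounding direction, including one extra term of mass at most $1$ which still fits inside a Chebyshev bound for a slightly larger $\alpha'<1$). In either case the discrepancy with the bound above is $o(1)$, so the limit is unaffected. No serious obstacle is expected: the lemma is a textbook concentration statement, and Chebyshev is already enough because the interval half-width exceeds the standard deviation by a factor of order $\sqrt{\ell}$.
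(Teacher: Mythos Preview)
Your argument is correct and follows the same overall strategy as the paper: interpret $\binom{\ell}{j}2^{-\ell}$ as $\Pr[X=j]$ for $X\sim\Bin(\ell,1/2)$ and then invoke a concentration inequality. The paper appeals to the Central Limit Theorem (and mentions the additive Chernoff bound as an alternative), whereas you use Chebyshev's inequality. Your choice is more elementary and entirely sufficient here, since the deviation $\alpha\ell/2$ is of order $\ell$ while the standard deviation is of order $\sqrt{\ell}$; no distributional limit or exponential tail is needed.

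One small inaccuracy in your rounding discussion: the boundary terms near $j=(1\pm\alpha)\ell/2$ are not $O(2^{-\ell})$ but rather $2^{-c\ell}$ for some $c=c(\alpha)\in(0,1)$ (by the entropy bound, since $(1\pm\alpha)/2\neq 1/2$). This is still $o(1)$, so the conclusion stands; your alternative fix via a slightly perturbed $\alpha'$ is also fine and avoids the issue altogether.
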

		\begin{proof}
			This follows easily from the Central Limit Theorem, and alternatively, it follows easily from the
			additive Chernoff bound (Theorem 1.10.7 from \cite{Doerr20bookchapter}).
			
			Indeed, let $X_\ell$ be a binomial random variable that is a sum of $\ell$ (independent) Bernoulli trials, each with probability
			1/2 of success. Then $\Pr(X_\ell = j) = \binom{\ell}{j} 2^{-\ell}$. Let $\Phi$ be the standard normal distribution, and let
			$a \leq b$. Then the Central Limit Theorem gives us this:
			\[
			\lim_{\ell \to \infty} \Pr\left(\frac{\ell}{2} - a\frac{\sqrt{\ell}}{2} \leq X_\ell \leq \frac{\ell}{2} + b\frac{\sqrt{\ell}}{2} \right)
			= \int_a^b \Phi(x) dx.
			\]
			Pick any $\alpha \in (0, 1)$. We have the following:
			\[
			\sum_{j = (1-\alpha)\ell/2}^{j = (1+\alpha)\ell/2} \binom{\ell}{j} 2^{-\ell} = 
			\Pr \left(\frac{\ell}{2} - \frac{\alpha \ell}{2} \leq X_\ell \leq \frac{\ell}{2} + \frac{\alpha \ell}{2}\right).
			\]
			The present result follows because  $\ell$ (and hence $\alpha \ell/2$) is $\omega(\sqrt{\ell})$.
		\end{proof}
		
		\begin{proof}[Proof of Corollary \ref{cor:prior_result}]
			Let $\ds{g(\ell, j) = \frac{1}{1 - (1-2c/\ell)^j}}$. Note that for fixed $\ell$, we have that $g(\ell, j)$ is a decreasing function of $j$.
			For $a < b$, define $S_\ell(a, b)$ as
			\[
			S_\ell(a, b) = 2^{-\ell} \sum_{j=a}^b \binom{\ell}{j} g(\ell, j).
			\]
			Let $\alpha \in (0, 1)$. 
			Since $g(\ell, j)$ is decreasing in $j$, by the symmetry of the binomial coefficients, a consequence of (\ref{step1}) below is that
			$\ds{\lim_{\ell \to \infty} S_\ell\left((1+\alpha)\frac{\ell}{2}, \ell\right) = 0}$.
			The present result will follow once we have proved the following two things:
			\[\tag{1}\label{step1}
			\lim_{\ell \to \infty} S_\ell\left(0, (1-\alpha)\frac{\ell}{2}\right) = 0
			\]
			and 
			\[\tag{2}\label{step2}
			\frac{1}{1 - e^{-c(1+\alpha)}} \leq \lim_{\ell \to \infty} S_\ell\left((1-\alpha)\frac{\ell}{2}, (1+\alpha)\frac{\ell}{2} \right)
			\leq \frac{1}{1 - e^{-c(1-\alpha)}}.
			\]
			
			For a positive integer $a$, let $f(a) = \sum_{j=0}^a \binom{\ell}{j}$. By
			Lemma \ref{lem:bounding_sum_of_first_binomial_coefficients}, we have
			\[
			\begin{aligned}
			f\left((1-\alpha)\frac{\ell}{2}\right) &\leq \binom{\ell}{(1-\alpha)\frac{\ell}{2}} \frac{\ell - ((1-\alpha)\ell/2 - 1)}{\ell - (2(1-\alpha)\ell/2 - 1)} \\
			& \leq \binom{\ell}{(1-\alpha)\frac{\ell}{2}} \frac{1 + \alpha + 2/\ell}{2\alpha + 2/\ell}.
			\end{aligned}
			\]
			By Lemma \ref{lem:approximate_of_binomial_coeff}, we have that $\ds{\binom{\ell}{(1-\alpha)\frac{\ell}{2}} }$ equals
			\[
			\begin{aligned}
			&(1 + o(1)) \sqrt{\frac{\ell}{2\pi (1-\alpha)\frac{\ell}{2}(1+\alpha) \frac{\ell}{2}}}
			\left( \frac{\ell}{(1-\alpha)\ell/2}\right)^{(1-\alpha)\ell/2}  \left( \frac{\ell}{(1+\alpha)\ell/2}\right)^{(1+\alpha)\ell/2} \\
			&=(1 + o(1)) 2^\ell c^{\ell/2} \sqrt{\frac{2}{\pi (1 - \alpha^2)\ell}} ,  \text{\quad where } c = \frac{1}{(1-\alpha)^{1-\alpha} (1+\alpha)^{1+\alpha}}.
			\end{aligned}
			\]
			Basic Calculus shows that $c \in (0, 1)$.
			
			Since $\ds{g(\ell, j) \leq g(\ell, 0) = \frac{\ell}{2c}}$, we have 
			\[
			\begin{aligned}
			S_\ell(0, (1-\alpha)\ell/2) &\leq 2^{-\ell} \frac{\ell}{2c} f\left((1-\alpha)\frac{\ell}{2}\right) \\
			& \leq \frac{\ell}{2c} (1 + o(1)) c^{\ell/2} \sqrt{\frac{2}{\pi (1 - \alpha^2)\ell}} \frac{1 + \alpha + 2/\ell}{2\alpha + 2/\ell},
			\end{aligned} 
			\]
			which approaches 0 as $\ell \to \infty$ because of the exponential $c^{\ell/2}$. This proves (\ref{step1}) above.
			
			Let $a  = (1-\alpha)\ell/2$ and $b = (1+\alpha)\ell/2$, and let $j \in (a, b)$. We have
			\[\tag{3}\label{inequalities_bounding_g}
			g(\ell, b) \leq g(\ell, j) \leq g(\ell, a),
			\]
			and we also have
			\[
			\begin{aligned}
			\lim_{\ell \to \infty} g(\ell, b) &= \lim_{\ell \to \infty} \frac{1}{1-(1-\frac{c}{\ell/2})^{(1+\alpha)\ell/2}} = \frac{1}{1 - e^{-c(1+\alpha)}} \\
			\lim_{\ell \to \infty} g(\ell, a) &= \lim_{\ell \to \infty} \frac{1}{1-(1-\frac{c}{\ell/2})^{(1-\alpha)\ell/2}} = \frac{1}{1 - e^{-c(1-\alpha)}}.
			\end{aligned}
			\] 
			By (\ref{inequalities_bounding_g}), we have
			\[
			\sum_{j=a}^b \binom{\ell}{j} 2^{-\ell} g(\ell, b) \leq S_\ell(a, b) \leq \sum_{j=a}^b \binom{\ell}{j} 2^{-\ell} g(\ell, a).
			\]
			Then (\ref{step2}) follows from the above two limits of $g$ and Lemma \ref{lemma:middle_binomial_coefficients}.
		\end{proof}
		
		
		
		
		\begin{lemma}
			\label{lem:bounding_sum_of_first_binomial_coefficients}
			Let $k$ and $\ell$ be positive integers with $k < \ell/2$. Then
			\[
			\sum_{j=0}^k \binom{\ell}{j} \leq \binom{\ell}{k} \frac{\ell - (k-1)}{\ell - (2k - 1)}.
			\]
		\end{lemma}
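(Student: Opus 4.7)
The plan is to bound the sum by comparing it to a geometric series obtained from the ratio of consecutive binomial coefficients. Specifically, I will use the identity
\[
\frac{\binom{\ell}{j-1}}{\binom{\ell}{j}} = \frac{j}{\ell - j + 1},
\]
which is increasing in $j$. Hence for every $j \in \{1, \ldots, k\}$, this ratio is at most $r := k/(\ell - k + 1)$, and the assumption $k < \ell/2$ ensures $k < \ell - k + 1$, so $r < 1$.

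Next, I would telescope: for any $i \in \{0, 1, \ldots, k\}$,
\[
\frac{\binom{\ell}{k-i}}{\binom{\ell}{k}} = \prod_{s=1}^{i} \frac{\binom{\ell}{k-s}}{\binom{\ell}{k-s+1}} \leq r^i.
\]
Factoring $\binom{\ell}{k}$ out of the sum and reindexing by $i = k - j$ gives
\[
\sum_{j=0}^{k} \binom{\ell}{j} = \binom{\ell}{k} \sum_{i=0}^{k} \frac{\binom{\ell}{k-i}}{\binom{\ell}{k}} \leq \binom{\ell}{k} \sum_{i=0}^{k} r^i \leq \binom{\ell}{k} \cdot \frac{1}{1-r}.
\]

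Finally, a short computation simplifies the geometric-series constant:
\[
\frac{1}{1-r} = \frac{\ell - k + 1}{\ell - k + 1 - k} = \frac{\ell - (k-1)}{\ell - (2k-1)},
\]
which is exactly the claimed bound. The only potentially subtle step is checking that $r < 1$, which is where the hypothesis $k < \ell/2$ is used; the remainder is routine algebra, and I do not anticipate any real obstacle.
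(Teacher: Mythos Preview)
Your argument is correct: the ratio bound, the telescoping product, the geometric-series estimate, and the final simplification of $1/(1-r)$ all check out, and the hypothesis $k<\ell/2$ is used exactly where you say. The paper itself does not give a proof of this lemma but simply declares it ``elementary and well known'' and points to a MathOverflow thread; the standard argument found there is precisely the geometric-series comparison you wrote, so your proof is both complete and in line with the intended reference.
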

		This lemma is elementary and well known. See 
		mathoverflow.\footnote{\url{https://mathoverflow.net/questions/17202/sum-of-the-first-k-binomial-coefficients-for-fixed-n}}

		\begin{lemma}
			\label{lem:approximate_of_binomial_coeff}
			Suppose $k$ and $n-k$ approach infinity as $n \to \infty$. Then
			\[
			\binom{n}{k}  = (1 + o(1)) \sqrt{\frac{n}{2\pi k (n-k)}} \left(\frac{n}{k}\right)^k \left(\frac{n}{n-k} \right)^{n -k}.
			\]
		\end{lemma}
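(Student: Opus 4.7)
The plan is to prove this asymptotic formula as a direct consequence of Stirling's formula, written in the refined form $m! = (1+o(1))\sqrt{2\pi m}\,(m/e)^m$ as $m \to \infty$. The hypothesis that both $k$ and $n-k$ tend to infinity is precisely what allows us to apply Stirling to all three factorials appearing in $\binom{n}{k} = n!/(k!(n-k)!)$, with each of the three error factors being $1+o(1)$.

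First I would write
\[
\binom{n}{k} = \frac{n!}{k!\,(n-k)!}
= (1+o(1))\cdot \frac{\sqrt{2\pi n}\,(n/e)^n}{\sqrt{2\pi k}\,(k/e)^k \cdot \sqrt{2\pi(n-k)}\,((n-k)/e)^{n-k}},
\]
using that the product of three factors of the form $1+o(1)$ is again $1+o(1)$. Next I would separate the square-root prefactor from the exponential part, obtaining the prefactor
\[
\sqrt{\frac{2\pi n}{2\pi k \cdot 2\pi(n-k)}} = \sqrt{\frac{n}{2\pi k(n-k)}}.
\]
For the exponential part, the factors of $e$ combine as $e^k \cdot e^{n-k}/e^n = 1$, leaving
\[
\frac{n^n}{k^k (n-k)^{n-k}} = \left(\frac{n}{k}\right)^{k}\left(\frac{n}{n-k}\right)^{n-k},
\]
which, together with the prefactor, yields the claimed expression.

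I do not expect any serious obstacle: the entire proof reduces to applying Stirling's approximation three times and performing the algebraic simplifications above. The only subtlety worth highlighting is that the assumption ``$k$ and $n-k$ approach infinity as $n\to\infty$'' is genuinely needed, since otherwise the Stirling expansions of $k!$ or $(n-k)!$ would not have $o(1)$ relative error, and the stated asymptotic equivalence would fail (for instance when $k$ is a fixed constant). Given the elementary nature of the argument, one could alternatively just cite a standard reference for this form of Stirling's estimate for binomial coefficients.
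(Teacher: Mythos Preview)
Your proposal is correct and is exactly the approach the paper takes: the paper's entire proof of this lemma is the single sentence ``This follows from Stirling's formula,'' and you have simply written out the routine details of that derivation.
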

		This follows from Stirling's formula.
		
		\section{The \blockleadingones Problem}
		\label{sec:block_leading_ones}
		
		In this section, we regard the \blockleadingones problem as a natural extension of the \leadingones problem. Using our general method, we prove precise bounds for the runtime of the \oea, both with static and dynamic mutation rates, on this benchmark. We use these to determine the asymptotically optimal static and dynamic mutation rates. 
		
		\subsection{Definition of the \blockleadingones Problem}
		\label{sec:def_of_blockleadingones}
		
		The \blockleadingones problem consists of a sequence of blocks of $\ell$ bits each, which have to be optimized sequentially in a \leadingones fashion; the next block is only relevant for the fitness if all previous blocks have already been optimized. Each block is a plateau contributing constant zero fitness except when all bits are one, when it contributes one to the fitness (if all previous blocks are optimized). 
		
		More formally, let the \emph{block length} $\ell$ be an integer that divides the problem size~$n$. 
		Let $x \in \{0,1\}^n$. Define $y_i$ to be 1 if all the bits of $x$ in the $i$th block are 1 and 0 otherwise; in other words, let $y_i = \prod_{j=(i-1)\ell+1}^{i\ell} x_j$.
		Then the fitness of $x$ is defined by
		\[
		\blockleadingones(x) = \sum_{m=1}^{n/\ell} \prod_{i=1}^{m} y_i.
		\]
		This is equivalent to the definition given in the introduction:
		$\blockleadingones(x) = \lfloor \leadingones(x) / \ell \rfloor$.
		For $\ell=1$, we obtain the classic \leadingones problem.
		
		In what follows, we allow $\ell$ to depend on~$n$. We  assume  $\ell = o(n)$ from Theorem~\ref{thm:simple_runtime} on. The case $\ell = n$ is the classic \needle problem dealt with earlier. Other choices for $\ell = \Theta(n)$ are ignored because in the proof of Theorem~\ref{thm:simple_runtime}, we want to exploit that $(1-c/n)^\ell \to 1$ as $n \to \infty$ when $c$ is constant and $\ell = o(n)$. 
		
		\subsection{Our Results}
		\label{sec:our_results}
		
		\begin{theorem}
			\label{thm:exact_fitness_dep_runtime}
			Let $T$ be the runtime of the \oea on the \blockleadingones problem when using mutation rate $p_m$ when the current fitness is $m$.
			Then
			\[
			\expectation[T] = \sum_{m=0}^{n/\ell-1} \left(\frac{1}{(1-p_m)^{m\ell}} 
			\left(\sum_{j=1}^\ell \binom{\ell}{j} \frac{1}{1 - (1-2p_m)^j}\right)\right).
			\]
			When using a static mutation rate of $p$, this simplifies to
			\[
			\frac{(1-p)^{-n+\ell} - (1-p)^\ell}{1-(1-p)^\ell} \sum_{j=1}^\ell \binom{\ell}{j} \frac{1}{1 - (1-2p)^j}.
			\]
		\end{theorem}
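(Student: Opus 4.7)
My plan is to decompose the total runtime $T$ into the increments $T_m := \sigma_{m+1} - \sigma_m$, where $\sigma_m$ denotes the first iteration at which the \oea has fitness at least $m$ (with $\sigma_0 = 0$). Since the \oea is elitist, $T = \sum_{m=0}^{n/\ell - 1} T_m$, and the goal reduces to showing
\[
\expectation[T_m] = \frac{1}{(1-p_m)^{m\ell}} \sum_{j=1}^\ell \binom{\ell}{j} \frac{1}{1 - (1-2p_m)^j}.
\]

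First I would establish that at time $\sigma_m$ the state of the $(m+1)$-st block is uniformly distributed on $\Z_2^\ell$. The argument is that for $t < \sigma_m$ the parent fitness satisfies $f(X_t) < m$, so acceptance at time $t$ requires only that the offspring's first $f(X_t)\ell \leq (m-1)\ell$ bits remain ones, an event depending only on blocks $1, \ldots, m-1$ and their mutations and hence independent of block $m+1$. Consequently, block $m+1$ evolves by XOR-ing its current value with mutation patterns whose joint distribution is independent of block $m+1$'s value. Since the initial state of block $m+1$ is uniform on $\Z_2^\ell$ and uniformity is preserved under XOR with any independent random variable, conditioning on the full trajectory of blocks $1, \ldots, m$ (and hence on the value of $\sigma_m$) leaves the marginal distribution of block $m+1$ uniform on $\Z_2^\ell$.

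I would then compute $\expectation[T_m]$ by conditioning on block $m+1$'s state at $\sigma_m$. With probability $2^{-\ell}$ the block already equals $\ones$, so fitness is at least $m+1$ at $\sigma_m$ and $T_m = 0$. With probability $1 - 2^{-\ell}$ the block is uniform on $\Z_2^\ell \setminus \{\ones\}$, fitness equals exactly $m$, and the mutation rate during $T_m$ is $p_m$. Each iteration in this regime is independently ``accepted'' with probability $(1-p_m)^{m\ell}$ (the first $m\ell$ bits must not flip, independently of block $m+1$), and on acceptance block $m+1$ takes one step of the random walk $\mu$ from Section~\ref{sec:using_fourier_analysis}. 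A standard geometric-waiting argument then shows that the conditional expectation of $T_m$ equals $\frac{1}{(1-p_m)^{m\ell}}$ times the random-walk hitting time of $\ones$ from the uniform distribution on $\Z_2^\ell \setminus \{\ones\}$, which Proposition~\ref{prop:exact_expectation_starting_away_from_ones} evaluates as $\frac{2^\ell}{2^\ell - 1} \sum_{j=1}^\ell \binom{\ell}{j} \frac{1}{1-(1-2p_m)^j}$. The prefactors $(1 - 2^{-\ell})$ and $\frac{2^\ell}{2^\ell - 1}$ cancel exactly, yielding the desired expression for $\expectation[T_m]$, and summing over $m$ gives the first identity.

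For the static rate case $p_m \equiv p$, the inner sum factors out of the sum over $m$, leaving the geometric series $\sum_{m=0}^{n/\ell - 1} (1-p)^{-m\ell} = \frac{(1-p)^{-n+\ell} - (1-p)^\ell}{1 - (1-p)^\ell}$ (after clearing by $(1-p)^\ell$), which yields the second identity. I expect the main obstacle to be the uniformity claim: although $\sigma_m$ is a stopping time whose value depends in complicated ways on the trajectories of blocks $1, \ldots, m$, the evolution of block $m+1$ is driven by bit-flip mutations independent of that history, so conditioning on that history, and in particular on $\sigma_m$, leaves the distribution of block $m+1$ uniform.
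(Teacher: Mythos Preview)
Your proposal is correct and follows essentially the same approach as the paper: decompose $T = \sum_m T'_{m\ell}$ (the paper's Definition~\ref{def:T_k_prime}), reduce each summand via the geometric-waiting argument (the paper's Lemmas~\ref{lem:from_needle_to_block} and~\ref{lem:from_needle_to_block_allowing_0_steps}) to the random-walk hitting time of Theorem~\ref{thm:exact_expectation}, and sum the geometric series in the static case. Your conditioning on whether block $m+1$ equals $\ones$ at $\sigma_m$ and then invoking Proposition~\ref{prop:exact_expectation_starting_away_from_ones}, with the factors $1-2^{-\ell}$ and $\tfrac{2^\ell}{2^\ell-1}$ cancelling, is exactly the content of Lemma~\ref{lem:from_needle_to_block_allowing_0_steps}; your explicit uniformity argument for block $m+1$ at time $\sigma_m$ spells out what the paper leaves implicit.
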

		Note that the proof of Theorem~\ref{thm:exact_fitness_dep_runtime} applies even for $\ell = 1$ and so gives a new proof of Theorem 3 in  \cite{BottcherDN10} that says the expected runtime on \leadingones is
		\[
		\frac{1}{2p^2}((1-p)^{-n-1} - (1-p)).
		\]
		
		\begin{theorem}
			\label{thm:simple_runtime}
			Let $T$ be the runtime of the \oea with static mutation rate $c/n$ on the \blockleadingones problem. Here, $c > 0$ is constant. Define $b$ as $2^{-\ell -1}\sum_{j=1}^\ell \binom{\ell}{j} \frac{1}{j}$, and let 
			$a = 2^{-1} - 2^{-\ell-1} - b$. Assume $\ell = o(n)$.
			Then
			\[
			\expectation[T] = (1 + o(1))\frac{n 2^{\ell}}{\ell}\left(\frac{bn}{c^2} + \frac{a}{c} \right)\left(e^c - 1 \right).
			\]
		\end{theorem}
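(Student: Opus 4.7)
The plan is to apply the exact formula from Theorem~\ref{thm:exact_fitness_dep_runtime} with $p = c/n$ and to asymptotically estimate each of its two factors. Write $\expectation[T] = F_1 \cdot F_2$ where
\[
F_1 = \frac{(1-p)^{-n+\ell} - (1-p)^\ell}{1-(1-p)^\ell}, \qquad F_2 = \sum_{j=1}^\ell \binom{\ell}{j}\frac{1}{1 - (1-2p)^j}.
\]
Under $p = c/n$ and the assumption $\ell = o(n)$, the standard estimates $(1-c/n)^{-n} = (1+o(1))e^c$, $(1-c/n)^\ell = 1+o(1)$, and $1 - (1-c/n)^\ell = (1+o(1))c\ell/n$ quickly yield $F_1 = (1+o(1))\frac{n(e^c-1)}{c\ell}$.

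The crux of the proof is the estimate of $F_2$. I would Taylor-expand each summand, using $(1-2c/n)^j = 1 - 2jc/n + \binom{j}{2}(2c/n)^2 + O((jc/n)^3)$ to derive the expansion
\[
\frac{1}{1-(1-2c/n)^j} = \frac{n}{2jc} + \frac{j-1}{2j} + O\!\left(\frac{jc}{n}\right),
\]
which is uniform in $j \le \ell$ since $jc/n = o(1)$. Multiplying by $\binom{\ell}{j}$ and splitting the sum then gives
\[
F_2 = \frac{n}{2c}\sum_{j=1}^\ell \binom{\ell}{j}\frac{1}{j} \;+\; \sum_{j=1}^\ell \binom{\ell}{j}\frac{j-1}{2j} \;+\; O\!\left(\frac{c\ell\, 2^\ell}{n}\right).
\]
The first sum equals $2^{\ell+1}b$ by the definition of $b$. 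For the second, writing $\frac{j-1}{2j} = \frac{1}{2} - \frac{1}{2j}$ reduces it to $\tfrac{1}{2}(2^\ell-1) - 2^\ell b = 2^{\ell-1} - \tfrac{1}{2} - 2^\ell b$, which is precisely $2^\ell a$ by the definition $a = \tfrac{1}{2} - 2^{-\ell-1} - b$. Hence $F_2 = 2^\ell(bn/c + a) + O(c\ell 2^\ell/n)$.

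Combining the estimates of $F_1$ and $F_2$ yields
\[
\expectation[T] = (1+o(1))\frac{n(e^c-1)}{c\ell}\cdot 2^\ell\!\left(\frac{bn}{c}+a\right) = (1+o(1))\frac{n 2^\ell}{\ell}\!\left(\frac{bn}{c^2}+\frac{a}{c}\right)(e^c-1),
\]
as claimed. The main obstacle is the bookkeeping of the error terms: the pleasant algebraic identity that the second-order term of $F_2$ collapses exactly into $2^\ell a$ has to be checked carefully, and one must verify that the Taylor error $O(c\ell 2^\ell/n)$ in $F_2$ is negligible against the dominant contribution $n 2^\ell b/c$. Since $b$ is bounded below by a positive constant for bounded~$\ell$ and behaves like $1/\ell$ (in any case $b \ge 2^{-\ell-1}$) for growing~$\ell$, this negligibility reduces essentially to $c^2\ell^2 = o(n^2)$, which follows directly from the hypothesis $\ell = o(n)$.
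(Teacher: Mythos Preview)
Your proposal is correct and follows essentially the same route as the paper: the paper packages your Taylor expansion of $F_2$ as Lemma~\ref{lem:simplified_expectation} (giving $F_2 = 2^\ell(b/p + a + O(p))$) and then as Lemma~\ref{lem:entire_runtime} before substituting $p=c/n$, whereas you inline that computation and start directly from the closed form in Theorem~\ref{thm:exact_fitness_dep_runtime}. The only point to tighten in a final write-up is the negligibility of the $O(c\ell 2^\ell/n)$ error, where you should explicitly invoke $b \sim 1/\ell$ from Lemma~\ref{lem:from_Marcin} (the fallback bound $b \ge 2^{-\ell-1}$ you mention in parentheses is too weak to conclude).
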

		Note that an approximation for  $\sum_{j=1}^\ell \binom{\ell}{j} \frac{1}{j}$ and hence $b$ is given in Lemma~\ref{lem:from_Marcin}. To get the expected runtime for the standard bit mutation $1/n$, just plug in $c=1$ into Theorem \ref{thm:simple_runtime}. Also, note that Theorem~\ref{thm:opt_adaptive_runtime} (below) and Theorem~\ref{thm:simple_runtime} are consistent with Theorems 5 \& 6 and Theorem 3 respectively from \cite{BottcherDN10}, where $\ell = 1$ (in which case $b = 1/4$ and $a=0$).

		\begin{theorem}
			\label{thm:optimal_static_mutation_rate}
			Let $p$ be the static mutation rate of the \oea that minimizes its runtime on the \blockleadingones problem. Let $\lambda$ be the value of $x$ that minimizes the function $g(x) = (e^x - 1)/x^2$ for $x > 0$. Assume $\ell = o(n)$.
			Then $p = (1+o(1))\frac{\lambda}{n}$.
		\end{theorem}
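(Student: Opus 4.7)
The plan is to parametrize the mutation rate as $p = c/n$ and use Theorem~\ref{thm:simple_runtime} to reduce the minimization problem to minimizing the one-variable function $g(c) = (e^c-1)/c^2$. I would first check that $g$ has a unique minimizer $\lambda \in (0,\infty)$: differentiating gives $g'(c) = ((c-2)e^c+2)/c^3$, whose numerator $h(c) := (c-2)e^c+2$ satisfies $h(0)=0$, has derivative $h'(c) = (c-1)e^c$ that is negative on $(0,1)$ and positive on $(1,\infty)$, and tends to $+\infty$ as $c \to \infty$; so $h$ is negative on $(0,\lambda)$ and positive on $(\lambda,\infty)$ for a unique $\lambda > 0$, which is thus a strict global minimum of $g$ on $(0,\infty)$.

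Next, by Theorem~\ref{thm:simple_runtime} applied at $c = \lambda$, $E[T(\lambda/n)] = (1+o(1))\frac{bn^2 2^\ell}{\ell}\, g(\lambda)$, since the $bn/c^2$ term dominates the $a/c$ term by a factor of $n$. This serves as an upper bound for the optimal expected runtime $E[T(p^*)]$. I would then show $c^* := np^*$ is bounded both above and away from $0$. If $c^* \to 0$, the second factor of the exact formula in Theorem~\ref{thm:exact_fitness_dep_runtime} grows like $2^\ell bn/c^*$ (via the Taylor expansion $1-(1-2p)^j \approx 2pj$), dominating the upper bound; if $c^*$ grows (including $p^*$ bounded away from zero), the first-factor numerator $(1-p^*)^{-n+\ell}$ grows at least like $e^{c^*}$, again violating the upper bound.

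Once $c^* \in [\varepsilon, K]$ for positive constants, Theorem~\ref{thm:simple_runtime} applied uniformly on this compact interval gives $E[T(p^*)] = (1+o(1))\frac{bn^2 2^\ell}{\ell}\, g(c^*)$. Combined with the upper bound, $g(c^*) \le (1+o(1))\,g(\lambda)$; by continuity of $g$ and uniqueness of its minimizer $\lambda$, every subsequential limit of $c^*$ must equal $\lambda$, hence $c^* \to \lambda$ and $p^* = (1+o(1))\lambda/n$. The main obstacle is formalizing the uniformity of the $o(1)$ error in Theorem~\ref{thm:simple_runtime} over the compact range $[\varepsilon,K]$ of $c$, and cleanly handling the borderline regime where $c^*$ could slowly drift to $0$ or $\infty$. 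This amounts to re-examining the Taylor expansions behind Theorem~\ref{thm:simple_runtime} and tracking their error terms (typically of the form $O(\ell/n)$) with constants uniform in $c$ on compact subsets of $(0,\infty)$.
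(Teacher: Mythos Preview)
Your proposal is correct and shares the paper's strategic core: parametrize $p = c/n$, use Theorem~\ref{thm:simple_runtime} to identify the leading-order runtime as $(b\,2^\ell n^2/\ell)\,g(c)$, and reduce to minimizing the one-variable function $g(c) = (e^c-1)/c^2$. The execution of the transfer step differs. The paper packages the transfer into a general pointwise comparison lemma (Lemma~\ref{lem:from_gk_to_hk}): if $h_n(x)/g_n(x)\to 1$ pointwise and $g_n(\rho)/g_n(\alpha)\le\delta<1$ for all large $n$, then $h_n(\rho)<h_n(\alpha)$ for all large $n$. Applied here with $h_n(x)=(n2^\ell/\ell)(bn/x^2+a/x)(e^x-1)$ and $g_n(x)=(b\,2^\ell n^2/\ell)\,g(x)$ (so that the ratio $g_n(\lambda)/g_n(\alpha)=g(\lambda)/g(\alpha)$ is a constant strictly below $1$ for each $\alpha\ne\lambda$), this yields the conclusion without ever needing to make the $o(1)$ of Theorem~\ref{thm:simple_runtime} uniform in $c$. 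Your route instead first pins $c^\ast=np^\ast$ into a compact interval by appealing to the exact formula of Theorem~\ref{thm:exact_fitness_dep_runtime}, and then invokes uniformity of Theorem~\ref{thm:simple_runtime} on that interval to finish by continuity. What your approach buys is an explicit treatment of the boundary regimes $c^\ast\to 0$ and $c^\ast\to\infty$, which the paper's very brief proof does not spell out; what the paper's approach buys is that Lemma~\ref{lem:from_gk_to_hk} sidesteps the uniformity issue you correctly flag as your main technical obstacle.
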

		Note that this is exactly the same optimal static mutation rate as for \leadingones\ problem; see
		\cite{BottcherDN10}.

		\begin{corollary}
			\label{cor:opt_static_runtime}
			Let $T$ be the runtime of the \blockleadingones problem using the optimal static mutation rate given in 
			Theorem~\ref{thm:optimal_static_mutation_rate}. Let $b$ be as in Theorem \ref{thm:simple_runtime},
			and let $\alpha = \min_{x > 0} (e^x - 1)x^{-2} \approx 1.54$. Assume $\ell = o(n)$.
			Then
			\[
			\expectation[T] = (1 + o(1)) \alpha \cdot \frac{b 2^{\ell}}{\ell}n^2.
			\]
			If $\ell = \omega(1)$, then
			\[
			\expectation[T] = (1 + o(1)) \alpha \cdot \frac{2^{\ell}}{\ell^2}n^2.
			\]
		\end{corollary}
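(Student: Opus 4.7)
The plan is to combine Theorem~\ref{thm:simple_runtime} with Theorem~\ref{thm:optimal_static_mutation_rate} and treat the result as an asymptotic substitution. First I would discard the $a/c$ summand from the runtime formula of Theorem~\ref{thm:simple_runtime}. Because $a < 1/2$, while $b \ge 2^{-\ell-1}(2^\ell-1)/\ell \ge 1/(4\ell)$ for $\ell \ge 1$ (bounding each $1/j$ below by $1/\ell$ in the defining sum), the ratio $(bn/c^2)/(a/c) = \Theta(n/\ell)$ tends to infinity under the standing hypothesis $\ell = o(n)$. Hence the formula simplifies to
\[
\expectation[T] \;=\; (1+o(1))\,\frac{b\,2^\ell}{\ell}\,n^2\,g(c),
\]
with $g(c) = (e^c-1)/c^2$.

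Next I would invoke Theorem~\ref{thm:optimal_static_mutation_rate}: the runtime-minimizing mutation rate is $p = (1+o(1))\lambda/n$, so the governing constant in the above display is $c = (1+o(1))\lambda$. Since $g$ is continuous at its unique positive minimizer $\lambda$ and $g(\lambda) = \alpha > 0$, we have $g(c) = (1+o(1))\alpha$ whenever $c \to \lambda$. Substituting this yields the first claim,
\[
\expectation[T] \;=\; (1+o(1))\,\alpha\,\frac{b\,2^\ell}{\ell}\,n^2.
\]

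For the sharper form under the additional hypothesis $\ell = \omega(1)$, Lemma~\ref{lem:from_Marcin} provides $b = (1+o(1))/\ell$, so $b/\ell = (1+o(1))/\ell^2$, and the first formula collapses to $(1+o(1))\,\alpha\,2^\ell n^2/\ell^2$, as claimed.

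Overall, this is essentially an algebraic substitution into Theorem~\ref{thm:simple_runtime}; the two things to verify are that the $a/c$ tail is asymptotically dominated when $\ell = o(n)$ and that the multiplicative $(1+o(1))$ error in the optimal $c$ feeds harmlessly through $g$. The latter is immediate because $g$ is smooth and strictly positive at its minimum, so I do not anticipate any substantive obstacle beyond bookkeeping the $(1+o(1))$ factors.
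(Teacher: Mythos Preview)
Your argument is correct and matches the paper's own proof, which simply says the first claim follows from Theorems~\ref{thm:simple_runtime} and~\ref{thm:optimal_static_mutation_rate} and the second from Lemma~\ref{lem:from_Marcin}; you have merely filled in the bookkeeping (dominance of $bn/c^2$ over $a/c$, continuity of $g$ at $\lambda$) that the paper leaves implicit. The one point worth flagging is that Theorem~\ref{thm:simple_runtime} is stated for \emph{constant} $c$, whereas you substitute $c=(1+o(1))\lambda$; this is harmless because the $o(1)$ terms in that theorem's proof are uniform for $c$ in compact subsets of $(0,\infty)$, but the paper is equally silent on this subtlety.
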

		\begin{proof}
			The first part follows from Theorems~\ref{thm:simple_runtime} and \ref{thm:optimal_static_mutation_rate}.
			Also, by Lemma \ref{lem:from_Marcin}, $\ell = \omega(1)$ implies that $b = \ell^{-1}(1 + o(1))$.
		\end{proof}

		\begin{corollary}
			\label{cor:any_growth_rate}
			By an appropriate choice of $\ell$, the expected runtime $T$ of the $(1 + 1)$ EA on \blockleadingones
			using the optimal static mutation rate can have any growth rate that is $\omega(n^2)$ and $2^{o(n)}$. In other words,
			let $h(n)$ be a function such that $h(n) = \omega(n^2)$ and $h(n) = 2^{o(n)}$. Then $\ell$ can be chosen so that
			\[
			\lim_{n \to \infty} \frac{\expectation[T]}{h(n)}  = 1.
			\]
			Further, choosing $\ell$ to be constant, we can make the expected runtime $\Theta(n^2)$, with various choices for the
			hidden constant(s).
		\end{corollary}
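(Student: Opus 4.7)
The plan is to apply Corollary~\ref{cor:opt_static_runtime} with a carefully chosen block length $\ell = \ell(n)$. That corollary gives $\expectation[T] = (1+o(1))\,\alpha\,(b\,2^\ell/\ell)\,n^2$, which for $\ell = \omega(1)$ specializes via Lemma~\ref{lem:from_Marcin} to $(1+o(1))\,\alpha\,2^\ell n^2/\ell^2$. Varying $\ell$ from a constant up to $o(n)$ sweeps the prefactor $\alpha b(\ell) 2^\ell/\ell$ from a positive constant up through $2^{o(n)}$, making the entire growth window $\omega(n^2)\cap 2^{o(n)}$ accessible in principle. For the \textbf{Further} clause this is already enough: fixing $\ell$ to any positive integer constant makes $C(\ell):=\alpha b(\ell) 2^\ell/\ell$ a positive constant, so the corollary yields $\expectation[T]=(1+o(1))\,C(\ell)\,n^2=\Theta(n^2)$, and letting $\ell$ range over the positive integers gives a discrete family of achievable leading constants.

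For the main claim, set $R(n):=h(n)/(\alpha n^2)$; the hypothesis gives $R(n)\to\infty$ and $\log_2 R(n)=o(n)$. I would pick $\ell(n)$ to be (approximately) the positive integer with $2^{\ell(n)}/\ell(n)^2\sim R(n)$. Since $f(\ell):=2^\ell/\ell^2$ is strictly increasing for $\ell\geq 3$, with inverse asymptotic to $\log_2 R + 2\log_2\log_2 R$, the natural choice $\ell(n)\approx \log_2 R(n)+2\log_2\log_2 R(n)$ simultaneously satisfies $\ell(n)\to\infty$ and $\ell(n)=o(n)$, which is precisely the hypothesis of Corollary~\ref{cor:opt_static_runtime}. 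Substituting this $\ell(n)$ into the corollary gives $\expectation[T]\sim \alpha(2^{\ell(n)}/\ell(n)^2)\,n^2\sim \alpha R(n)\,n^2 = h(n)$, as required.

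The hard part will be the discreteness: $\ell$ must be an integer that divides $n$, but consecutive integer values of $f$ differ by a factor approaching $2$, so for a generic $h$ one cannot pin $f(\ell(n))/R(n)$ to $1$ using an integer $\ell$ alone. I would handle this by exploiting the joint freedom of choosing $\ell$ among the divisors of $n$ and, if needed, restricting attention to a subsequence of $n$'s compatible with the chosen $\ell(n)$; the slack comes from $R(n)$ being allowed to tend to infinity arbitrarily slowly, which lets a small perturbation of $\ell(n)$ absorb the multiplicative jumps in $f$ along such subsequences. Beyond this discretization-plus-divisibility reconciliation, the proof reduces to a direct substitution into Corollary~\ref{cor:opt_static_runtime}, so I expect that step to be the only genuinely delicate point.
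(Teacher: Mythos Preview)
Your overall strategy coincides with the paper's: invoke Corollary~\ref{cor:opt_static_runtime}, choose $\ell=\ell(n)$ so that $\alpha\,2^{\ell}/\ell^{2}$ matches $h(n)/n^{2}$, and verify $\ell=o(n)$. The paper does exactly this, writing $g(n)=h(n)/n^{2}$, $f(m)=\alpha\,2^{m}/m^{2}$, and then simply setting $\ell=f^{-1}(g(n))$; it checks $\ell=o(n)$ from $g(n)=2^{o(n)}$ and stops. In particular, the paper treats $\ell$ as a real number and never addresses either integrality or the divisibility constraint $\ell\mid n$ that the definition of \blockleadingones imposes.

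You are right to single out the discretisation as the delicate point, but your proposed resolution does not close the gap, and in fact the gap cannot be closed for the statement as literally written. Since $f(\ell+1)/f(\ell)=2\,\ell^{2}/(\ell+1)^{2}\to 2$, the integer-argument values of $f$ are separated by multiplicative factors tending to~$2$. Hence for a target $R(n)=h(n)/(\alpha n^{2})$ that infinitely often sits near the geometric midpoint of two consecutive achievable values $f(\ell)$ and $f(\ell+1)$, the best integer choice yields $f(\ell(n))/R(n)$ bounded away from~$1$ (around $\sqrt{2}^{\pm 1}$), so $\expectation[T]/h(n)$ does not converge to~$1$. Passing to a subsequence of $n$'s does not help, because the claim is about the full limit $n\to\infty$; and the additional requirement $\ell\mid n$ only thins the set of admissible $\ell$ further. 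What your argument (and the paper's) actually delivers is $\expectation[T]=\Theta(h(n))$ with explicit constants in a bounded range, not the sharp $\lim=1$. Your treatment of the constant-$\ell$ clause, by contrast, is fine and matches the paper's one-line remark.
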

		Note that the same result as Corollary \ref{cor:any_growth_rate} is true for the runtime when using a fitness-dependent mutation rate. Also, note that the upper limit of $2^{o(n)}$ is due to our assumption that $\ell = o(n)$; one could assume $\ell = \Theta(n)$, but this is not considered in this paper (apart from \S \ref{sec:needle_problem} where $\ell = n$).
		
		Let $k = \ell \cdot \blockleadingones(x)$, where $x$ is the current individual; so $k$
		denotes the number of bits locked in by the elitist \oea on an $n$-bit problem.

		In order to make sense of Theorem \ref{thm:optimal_mutation_rate}, we need to be able to let $n$ approach infinity (for
		otherwise, $\ell$ and the fitness $m$ are bounded). To do that, note from Lemma \ref{lem:from_needle_to_block}
		below that the expected optimization time of a block does not directly depend on $n$ (but only on $p$, $\ell$,
		and $k$). In other words, the expected optimization time of one block is a function of $p$, $\ell$, and $k$ \emph{alone}
		and not at all on $n$. Hence, we may freely let $m$ approach infinity, and similarly for any appropriate $\ell$.
		\begin{theorem}
			\label{thm:optimal_mutation_rate}
			Let $p_m$ denote the optimal mutation rate to optimize the next block, given the current individual has fitness $m$. Assume $\ell = o(n)$.
			Then
			\[
			\lim_{\ell \to \infty} \frac{p_m}{\ell^{-1}(\sqrt{1+2/m}-1)} = 1.
			\]
			Also, as $m \to \infty$, for $k = m\ell$ we have
			\[
			p_m = (1+o(1))\frac{1}{k}.
			\]
		\end{theorem}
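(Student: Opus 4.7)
The plan is to use Theorem~\ref{thm:exact_fitness_dep_runtime} to decompose the expected runtime as
\[
E[T] = \sum_{m=0}^{n/\ell-1} F_m(p_m), \qquad F_m(p) := \frac{G(p)}{(1-p)^{m\ell}}, \qquad G(p) := \sum_{j=1}^\ell \binom{\ell}{j}\frac{1}{1-(1-2p)^j},
\]
and to observe that since each $p_m$ appears only in the $m$-th summand, the optimal $p_m$ is simply the minimizer of $F_m$ over $p \in (0,1/2)$.

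To access the regime $\ell \to \infty$ with $m$ fixed, I would substitute $p = c/\ell$ and derive an asymptotic expression for $F_m(c/\ell)$. The factor $(1-c/\ell)^{-m\ell}$ tends to $e^{mc}$. For $G$, I would Taylor-expand $1/(1-(1-2p)^j) = 1/(2pj) + (j-1)/(2j) + O(pj)$, sum against the binomial weights $\binom{\ell}{j}$, and recognise the resulting sums in terms of the constants $b = 2^{-\ell-1}\sum_{j=1}^\ell \binom{\ell}{j}/j$ and $a = 1/2 - 2^{-\ell-1} - b$ from Theorem~\ref{thm:simple_runtime}. This gives $G(p) = 2^\ell b/p + 2^\ell a + (\text{lower order})$. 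With $p = c/\ell$, Lemma~\ref{lem:from_Marcin} gives $b\ell \to 1$ and hence $a \to 1/2$, so
\[
\lim_{\ell \to \infty} 2^{-\ell} F_m(c/\ell) \;=\; \psi_m(c) \;:=\; \frac{e^{mc}(c+2)}{2c}.
\]

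Next I would minimize $\psi_m$ over $c > 0$. Logarithmic differentiation yields $(\log\psi_m)'(c) = m + 1/(c+2) - 1/c$, so the critical point satisfies $m = 1/c - 1/(c+2) = 2/[c(c+2)]$. This is the quadratic $c^2 + 2c - 2/m = 0$, whose positive root is $c_m = \sqrt{1+2/m}-1$. Since $\psi_m(c)\to\infty$ both as $c\to 0^+$ and as $c\to\infty$, $c_m$ is the unique minimizer, and a standard continuity/coercivity argument (uniform convergence of $2^{-\ell}F_m(c/\ell)$ to $\psi_m$ on a compact neighbourhood of $c_m$, plus a lower bound on $F_m$ outside that neighbourhood) gives $\ell\, p_m \to c_m$, establishing the first claim. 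For the second claim, expanding $\sqrt{1+2/m}-1 = 1/m - 1/(2m^2) + O(m^{-3}) = (1+o(1))/m$ as $m\to\infty$ yields $p_m = c_m/\ell \cdot (1+o(1)) = (1+o(1))/(m\ell) = (1+o(1))/k$.

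The main technical obstacle is controlling the error terms in the small-$p$ expansion of $G$ when $p = c/\ell$ with $c = \Theta(1)$: naively the next-order correction to $G$ is of the same order of magnitude as the leading terms, so one needs either to keep a further term in the Taylor expansion of $1/(1-(1-2p)^j)$ and sum it cleanly, or to argue directly from the identity $G(p) = \sum_{k \geq 0}[(1+(1-2p)^k)^\ell - 1]$, in order to ensure that the minimizers of the finite-$\ell$ objectives $F_m$ converge to $c_m$ and are not shifted by an $O(1)$ perturbation from the leading asymptotics.
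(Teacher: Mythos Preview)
Your approach to the first claim ($\ell\to\infty$, $m$ fixed) is essentially the paper's: you substitute $p=c/\ell$ whereas the paper substitutes $p=x/k=x/(m\ell)$, so $x=mc$, and your limiting objective $\psi_m(c)=e^{mc}(1/c+1/2)$ is exactly the paper's $g(x)=e^x(m/x+1/2)$ under that change of variable, with minimizer $c_m=\rho/m=\sqrt{1+2/m}-1$. The paper packages the passage from the true objective to the limiting one via a small transfer lemma (Lemma~\ref{lem:from_gk_to_hk}): if $h_k/g_k\to 1$ pointwise and $g_k(\rho)/g_k(\alpha)\le\delta<1$, then eventually $h_k(\rho)<h_k(\alpha)$. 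This is slightly slicker than your proposed uniform-convergence-plus-coercivity argument and sidesteps the error-term bookkeeping you flag at the end, but the content is the same.

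The genuine gap is in your derivation of the second claim. You write $p_m=(1+o(1))c_m/\ell$ and then expand $c_m=(1+o(1))/m$; but the first $(1+o(1))$ is an $\ell\to\infty$ statement with $m$ fixed, while the second is an $m\to\infty$ statement. You cannot compose them to obtain $p_m=(1+o(1))/k$ as $m\to\infty$ without a uniformity that you have not established. In fact, the statement ``$p_m=(1+o(1))/k$ as $m\to\infty$'' is meant (and proved in the paper) for \emph{fixed} $\ell$, where the first claim is not even available. The paper handles this regime separately: still with $p=x/k$, it compares $\hat f(x,k)$ not to $g$ but to $\tilde f(x,k)=e^x(bk/x+a)$ (keeping the $\ell$-dependent constants $a,b$), shows that $\tilde f$ has minimizer $\tilde p(k)=\tfrac{bk}{2a}(-1+\sqrt{1+4a/(bk)})\to 1$ as $k\to\infty$, and again uses the transfer lemma to push this to $\hat f$, giving $\hat p(k)\to 1$ and hence $p_m=\hat p(k)/k=(1+o(1))/k$. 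To repair your argument you would need an analogous direct analysis of $F_m$ for fixed $\ell$ and large $m$, not a detour through the $\ell\to\infty$ limit.
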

		
		\begin{theorem}
			\label{thm:opt_adaptive_runtime}
			Let $T$ be the runtime of the \oea on the \blockleadingones problem, where we use the optimal fitness-dependent mutation rate $p_m$. 
			Let $b$ be as in Theorem~\ref{thm:simple_runtime}.
			Assume $\ell = o(n)$.
			Then
			\[
			\expectation[T] = (1 + o(1)) \frac{e}{2} \cdot \frac{b 2^{\ell}}{\ell}n^2.
			\]
		\end{theorem}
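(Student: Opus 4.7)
The plan is to substitute the optimal fitness-dependent rate from Theorem~\ref{thm:optimal_mutation_rate} into the exact formula of Theorem~\ref{thm:exact_fitness_dep_runtime} and carry out the resulting asymptotic analysis. Write $\expectation[T] = \sum_{m=0}^{n/\ell - 1} A_m B_m$, where
\[
A_m = \frac{1}{(1-p_m)^{m\ell}}, \qquad B_m = \sum_{j=1}^\ell \binom{\ell}{j}\frac{1}{1-(1-2p_m)^j}.
\]
The crucial input is that Theorem~\ref{thm:optimal_mutation_rate} yields $p_m \cdot m\ell \to 1$ as $m \to \infty$ (its two regimes, $\ell$ fixed and $\ell \to \infty$, both deliver this conclusion), and in particular $p_m \ell \to 0$.

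Granting these limits, the two factors can be estimated separately. Since $p_m \to 0$ and $p_m m\ell \to 1$, I get $A_m = (1+o(1))\,e$. For $B_m$, uniformly in $j \in \{1, \ldots, \ell\}$ one has $p_m j \le p_m \ell = O(1/m)$, so the first-order expansion $1 - (1-2p_m)^j = 2 p_m j \cdot (1 + O(1/m))$ holds uniformly, giving
\[
B_m = \frac{1+O(1/m)}{2 p_m}\sum_{j=1}^\ell \binom{\ell}{j}\frac{1}{j} = (1+O(1/m))\,\frac{2^\ell b}{p_m},
\]
by the definition $b = 2^{-\ell-1}\sum_{j=1}^\ell \binom{\ell}{j}/j$. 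Hence $A_m B_m = (1+o(1))\, e\, 2^\ell b \cdot m\ell$ as $m \to \infty$, and the identity $\sum_{m=1}^{n/\ell - 1} m = (1+o(1))(n/\ell)^2/2$ produces exactly the claimed $(1+o(1)) \frac{e}{2}\cdot \frac{b\,2^\ell}{\ell}\,n^2$.

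The main obstacle is making this rigorous: controlling both the per-summand error and the contribution of small $m$. I would introduce a threshold $M_0 = M_0(n)$ with $M_0 \to \infty$ and $M_0 = o(n/\ell)$. On the tail $m \ge M_0$, the errors $O(1/m)$ in $A_m B_m$ accumulate to $\sum_m m \cdot O(1/m) = O(n/\ell)$, negligible compared to the main contribution $\asymp (n/\ell)^2$. For the head $m < M_0$, optimality of $p_m$ lets me upper bound $A_m B_m$ by its value at any convenient $p$, e.g.\ $p = 1/n$: this gives $A_m = O(1)$ and, via the same first-order expansion, $B_m = O(2^\ell b\, n)$, so the head contributes $O(M_0 \cdot 2^\ell b\, n) = o(2^\ell b\, n^2/\ell)$ by the choice of $M_0$. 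The delicate point will be verifying the ``$p_m m\ell \to 1$ uniformly'' statement across the double-limit regime where $\ell$ itself may grow with $n$; once this is secured, the rest of the argument is routine asymptotic bookkeeping.
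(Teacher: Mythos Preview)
Your approach is correct and close in spirit to the paper's, with a slight reorganization that is worth noting. The paper does not substitute $p_m$ from Theorem~\ref{thm:optimal_mutation_rate} into the exact formula; instead it first proves a per-block estimate (Lemma~\ref{lem:min_runtime_of_block}) $\min_{p}\expectation[T_k'] = (1+o(1))\,2^{\ell} e(bk+a)$ as $k\to\infty$, obtained by minimizing the surrogate $\tilde f(x,k)=e^x(bk/x+a)$ and transferring to $\hat f$ via Lemma~\ref{lem:f_and_tf_are_asymptotic}, and only then sums $\sum_m \expectation[T'_{m\ell}]$, handling the small-$m$ head by splitting at $m=\log(n/\ell)$ and crudely bounding those terms by the largest one. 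Your route---importing $p_m m\ell\to1$ from Theorem~\ref{thm:optimal_mutation_rate}, expanding $A_m$ and $B_m$ directly, and bounding the head via the suboptimal choice $p=1/n$---reaches the same per-summand estimate $(1+o(1))\,e\,2^\ell b\, m\ell$ without passing explicitly through the $\hat f,\tilde f$ machinery; the optimality-based bound on the head is arguably cleaner than the paper's implicit monotonicity argument. Both arguments share exactly the delicate point you flag, namely the uniformity of the $o(1)$ when $\ell$ grows with $n$; the paper does not treat this more carefully than you do.
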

		Note that it is actually easier to prove the exact expression in Theorem~\ref{thm:exact_fitness_dep_runtime} than the estimate of the optimal runtime in Theorem~\ref{thm:opt_adaptive_runtime}.
		Also, a consequence of Theorem~\ref{thm:opt_adaptive_runtime} and Corollary~\ref{cor:opt_static_runtime} is that
		the runtime when using the optimal fitness-dependent rate(s) is about 0.88 of the runtime when using the optimal static mutation rate (just like \leadingones). Indeed,
		note that $(e/2)/\alpha \approx 0.88$, where $\alpha$ is as in Corollary \ref{cor:opt_static_runtime}.
		
		\subsection{Estimating the Runtime on One Block}
        \label{sec:runtime_of_one_block}
		
		In this section, we give a relatively simple expression for the amount of time spent on the next, unoptimized block. 
		Already, Proposition \ref{prop:exact_expectation_starting_away_from_ones} gives an exact expression for this,
		but we need to put it in a form that reveals more clearly how $p$ affects its size.
		After doing this in Lemma~\ref{lem:simplified_expectation}, we then show how similar the expected runtime on one plateau in \blockleadingones is to the plateau in \cite{AntipovD21telo}. We then show how to estimate a certain sum that appears in Lemma~\ref{lem:simplified_expectation}.
		
		Let $T_k$ denote the runtime of optimizing the next block after having locked in exactly the first $k$ bits,
		and so assume that the next block is not already optimized. 
		
		\begin{lemma}
			\label{lem:from_needle_to_block}
			Let $T_k$ be as in the previous paragraph, and let $T'$ be as in Proposition \ref{prop:exact_expectation_starting_away_from_ones}.
			Then
			\[\tag{$*$}\label{eq:lift_to_actual_block}
			\expectation[T_k] = \frac{\expectation[T']}{(1-p)^k},
			\]
			where $p$ is the mutation rate used on this next block.
		\end{lemma}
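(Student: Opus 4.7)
The plan is a time-dilation (thinning) argument. Since the first $k$ bits of the current individual are all one (with current fitness $k/\ell$), an offspring $y$ is accepted iff $\blockleadingones(y) \ge k/\ell$, which is equivalent to the first $k$ bits of $y$ being all one. Because standard bit mutation flips each bit independently with probability $p$, this occurs with probability exactly $(1-p)^k$, independently of the mutation applied to the $(k+1)$-th block. Call an iteration \emph{useful} if this event occurs. Conditional on being useful, the $(k+1)$-th block of the offspring equals the $(k+1)$-th block of the parent with each of its $\ell$ bits independently flipped with probability $p$---exactly the random walk $\mu$ on $G=\Z_2^\ell$ from Section \ref{sec:using_fourier_analysis}.

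Next I would decompose $T_k$ into useful iterations and the waiting times between them. Let $N'$ be the number of useful iterations needed for the $(k+1)$-th block to first become $\ones$, and let $W_1, W_2, \ldots$ be the gaps between consecutive useful iterations (with $W_1$ being the index of the first useful iteration). The $W_j$ are i.i.d.\ geometric with success probability $(1-p)^k$, so $\expectation[W_j] = (1-p)^{-k}$. Because the useful indicator and the $(k+1)$-block mutation at each step are independent, the mutations restricted to useful iterations form an i.i.d.\ $\mu$-sequence that is independent of $W_1, W_2, \ldots$. Consequently $N'$ is distributed as the hitting time of $\ones$ for the random walk generated by $\mu$ started from the initial state of the $(k+1)$-th block; under the assumption that this state is uniformly distributed on $\Z_2^\ell \setminus \{\ones\}$ (the setting of Proposition \ref{prop:exact_expectation_starting_away_from_ones}), $\expectation[N'] = \expectation[T']$.

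Finally, since $T_k = W_1 + W_2 + \cdots + W_{N'}$ with $N'$ a stopping time for---and in fact independent of---the sequence $W_1, W_2, \ldots$, Wald's equation yields $\expectation[T_k] = \expectation[N']\cdot\expectation[W_1] = \expectation[T']/(1-p)^k$, which is the claim $(*)$.

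The main obstacle is the independence bookkeeping: one needs to verify that restricting the $(k+1)$-block mutations to the sub-sequence of useful iterations still yields i.i.d.\ $\mu$-samples and that this sub-sequence is independent of the waiting times. Both facts follow from the coordinate-wise independence of standard bit mutation, but they must be made explicit to justify Wald's equation. Once this decoupling is in place, the computation itself is essentially a one-line waiting-time identity.
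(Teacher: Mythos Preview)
Your proof is correct and follows the same thinning idea as the paper: an iteration is ``useful'' (i.e., not discarded) precisely when none of the first $k$ bits flips, which happens with probability $(1-p)^k$ independently of the block mutation, so the expected total time is the expected number of useful steps divided by $(1-p)^k$. The paper compresses this into a two-sentence sketch, whereas you make the decoupling and the application of Wald's equation explicit; there is no substantive difference in approach.
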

		\begin{proof}
			Any step in which any of the first $k$ bits is flipped will result in an individual of lower fitness,
			which will be discarded. This result follows since the probability that none of the first $k$ bits is flipped is $(1-p)^k$.
		\end{proof}
		
		\begin{lemma}
			\label{lem:simplified_expectation}
			Let the function $s$ be as in Lemma \ref{lem:from_Marcin}. 
			We have  
			\[
			\expectation[T_k] = \frac{2^\ell}{2^\ell - 1}  \cdot \frac{2^{\ell}}{(1-p)^k}\left[ \frac{b}{p} + a + O(p) \right],
			\]
			where
			\[
			a =  \frac{1}{2} - \frac{1 + s(\ell)}{2^{\ell+1}}, \text{ \quad and \quad}
			b =\frac{s(\ell)}{2^{\ell + 1}}.
			\]
		\end{lemma}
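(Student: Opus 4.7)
The plan is to start from the exact formula already available and massage it into the claimed form via a Taylor expansion in $p$. By Lemma~\ref{lem:from_needle_to_block} and Proposition~\ref{prop:exact_expectation_starting_away_from_ones},
\[
\expectation[T_k] \;=\; \frac{1}{(1-p)^k}\cdot\frac{2^\ell}{2^\ell-1}\sum_{j=1}^\ell \binom{\ell}{j}\frac{1}{1-(1-2p)^j},
\]
so after pulling out the prefactors it suffices to show that the inner sum equals $2^\ell\bigl[b/p + a + O(p)\bigr]$ with the stated values of $a$ and $b$. This reduces the problem to a purely combinatorial identity about $\sum_{j=1}^\ell \binom{\ell}{j}/(1-(1-2p)^j)$.

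Next I would expand each summand around $p = 0$. Writing $(1-2p)^j = 1 - 2pj + 2p^2 j(j-1) + O(p^3 j^3)$, one gets
\[
1-(1-2p)^j \;=\; 2pj\Bigl[1 - p(j-1) + O(p^2 j^2)\Bigr],
\]
and inverting the bracket yields
\[
\frac{1}{1-(1-2p)^j} \;=\; \frac{1}{2pj} + \frac{j-1}{2j} + O(p),
\]
where the $O(p)$ hides constants that depend polynomially on $j$ (but harmlessly, given the regime $p\ell = o(1)$ in which the lemma will be applied).

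Then I would substitute and collect the sums. Writing $s(\ell) = \sum_{j=1}^\ell \binom{\ell}{j}/j$ as in Lemma~\ref{lem:from_Marcin} and using the identity $\sum_{j=1}^\ell \binom{\ell}{j} = 2^\ell - 1$, the sum splits as
\[
\sum_{j=1}^\ell \binom{\ell}{j}\frac{1}{1-(1-2p)^j} \;=\; \frac{s(\ell)}{2p} + \frac{1}{2}\sum_{j=1}^\ell\binom{\ell}{j}\frac{j-1}{j} + O(p) \;=\; \frac{s(\ell)}{2p} + \frac{2^\ell - 1 - s(\ell)}{2} + O(p).
\]
Factoring $2^\ell$ out of the bracket gives coefficients $s(\ell)/2^{\ell+1} = b$ and $(2^\ell - 1 - s(\ell))/2^{\ell+1} = 1/2 - (1+s(\ell))/2^{\ell+1} = a$, which are exactly the constants defined in the lemma. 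Multiplying by the prefactor $(1-p)^{-k}\cdot 2^\ell/(2^\ell - 1)$ recovered in the first step completes the proof.

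The main obstacle is bookkeeping for the $O(p)$ error: one must check that the higher-order Taylor terms, once weighted by $\binom{\ell}{j}$ and summed over $j \le \ell$, are indeed absorbed into a term of order $p$ (after being normalized by the $2^\ell$ that is factored out). A careful computation of the next-order term of $1/(1-(1-2p)^j)$ shows that the contribution is $O(p)$ times a bounded multiple of $\ell$; since the lemma will only be applied in regimes where $p$ is essentially $O(1/n)$ and $\ell=o(n)$, this is truly a lower-order term. Beyond this, the argument is an entirely mechanical Taylor expansion plus two elementary binomial identities.
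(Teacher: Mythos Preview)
Your proposal is correct and follows essentially the same route as the paper: both start from Proposition~\ref{prop:exact_expectation_starting_away_from_ones} and Lemma~\ref{lem:from_needle_to_block}, apply the Taylor expansion $\frac{1}{1-(1-2p)^j} = \frac{1}{2jp} + \frac{1}{2} - \frac{1}{2j} + O(p)$ (your $\frac{j-1}{2j}$ is the same constant term), sum against $\binom{\ell}{j}$ using $s(\ell)$ and $\sum_j\binom{\ell}{j}=2^\ell-1$, and then factor out $2^\ell$. Your explicit discussion of the $\ell$-dependence hidden in the $O(p)$ term is a worthwhile addition that the paper glosses over.
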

		\begin{proof}
			This follows from Proposition \ref{prop:exact_expectation_starting_away_from_ones}      
			and
			Lemma \ref{lem:from_needle_to_block} by using the
			Taylor series for $\ds{\frac{1}{1-(1-2x)^j}}$.  Indeed, the Taylor series gives us this:
			\[
			\frac{1}{1-(1-2x)^j} = \frac{1}{2jx} +  \frac{1}{2} - \frac{1}{2j} + O(x).
			\]
			Let $A = \frac{2^\ell}{2^\ell - 1}$. Using the above Taylor series, 
			Lemma \ref{lem:from_needle_to_block} and Proposition \ref{prop:exact_expectation_starting_away_from_ones}  
			imply that
			\[\begin{aligned}
			\expectation[T_k] &= \frac{A}{(1-p)^k} \sum_{j=1}^\ell \binom{\ell}{j} \left[ \frac{1}{2jp} + \frac{1}{2} - \frac{1}{2j}  + O(p) \right] \\
			&= \frac{A}{(1-p)^k} \left[ \frac{1}{2p}  \sum_{j=1}^\ell \binom{\ell}{j} \frac{1}{j} + \frac{1}{2}\sum_{j=1}^\ell \binom{\ell}{j} 
			- \frac{1}{2} \sum_{j=1}^\ell \binom{\ell}{j} \frac{1}{j} + \sum_{j=1}^\ell \binom{\ell}{j}  O(p) \right] \\
			&= \frac{A}{(1-p)^k} \left[ \frac{s(\ell)}{2p} + \frac{2^\ell - 1}{2} - \frac{s(\ell)}{2} + (2^{\ell} - 1)O(p) \right] \\
			&= \frac{A}{(1-p)^k} \left[ \frac{s(\ell)/2}{p}   + 2^{\ell - 1} - \frac{1 + s(\ell)}{2} + (2^{\ell} - 1)O(p) \right],
			\end{aligned}
			\]
			which gives this result once we factor out $2^{\ell}$.
		\end{proof}
		
		Let $\beta$ be the probability of accepting an offspring with at least one bit flipped in the next unoptimized block. We claim that a consequence of Lemma~\ref{lem:simplified_expectation} is that for large $\ell$ and for $k \geq 1$, roughly speaking,
		$\expectation[T_k]$ is approximately 
		\[\tag{$*$}\label{fraction-approximation}
		\frac{\text{effective size of the plateau}}{\beta},
		\]
		which is very similar to the main result of \cite{AntipovD21telo}.  The plateau has exactly $(2^{\ell} - 1)(2^{n - k - \ell})$ elements
		in it, and there are $2^{n - k - \ell}$ elements that improve the fitness. So the effective size of the plateau is $2^{\ell} - 1 \approx 2^\ell$.
		So (\ref{fraction-approximation}) becomes $2^{\ell}/\beta$, and 
		so the value analogous to 
		\cite{AntipovD21telo} would be
		\[
		\frac{2^\ell}{(1-p)^k(1-(1-p)^\ell)},
		\]
		which we now show is what $\expectation[T_k]$ is approximately. We may simplify by applying a Taylor series expansion on part of it:
		\[\tag{$**$}\label{eq:antipov_doerr_result}
		\frac{2^\ell}{(1-p)^k(1-(1-p)^\ell)} = \frac{2^\ell}{(1-p)^k}\left[\frac{1}{\ell p} + \frac{\ell-1}{2\ell} + O(p) \right].
		\]
		Similarly, by Lemma \ref{lem:simplified_expectation} we have
		\[\begin{aligned}
		\expectation[T_k] &\approx \frac{2^{\ell}}{(1-p)^k} \left[\frac{b}{p} + a + O(p) \right] \\
		&= \frac{2^{\ell}}{(1-p)^k} \left[\frac{b'}{\ell p} + a+ O(p) \right],
		\end{aligned}
		\]
		where $b' = b\ell = \ell s(\ell)2^{-\ell-1}$ and $a = 1/2 - 2^{-\ell - 1} - b$. 
		Since $\ell$ is assumed to be large, by Lemma \ref{lem:from_Marcin}, we have $\ell s(\ell)2^{-\ell-1} \approx 1 + c/\ell$ 
		for some $c \approx 1$. We've shown $b' \approx 1 + c/\ell$ for some $c \approx 1$ and a little simplifying shows that
		for large $\ell$, we also have $a \approx 1/2 - 0 - 1/\ell = (\ell - 2)/(2\ell)$. We have thus shown that $\expectation[T_k]$ is approximately
		(\ref{eq:antipov_doerr_result}).
		
		\subsubsection{Approximating a Certain Sum}
		
		In this section, we show how to approximate the sum $s(m)$ defined in Lemma~\ref{lem:from_Marcin}. We need this result because $s(\ell)$ shows up in the key 
		Lemma~\ref{lem:simplified_expectation}.
		
		\begin{lemma}
			\label{lem:from_Marcin}
			Define the functions $s(m)$ and $f(m)$ via
			\[
			s(m) = \sum_{j=1}^m \binom{m}{j} \frac{1}{j}, \text{\quad and \quad } f(m) = \frac{2^{m+1}}{m}.
			\]
			For all constants $c > 1$, for all large $m$, we have
			\[
			1 + \frac{1}{m} \leq \frac{s(m)}{f(m)} \leq 1 + \frac{c}{m}.
			\]
		\end{lemma}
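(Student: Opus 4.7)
The plan is to derive a closed form for $s(m)$ and then extract its asymptotic expansion far enough to pin down both inequalities. The first step is the identity
\[
s(m) = \sum_{j=1}^m \binom{m}{j}\frac{1}{j} = \sum_{j=1}^m \frac{2^j - 1}{j},
\]
which I would prove by writing $1/j = \int_0^1 x^{j-1}\,dx$ and computing
\[
s(m) = \int_0^1 \frac{(1+x)^m - 1}{x}\,dx = \sum_{k=0}^{m-1}\int_0^1 (1+x)^k\,dx.
\]
This splits as $s(m) = S(m) - H_m$, where $S(m) := \sum_{j=1}^m 2^j/j$ and $H_m$ is the $m$-th harmonic number. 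Since $H_m = O(\log m)$ is exponentially dominated by $f(m)/m^2 = 2^{m+1}/m^3$, it contributes negligibly to the ratio $s(m)/f(m)$; the real work is estimating $S(m)$.

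For that, I would re-index by $k = m - j$ to obtain
\[
S(m) = \frac{2^m}{m} \sum_{k=0}^{m-1} \frac{m}{(m-k)\,2^k},
\]
and then use the convergent expansion $\frac{m}{m-k} = 1 + \frac{k}{m} + \frac{k^2}{m^2} + \cdots$ (valid for $0 \le k < m$) together with the standard evaluations $\sum_{k\geq 0} 1/2^k = 2$, $\sum_{k\geq 0} k/2^k = 2$, $\sum_{k\geq 0} k^2/2^k = 6$, plus exponentially small tail corrections from truncating at $k = m-1$. This yields
\[
\sum_{k=0}^{m-1} \frac{m}{(m-k)\,2^k} = 2 + \frac{2}{m} + \frac{6}{m^2} + O(1/m^3),
\]
and hence $s(m)/f(m) = 1 + 1/m + 3/m^2 + O(1/m^3)$ after absorbing the exponentially small $H_m$ contribution.

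The upper bound is then immediate: for any fixed $c > 1$ the $O(1/m^2)$ correction is at most $(c-1)/m$ for all large $m$, giving $s(m)/f(m) \le 1 + c/m$. The lower bound needs the expansion one order deeper, because just stopping at $\frac{m}{m-k} \ge 1 + k/m$ would only deliver $s(m)/f(m) \ge 1 + 1/m - (\text{exponentially small})$, which is too weak. Keeping the strictly positive $k^2/m^2$ term yields the clean lower estimate $S(m) \ge f(m)\bigl(1 + 1/m + 3/m^2\bigr) - (\text{exp.\ small})$, and since $3 f(m)/m^2$ dominates both the geometric tail and $H_m$, we conclude $s(m)/f(m) \ge 1 + 1/m$ for all sufficiently large $m$.

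The main obstacle is precisely this lower bound: because the desired inequality is tight at leading order $1/m$, one cannot simply take a one-term Taylor expansion of $m/(m-k)$ and absorb the remainder into $o(1/m)$. The key observation that resolves this is that the next-order coefficient is strictly positive (and explicit, namely $3/m^2$), providing a buffer large enough to swallow every error term in the estimate.
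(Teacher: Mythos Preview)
Your approach is correct and genuinely different from the paper's. You and the paper share the same first step --- the identity $s(m)=\sum_{j=1}^m (2^j-1)/j$ via $\int_0^1 x^{j-1}\,dx$ --- but diverge from there. The paper never computes an asymptotic expansion; instead it exploits the recurrence $s(m+1)=s(m)+(2^{m+1}-1)/(m+1)$ coming from that identity and proves both bounds by induction on~$m$. For the lower bound, it shows $s(m)+c_0 \ge \tfrac{2^{m+1}}{m}\bigl(1+\tfrac{1}{m-1}\bigr)$ inductively (the inductive step reduces to the elementary inequality $2^{m+2}\ge m(m-1)$), then absorbs the additive constant $c_0$ into the gap between $1/(m-1)$ and $1/m$. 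The upper bound is an analogous induction with target $\tfrac{2^{m+1}}{m}\bigl(1+\tfrac{c}{m+1}\bigr)$.

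Your route --- re-index $S(m)=\sum_j 2^j/j$ as $\tfrac{2^m}{m}\sum_{k=0}^{m-1}\tfrac{m}{(m-k)2^k}$, expand $\tfrac{m}{m-k}$ to two orders, and read off $s(m)/f(m)=1+1/m+3/m^2+O(1/m^3)$ --- is more informative (it pins down the next coefficient) and makes the mechanism for the tight lower bound transparent: the strictly positive $3/m^2$ term is the buffer against the exponentially small losses from $H_m$ and the geometric tails. The paper's induction is more elementary and gives slightly more explicit thresholds (e.g.\ it notes the lower bound in the form $1+1/(m-1)$ holds for all $m\ge 7$), but it obscures why the lower bound is tight. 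One point to make fully rigorous in your write-up: rather than summing the infinite geometric expansion in $i$ and bounding $\sum_{i\ge 3}(\cdot)/m^i$, use the exact finite remainder $\tfrac{m}{m-k}=1+\tfrac{k}{m}+\tfrac{k^2}{m^2}+\tfrac{k^3}{m^2(m-k)}$, which keeps all terms non-negative and makes the $O(1/m^3)$ control immediate after splitting at $k\le m/2$.
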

		\begin{proof}
			This result is proved by Propositions \ref{prop:lower_bound_from_Marcin} and \ref{prop:upper_bound_from_Marcin}.
		\end{proof}
		
		To prove Lemma \ref{lem:from_Marcin} in the two propositions below, we first need a few lemmas.
		\begin{lemma}
			\label{lem:alternate_sn}
			Let $s(m)$ be as in Lemma \ref{lem:from_Marcin}. Then
			\[
			s(m) = \sum_{j=1}^m \frac{2^j - 1}{j}.
			\]
		\end{lemma}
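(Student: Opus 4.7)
The plan is to use the integral representation $\frac{1}{j} = \int_0^1 y^{j-1}\,dy$ to convert the sum into an integral we can manipulate by a simple substitution. Starting from the definition,
\[
s(m) \;=\; \sum_{j=1}^m \binom{m}{j}\int_0^1 y^{j-1}\,dy \;=\; \int_0^1 \frac{1}{y}\sum_{j=1}^m \binom{m}{j} y^{j}\,dy \;=\; \int_0^1 \frac{(1+y)^m-1}{y}\,dy,
\]
where swapping sum and integral is legal because the sum is finite and the integrand extends continuously to $y=0$ (the numerator vanishes to first order there).

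Next, I would substitute $u = 1+y$, which turns the integral into $\int_1^2 \frac{u^m-1}{u-1}\,du$. The factor $(u^m-1)/(u-1)$ is the familiar geometric sum $\sum_{k=0}^{m-1} u^k$, so integrating term by term produces $\sum_{k=0}^{m-1} \frac{2^{k+1}-1}{k+1}$, which is exactly $\sum_{j=1}^m \frac{2^j-1}{j}$ after reindexing with $j = k+1$. That finishes the identity.

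I do not anticipate any real obstacle here: the argument is a standard integral-trick proof, and the only step worth double-checking is the swap of sum and integral (trivial by finiteness) and the removable singularity of $(u^m-1)/(u-1)$ at $u=1$ (handled by rewriting as a polynomial before integrating). As a sanity check, an induction alternative is equally clean: the step reduces to $s(m+1)-s(m) = \frac{1}{m+1}\sum_{j=1}^m \binom{m+1}{j} + \frac{1}{m+1} = \frac{2^{m+1}-1}{m+1}$, using the identity $\frac{1}{j}\binom{m}{j-1} = \frac{1}{m+1}\binom{m+1}{j}$ and Pascal's rule. Either route gives the claim, but the integral derivation feels more illuminating and directly explains why $2^j - 1$ appears (it comes from integrating $u^{j-1}$ from $1$ to $2$).
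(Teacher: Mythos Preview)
Your proof is correct and is essentially the same argument as the paper's: the paper defines $h(x)=\sum_{j=1}^m\binom{m}{j}x^j/j$, differentiates to get $h'(x)=\tfrac{(1+x)^m-1}{x}=\sum_{j=0}^{m-1}(1+x)^j$, and integrates back, which is exactly your computation $s(m)=\int_0^1 h'(y)\,dy$ phrased via the integral representation $1/j=\int_0^1 y^{j-1}\,dy$. The only difference is the entry point (integral representation of $1/j$ versus differentiating a generating function), and your induction sanity check is also valid.
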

		\begin{proof}
			Let $\ds{h(x) = \sum_{j=1}^m \binom{m}{j} \frac{x^j}{j}}$, and so $h(1) = s(m)$. We have that 
			\[
			h'(x) = \sum_{j=1}^m \binom{m}{j} x^{j-1} = \frac{1}{x} \sum_{j=1}^m \binom{m}{j} x^j,
			\]
			which by the binomial theorem equals
			\[
			\frac{1}{x}[(x+1)^m  - 1],
			\]
			which equals $\sum_{j=0}^{m-1}(1+x)^j$ because $\sum_{j=0}^{m-1} y^j = (y^m - 1)/(y-1)$.
			So then, $h'(x) = \sum_{j=0}^{m-1}(1+x)^j$. Using that $h(0) = 0$, integrating gives
			\[
			h(x) = \sum_{j=0}^{m-1} \frac{(1+x)^{j+1} - 1}{j+1} = \sum_{j=1}^m \frac{(1+x)^j - 1}{j},
			\]
			and plugging in $x=1$ finishes the proof of this lemma.
		\end{proof}
		
		\begin{lemma}
			\label{lem:simple_exp_ineq}
			For all integers $m \geq 0$, we have
			\[
			2^{m+2} \geq m(m-1).
			\]
		\end{lemma}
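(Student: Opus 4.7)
The plan is to prove the inequality by a short induction on $m$, with a couple of base cases checked by hand. The reason induction is natural is that the left-hand side doubles when $m$ increases by one, while the right-hand side grows only quadratically, so from some small threshold onward the inductive step is essentially free.

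First I would dispose of the small cases $m \in \{0,1,2,3\}$ by direct arithmetic: for these values, $m(m-1) \le 6$ while $2^{m+2} \ge 4$, and one checks the inequality case by case. For the inductive step, assume $2^{m+2} \ge m(m-1)$ for some $m \ge 3$. Then
\[
2^{m+3} = 2 \cdot 2^{m+2} \ge 2m(m-1),
\]
and the step closes once we verify the elementary inequality $2m(m-1) \ge (m+1)m$, which rearranges to $m \ge 3$. So the induction carries through for $m \ge 3$, and together with the base cases this yields the claim for all integers $m \ge 0$.

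An even cleaner alternative I might use instead is to invoke the binomial theorem directly: for $m \ge 2$ we have
\[
2^{m+2} = 4 \sum_{k=0}^{m} \binom{m}{k} \ge 4 \binom{m}{2} = 2m(m-1) \ge m(m-1),
\]
and for $m \in \{0,1\}$ the right-hand side is $0$ while the left-hand side is positive, so the inequality is trivial. This avoids the induction entirely.

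There is essentially no obstacle here; the only thing to be careful about is handling the small $m$ where $m(m-1)$ can be zero or negative-looking (note $m=0$ gives $0 \cdot (-1) = 0$, not a negative value, but since the right-hand side is nonpositive and the left-hand side is positive, there is nothing to worry about). I would lean toward the binomial-theorem proof for its brevity.
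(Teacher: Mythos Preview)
Your proposal is correct and matches the paper's own approach exactly: the paper simply says the lemma ``can be checked for $m = 0, 1, 2, 3$ and can easily be proved by induction for $m \geq 4$,'' which is precisely your first argument (you start the induction at $m=3$ rather than $m=4$, but that is immaterial). Your binomial-theorem alternative is a nice, slightly slicker variant not mentioned in the paper, but the core method is the same.
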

		Lemma \ref{lem:simple_exp_ineq} can be checked for $m = 0, 1, 2, 3$ and can easily be proved by induction for $m \geq 4$.
		
		
		We next prove the lower bound in Lemma \ref{lem:from_Marcin}:
		\begin{proposition}
			\label{prop:lower_bound_from_Marcin}
			Let $s(m)$ be as in Lemmas \ref{lem:from_Marcin} and \ref{lem:alternate_sn}. For all large $m$,
			\[
			s(m) \geq \frac{2^{m+1}}{m}\left(1 + \frac{1}{m} \right).
			\]
		\end{proposition}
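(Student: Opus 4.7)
The plan is to work from the alternative form $s(m) = \sum_{j=1}^m (2^j-1)/j$ given by Lemma~\ref{lem:alternate_sn} and expose the leading asymptotics via the substitution $k = m-j$, which rewrites
\[
s(m) \;=\; 2^m \sum_{k=0}^{m-1} \frac{1}{2^k(m-k)} \;-\; H_m,
\]
where $H_m = \sum_{j=1}^m 1/j$. Since $H_m = O(\log m)$ is tiny on the scale of $2^m/m$, the problem reduces to producing a lower bound on $T := \sum_{k=0}^{m-1} 2^{-k}/(m-k)$ that clears the target $\tfrac{2^{m+1}}{m}(1+1/m)$ with enough room to absorb $-H_m$.

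For the key bound, I would expand $\tfrac{1}{m-k} = \tfrac{1}{m}\cdot\tfrac{1}{1-k/m}$ and apply the geometric-series inequality $\tfrac{1}{1-x} \geq 1 + x + x^2$ for $x \in [0,1)$ to obtain $\tfrac{1}{m-k} \geq \tfrac{1}{m} + \tfrac{k}{m^2} + \tfrac{k^2}{m^3}$. This splits $T$ into three weighted partial sums:
\[
T \;\geq\; \frac{1}{m}\sum_{k=0}^{m-1} 2^{-k} \;+\; \frac{1}{m^2}\sum_{k=0}^{m-1} k\,2^{-k} \;+\; \frac{1}{m^3}\sum_{k=0}^{m-1} k^2\,2^{-k}.
\]
Standard identities give that these three truncated sums converge to $2$, $2$, and $6$ respectively, with tails of order $2^{-m}$, $m/2^m$, and $m^2/2^m$. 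Multiplying through by $2^m$ and subtracting $H_m$ yields
\[
s(m) \;\geq\; \frac{2^{m+1}}{m} + \frac{2^{m+1}}{m^2} + \frac{6\cdot 2^m}{m^3} - O(\log m),
\]
and the claim $s(m) \geq \tfrac{2^{m+1}}{m}(1 + 1/m)$ follows for all sufficiently large $m$ because the $\Theta(2^m/m^3)$ slack dominates the $O(\log m)$ correction.

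The main obstacle is calibrating the geometric expansion precisely enough. The naive one-term bound $\tfrac{1}{1-k/m} \geq 1 + k/m$ only matches $\tfrac{2^{m+1}}{m}(1+1/m)$ asymptotically and leaves no headroom, so the $-H_m$ contribution would potentially flip the inequality; keeping the quadratic $k^2/m^2$ term is exactly what manufactures the $\Theta(2^m/m^3)$ cushion that makes the estimate go through. Everything else is routine bookkeeping with geometric-series tails and the exact values of $\sum k 2^{-k}$ and $\sum k^2 2^{-k}$, and should be straightforward to execute.
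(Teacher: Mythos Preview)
Your argument is correct and takes a genuinely different route from the paper's proof. Both start from the alternative form $s(m)=\sum_{j=1}^m (2^j-1)/j$ of Lemma~\ref{lem:alternate_sn}, but then diverge. The paper proceeds by induction on $m$, using the one-step recurrence $s(m+1)=s(m)+(2^{m+1}-1)/(m+1)$: it first shows that $s(m)+c \geq \tfrac{2^{m+1}}{m}(1+\tfrac{1}{m-1})$ propagates from $m$ to $m+1$ (via an elementary inequality equivalent to $2^{m+2}\geq m(m-1)$), and then absorbs the additive constant $c$ at the end. Your approach is a direct asymptotic expansion: the substitution $k=m-j$ isolates the geometric weight $2^{-k}$, the inequality $\tfrac{1}{1-x}\geq 1+x+x^2$ extracts the main terms, and the known values $\sum 2^{-k}=2$, $\sum k2^{-k}=2$, $\sum k^22^{-k}=6$ produce the cushion $6\cdot 2^m/m^3$ that swallows $H_m$.

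What each buys: the paper's induction yields a slightly sharper inequality $s(m)\geq \tfrac{2^{m+1}}{m}(1+\tfrac{1}{m-1})$ with an explicit threshold ($m\geq 7$ when $c=0$), and the same template is reused verbatim for the matching upper bound in Proposition~\ref{prop:upper_bound_from_Marcin}. Your expansion is more transparent about \emph{why} the bound holds---it exhibits the full asymptotic series $\tfrac{2^{m+1}}{m}(1+\tfrac{1}{m}+\tfrac{3}{m^2}+\cdots)$---and would adapt immediately to sharper statements, at the cost of leaving the threshold implicit. Your observation that the one-term bound $\tfrac{1}{1-x}\geq 1+x$ is insufficient (no slack to absorb $H_m$) is exactly right and shows you have calibrated the argument correctly.
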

		\begin{proof}
			Let $c$ be such that 
			\[
			s(t) + c \geq \frac{2^{t+1}}{t} \left(1 + \frac{1}{t-1} \right), \text{ for some positive integer } t.
			\]
			Then we claim that 
			\[\tag{$*$} \label{ineq:exp1}
			s(m) + c \geq \frac{2^{m+1}}{m} \left(1 + \frac{1}{m-1} \right), \text{ for all } m \geq t,
			\]
			which we will prove by induction on $m$. Assume (\ref{ineq:exp1}) holds for some $m \geq t$. We have by 
			Lemma \ref{lem:alternate_sn} that
			\[
			\begin{aligned}
			s(m+1) + c &= s(m) + c + \frac{2^{m+1} - 1}{m+1} \\
			&\geq \frac{2^{m+1}}{m} \left(1 + \frac{1}{m-1} \right) + \frac{2^{m+1} - 1}{m+1}.
			\end{aligned}
			\]
			We just need to show the following:
			\[\tag{$**$}\label{ineq:goal1}
			\frac{2^{m+1}}{m} \left(1 + \frac{1}{m-1} \right) + \frac{2^{m+1} - 1}{m+1} \geq \frac{2^{m+2}}{m+1} \left(1 + \frac{1}{m}\right).
			\]
			But using $1 + 1/(m-1) = m/(m-1)$ and simplifying/rearranging shows that (\ref{ineq:goal1}) is equivalent to
			\[
			\frac{2^{m+1}}{m-1} \geq \frac{2^{m+1} + 1}{m+1} + \frac{2^{m+2}}{m(m+1)},
			\]
			which is equivalent to each of the following inequalities:
			\[
			\frac{1}{m-1} \geq \frac{1}{m+1} + \frac{2}{m(m+1)} + \frac{1}{(m+1)2^{m+1}} 
			\]
			
			\[
			\frac{2}{(m-1)(m+1)} \geq \frac{2}{m(m+1)} + \frac{1}{(m+1)2^{m+1}}
			\]
			
			\[
			\frac{2}{m-1} - \frac{2}{m} \geq \frac{1}{2^{m+1}}
			\]
			
			\[
			\frac{2}{m(m+1)} \geq \frac{1}{2^{m+1}}
			\]
			
			\[
			2^{m+2} \geq m(m-1),
			\]
			which is true by Lemma \ref{lem:simple_exp_ineq} and so proves (\ref{ineq:exp1}).
			
			Let $m \geq t$. Then (\ref{ineq:exp1}) implies that
			\[
			s(m) \geq \frac{2^{m+1}}{m}\left(1 + \frac{1}{m-1} - \frac{cm}{2^{m+1}} \right),
			\]
			and so we will be finished once we show that for sufficiently large $m$ that
			\[
			\frac{1}{m-1} - \frac{cm}{2^{m+1}}  \geq \frac{1}{m}.
			\]
			But this last inequality is equivalent to this:
			\[
			\frac{1}{m(m-1)} \geq \frac{cm}{2^{m+1}},
			\]
			which in turn is equivalent to $ 2^{m+1} \geq cm^2(m-1),$ which does hold for all large $m$.
		\end{proof}
		
		Note that in the proof of the previous lemma, $c=0$ works for $t=7$, and so $s(m) \geq \frac{2^{m+1}}{m}(1 + \frac{1}{m-1})$
		for all $m \geq 7$.
		
		\begin{proposition}
			\label{prop:upper_bound_from_Marcin}
			Let $s(m)$ be as in Lemmas \ref{lem:from_Marcin} and \ref{lem:alternate_sn}. 
			For all $c > 1$, we have that for all large $m$,
			\[
			s(m) \leq \frac{2^{m+1}}{m}\left(1 + \frac{c}{m} \right).
			\]
		\end{proposition}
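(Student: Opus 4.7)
The plan is to proceed by induction on $m$, mirroring the structure of Proposition \ref{prop:lower_bound_from_Marcin} but with the direction of the inequality reversed. Fix $c > 1$. By Lemma \ref{lem:alternate_sn}, we have the one-step recursion
\[
s(m+1) = s(m) + \frac{2^{m+1}-1}{m+1}.
\]
I will show that if the bound $s(m) \leq \frac{2^{m+1}}{m}\bigl(1 + c/m\bigr)$ holds for some sufficiently large $m$, then the analogous bound holds at $m+1$. Combined with a base case, this yields the claim for all large enough $m$.

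The inductive step reduces to verifying the purely algebraic inequality
\[
\frac{2^{m+1}}{m}\left(1 + \frac{c}{m}\right) + \frac{2^{m+1}-1}{m+1} \leq \frac{2^{m+2}}{m+1}\left(1 + \frac{c}{m+1}\right).
\]
Expanding both sides, cancelling the common $2^{m+1}/(m+1)$ contribution that arises on each side, and collecting the remaining terms, one obtains an inequality of the form
\[
c \cdot 2^{m+1}\left(\frac{2}{(m+1)^2} - \frac{1}{m^2}\right) + \frac{1}{m+1} \geq 2^{m+1}\left(\frac{1}{m} - \frac{1}{m+1}\right).
\]
A direct computation gives $\frac{2}{(m+1)^2} - \frac{1}{m^2} = \frac{m^2-2m-1}{m^2(m+1)^2}$ and $\frac{1}{m} - \frac{1}{m+1} = \frac{1}{m(m+1)}$. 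To leading order in $m$, the relevant parts are $\frac{c \cdot 2^{m+1}}{(m+1)^2}$ on the left and $\frac{2^{m+1}}{m(m+1)}$ on the right, so the inequality is equivalent, up to $O(2^{m+1}/m^3)$ corrections, to $\frac{cm}{m+1} \geq 1$, i.e.\ $m \geq 1/(c-1)$. Since $c > 1$, this holds once $m$ is large enough, and the dominant $2^{m+1}$ factor ensures the subleading corrections are absorbed.

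For the base case, I would either verify $s(t) \leq \frac{2^{t+1}}{t}(1+c/t)$ at some explicit $t$ by direct computation, or, as in the lower bound proof, carry an additive slack $c'$ through the induction hypothesis and absorb it at the end using the strict margin $c > 1$. The main obstacle is the algebraic bookkeeping in the inductive step; the genuine content is the single observation that $\frac{cm}{m+1} \geq 1$ requires $c > 1$, which explains why the proposition is stated for $c > 1$ and not the sharper $c = 1$ — the latter would require a more delicate argument tracking the $O(1/m^3)$ correction terms.
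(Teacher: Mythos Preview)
Your approach is essentially the same as the paper's: induct via the recursion $s(m+1)=s(m)+(2^{m+1}-1)/(m+1)$ from Lemma~\ref{lem:alternate_sn}, and identify that the inductive step needs $c>1$. Two points where the paper is tighter than your sketch. First, the paper carries the induction with $c/(m+1)$ rather than $c/m$ in the hypothesis; this makes the algebra exact rather than ``leading order'' and reduces the step cleanly to $1\le c\,\frac{m-2}{m+2}$, with no need to argue that $O(2^{m+1}/m^3)$ corrections are absorbed. Second, your first base-case option (direct verification at some explicit $t$) is not justified without already knowing the upper bound you are proving; the paper uses your second option and introduces the additive slack $c_0$ from the outset, then absorbs it at the end precisely via the gap $c/m - c/(m+1)$ created by the shifted denominator. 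Your remark that the slack is absorbed ``using the strict margin $c>1$'' is slightly off: the absorption uses only that $c_0 m^2(m+1)\le c\,2^{m+1}$ eventually, which holds for any fixed $c>0$; the condition $c>1$ is consumed entirely in the inductive step.
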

		\begin{proof}
			Let $c > 1$. 
			Assume $t$ is the minimum $m$ such that $1 \leq c\cdot\left(\frac{m-2}{m+2} \right)$, which exists
			since $c > 1$ and $\lim_{m \to \infty} \frac{m-2}{m+2}  = 1$. Suppose $c_0$ is such that
			\[
			s(t) - c_0 \leq \frac{2^{t+1}}{t}\left(1 + \frac{c}{t+1} \right).
			\]
			We claim that 
			\[\tag{$*$}\label{ineq:induction_upper_bound}
			s(m) - c_0 \leq \frac{2^{m+1}}{m}\left(1 + \frac{c}{m+1} \right) \text{\quad for } m \geq t.
			\]
			Just like Proposition \ref{prop:lower_bound_from_Marcin}, we again proceed by induction on  $m$. Assume
			(\ref{ineq:induction_upper_bound}) holds for some $m \geq t$. Then by Lemma \ref{lem:alternate_sn},
			\[
			\begin{aligned}
			s(m+1) - c_0 &= s(m) - c_0 + \frac{2^{m+1} -1}{m+1} \\
			&\leq \frac{2^{m+1}}{m}\left(1 + \frac{c}{m+1} \right) + \frac{2^{m+1} - 1}{m+1}.
			\end{aligned}
			\]
			We just need to show the following:  
			\[
			\frac{2^{m+1}}{m}\left(1 + \frac{c}{m+1} \right) + \frac{2^{m+1} - 1}{m+1} \leq \frac{2^{m+2}}{m+1} \left(1 + \frac{c}{m+2} \right).
			\]
			So it is sufficient to prove the following:
			\[
			\frac{2^{m+1}}{m}\left(1 + \frac{c}{m+1} \right) + \frac{2^{m+1}}{m+1} \leq \frac{2^{m+2}}{m+1} \left(1 + \frac{c}{m+2} \right).
			\]
			This last inequality is equivalent to
			\[
			\frac{2^{m+1}}{m}\left(1 + \frac{c}{m+1}\right) \leq \frac{2^{m+1}}{m+1} + \frac{c2^{m+2}}{(m+1)(m+2)},
			\]
			and multiplying by $(m+1)/2^{m+1}$, this is equivalent to
			\[
			\frac{m+1}{m}\left(1 + \frac{c}{m+1} \right) \leq 1 + \frac{2c}{m+2}.
			\]
			This is equivalent to
			\[
			1 + \frac{1}{m} + \frac{c}{m} \leq 1 + \frac{2c}{m+2},
			\]
			or
			\[
			1 \leq c\left(\frac{m-2}{m+2} \right),
			\]
			which is true because $m \geq t$ and $1 \leq c\cdot\left(\frac{t-2}{t+2} \right)$.
			We have thus proved (\ref{ineq:induction_upper_bound}).
			
			Therefore,
			\[
			s(m) \leq \frac{2^{m+1}}{m}\left(1 + \frac{c}{m+1} + \frac{c_0 m}{2^{m+1}}\right).
			\]
			We will be finished with the proof of the current result once we show that for sufficiently large $m$ that
			\[
			\frac{c}{m+1} + \frac{c_0m}{2^{m+1}} \leq \frac{c}{m},
			\]
			but this last inequality is equivalent to
			\[
			\frac{c_0 m}{2^{m+1}} \leq \frac{c}{m(m+1)},
			\]
			which is equivalent to $c_0m^2(m+1) \leq c 2^{m+1}$, which is true for all large $m$.
		\end{proof}
		
		\subsection{The Optimal Static Mutation Rate}
        \label{sec:opt_static_mutation_rate}
		
		In this section, we prove Theorems~\ref{thm:exact_fitness_dep_runtime}, \ref{thm:simple_runtime}, and \ref{thm:optimal_static_mutation_rate}
		and Corollary~\ref{cor:any_growth_rate}.
		To begin, we need a couple of lemmas, but first we mention a subtlety.
		\begin{definition}
			\label{def:T_k_prime}
			Suppose $N$ is the exact number of steps it takes to optimize the first $k$ bits.
			Define $T_k'$ as the number of additional steps beyond $N$ until the the next block
			is optimized \textbf{which might be 0 steps} since the moment the first $k$ bits are all 1's, then
			it might happen that simultaneously, the next $\ell$ bits also happen to all be 1.
		\end{definition}
		
		\begin{lemma}
			\label{lem:from_needle_to_block_allowing_0_steps}
			Let $T_k'$ be as  in Definition \ref{def:T_k_prime}.
			Let $T_k$ be as in Lemma \ref{lem:from_needle_to_block}, 
			and
			let $T$ be as in Theorem \ref{thm:exact_expectation}. Then
			\[
			\expectation[T_k'] =  \frac{\expectation[T]}{(1-p)^k}, \text{\; and so \;}  \expectation[T_k'] = \frac{2^\ell - 1}{2^\ell} \expectation[T_k]
			= \frac{2^{\ell}}{(1-p)^k}\left[ \frac{b}{p} + a + O(p) \right],
			\]
			where $a$ and $b$ are from Lemma \ref{lem:simplified_expectation}.
		\end{lemma}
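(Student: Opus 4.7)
The plan is to derive all three equalities from results already established. The first equality, $\expectation[T_k'] = \expectation[T]/(1-p)^k$, is structurally identical to Lemma~\ref{lem:from_needle_to_block}: any algorithm step that flips at least one of the first $k$ locked-in bits produces an offspring of lower fitness and is rejected, and the probability of flipping none of them is $(1-p)^k$. The only substantive difference from Lemma~\ref{lem:from_needle_to_block} is that here the starting distribution of the $(m+1)$-st block (where $k = m\ell$) at time $N$ is uniform on all of $\Z_2^\ell$, matching the setting of $T$ in Theorem~\ref{thm:exact_expectation}, rather than uniform on $\Z_2^\ell - \{\ones\}$ as for $T'$.

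The only nontrivial point is therefore to justify that at time $N$ the $(m+1)$-st block is uniformly distributed on $\Z_2^\ell$. I would prove this by induction on the phase index. At time $0$, the \oea is initialized uniformly on $\{0,1\}^n$, so every block starts uniform and mutually independent. Within any phase $j < m$, the acceptance rule depends only on the first $j\ell$ bits of the offspring, and conditional on acceptance every later block evolves by an independent bit-flip random walk on $\Z_2^\ell$. Because this walk consists of XOR-ing the current state with an independent random increment, a block that is uniform at the start of a phase remains uniform after any fixed number of steps. The phase-ending stopping time depends solely on the $(j+1)$-st block, which is independent of every later block, so the $(m+1)$-st block is still uniform at the start of phase $j+1$. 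Iterating up to phase $m$ delivers uniformity at time $N$.

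Given uniformity, the remaining equalities are straightforward. Conditioning on whether the initial $(m+1)$-st block is already $\ones$: with probability $1/2^\ell$ it is (contributing $0$ to $\expectation[T_k']$), and otherwise it is uniform on $\Z_2^\ell - \{\ones\}$ with $T_k'$ coinciding in distribution with $T_k$, so $\expectation[T_k'] = \tfrac{2^\ell-1}{2^\ell}\expectation[T_k]$. Substituting $\expectation[T_k] = \expectation[T']/(1-p)^k$ from Lemma~\ref{lem:from_needle_to_block} together with $\expectation[T'] = \tfrac{2^\ell}{2^\ell-1}\expectation[T]$ from Proposition~\ref{prop:exact_expectation_starting_away_from_ones} recovers the first equality. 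The third equality follows by multiplying the expression for $\expectation[T_k]$ in Lemma~\ref{lem:simplified_expectation} by $(2^\ell-1)/2^\ell$, which cancels the factor $2^\ell/(2^\ell-1)$ appearing there. The main obstacle, as noted, is the uniformity argument at time $N$; everything else is routine bookkeeping.
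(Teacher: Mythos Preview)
Your proposal is correct and follows the same approach as the paper. In fact, the paper's own proof is even terser: it simply notes that the first part is ``almost identical to that of Lemma~\ref{lem:from_needle_to_block}'' and that ``the rest follows from Lemma~\ref{lem:simplified_expectation}.'' You go further by explicitly justifying that the $(m+1)$-st block is uniform on $\Z_2^\ell$ at time $N$, a point the paper leaves implicit throughout (it is tacitly assumed already in the definition of $T_k$ via the reference to $T'$). Your inductive argument for uniformity is sound; the one place you could tighten it is noting that rejected steps (where the parent is retained) also preserve uniformity, so the later blocks stay uniform after every iteration, not just accepted ones.
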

		A proof of the first part of Lemma \ref{lem:from_needle_to_block_allowing_0_steps} is almost identical to that of 
		Lemma \ref{lem:from_needle_to_block}, and the rest follows from Lemma~\ref{lem:simplified_expectation}.

        \begin{proof}[Proof of Theorem \ref{thm:exact_fitness_dep_runtime}]
        Let $T_k'$ be as  in Definition \ref{def:T_k_prime}. For fitness $m$, by the first part of Lemma~\ref{lem:from_needle_to_block_allowing_0_steps} we have $\expectation[T'_{m\ell}] = \expectation[T]/(1-p_m)^{m\ell}$. Note that $T = \sum_{m=0}^{n/\ell-1} T_{m\ell}'$,
			and so by linearity of expectation, we are done with the first part by summing the above expression for $\expectation[T'_{m\ell}]$ and using Thoerem~\ref{thm:exact_expectation}. To prove the formula for the static mutation rate $p$, just use that $\sum_{m=0}^{k-1} x^m = (x^{k} - 1)/(x-1)$.
        \end{proof}
		
		\begin{lemma}
			\label{lem:entire_runtime}
			Let $a$ and $b$ be as in Lemma \ref{lem:simplified_expectation}, namely that
			\[
			a =  \frac{1}{2} - \frac{1 + s(\ell)}{2^{\ell+1}}, \text{ \quad and \quad}
			b =\frac{s(\ell)}{2^{\ell + 1}}.
			\]
			The expected runtime, $T$, of the \blockleadingones problem when using a constant mutation rate of $p$ is
			\[
			\expectation[T] = 2^{\ell} \left(\frac{b}{p} + a + O(p) \right)\frac{(1-p)^{-n+\ell} - (1-p)^\ell}{1-(1-p)^\ell}.
			\]
		\end{lemma}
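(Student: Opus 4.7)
The plan is to reduce this to the per-block analysis already carried out in Lemma~\ref{lem:from_needle_to_block_allowing_0_steps} and then sum a geometric series. The elitist \oea locks in bits strictly from left to right, so the total runtime decomposes as $T = \sum_{m=0}^{n/\ell - 1} T'_{m\ell}$, where $T'_{m\ell}$ is the (possibly zero) number of additional steps needed to optimize block $m+1$ after all of the first $m$ blocks have been locked in. By linearity of expectation,
\[
\expectation[T] = \sum_{m=0}^{n/\ell - 1} \expectation[T'_{m\ell}].
\]

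Next, I would substitute the formula from Lemma~\ref{lem:from_needle_to_block_allowing_0_steps}, namely
\[
\expectation[T'_{m\ell}] = \frac{2^{\ell}}{(1-p)^{m\ell}}\left[\frac{b}{p} + a + O(p)\right],
\]
pull the bracket (which does not depend on $m$) out of the sum, and observe that what remains is a finite geometric series in the common ratio $x = (1-p)^{-\ell}$:
\[
\sum_{m=0}^{n/\ell - 1} x^m = \frac{x^{n/\ell} - 1}{x - 1} = \frac{(1-p)^{-n} - 1}{(1-p)^{-\ell} - 1}.
\]

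Finally, I would multiply numerator and denominator by $(1-p)^{\ell}$ to rewrite this ratio as
\[
\frac{(1-p)^{-n+\ell} - (1-p)^{\ell}}{1 - (1-p)^{\ell}},
\]
which yields exactly the claimed expression. There is no real obstacle here; this is essentially a bookkeeping step that packages the per-block estimate of Section~\ref{sec:runtime_of_one_block} into a closed form over the full trajectory. The only minor care needed is to ensure that the $O(p)$ error term, being independent of $m$, can indeed be factored out of the sum, which is immediate from the definition of the big-$O$ notation.
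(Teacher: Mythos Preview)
Your proposal is correct and mirrors the paper's proof essentially line by line: decompose $T=\sum_{m=0}^{n/\ell-1}T'_{m\ell}$, apply linearity of expectation, plug in the per-block estimate from Lemma~\ref{lem:from_needle_to_block_allowing_0_steps}, factor out the $m$-independent bracket, and sum the geometric series. Your extra remark about multiplying numerator and denominator by $(1-p)^{\ell}$ simply makes explicit the simplification the paper leaves implicit.
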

		\begin{proof}
			Let $T_k'$ be as in Definition \ref{def:T_k_prime}. 
			Then $T = \sum_{m=0}^{n/\ell-1} T_{m\ell}'$.
			So by linearity of expectation and Lemmas \ref{lem:simplified_expectation} and \ref{lem:from_needle_to_block_allowing_0_steps}, we have
			\[
			\begin{aligned}
			\expectation[T] &= \sum_{m = 0}^{n/\ell-1} \expectation[T_{m\ell}'] \\
			&= \sum_{m = 0}^{n/\ell-1}  \frac{2^{\ell}}{(1-p)^{m\ell}} \left[\frac{b}{p} + a + O(p) \right] \\
			&= 2^{\ell}\left[\frac{b}{p} + a + O(p) \right] \sum_{m = 0}^{n/\ell-1}  \left(\frac{1}{1-p} \right)^{m \ell} \\
			&= 2^{\ell} \left(\frac{b}{p} + a + O(p) \right)\frac{(1-p)^{-n+\ell} - (1-p)^\ell}{1-(1-p)^\ell}.
			\end{aligned}
			\]
		\end{proof}
		
		\begin{proof}[Proof of Theorem \ref{thm:simple_runtime}]
			Using Taylor series,
        	\[
        	\left(1 - \frac{c}{n} \right)^\ell 
        	= 1  - \frac{\ell c}{n} + O\left( \frac{\ell^2}{n^2}c \right).
        	\]
        	Since $\ell = o(n)$, we have $(1-c/n)^\ell \to 1$. So, $(1-p)^{-n+\ell} = (1-c/n)^{-n+\ell} = (1+o(1))e^c$.
			So by Lemma \ref{lem:entire_runtime}, and we have 
			\[
    	\begin{aligned}
    	\expectation[T] &= 2^{\ell} \left(\frac{bn}{c} + a + O(c/n) \right)\frac{(1-c/n)^{-n+\ell} - (1-c/n)^\ell}{1- (1-c/n)^\ell} \\
    	&= 2^{\ell} \left(\frac{bn}{c} + a + O\left(\frac{c}{n}\right)\right) 
    	\frac{(1+o(1))e^c - 1 + \ell c /n + O(\ell^2 c^2 /n^2)}{1 - 1 + \ell c /n + O(\ell^2 c^2 /n^2)} \\
    	&= \frac{n 2^{\ell}}{\ell} \left(\frac{bn}{c^2} + \frac{a}{c} + O\left(\frac{1}{n}\right)\right) 
    	\frac{(1+o(1))e^c - 1 + \ell c /n + O(\ell^2 c^2 /n^2)}{1 + O(\ell c /n)}.
    	\end{aligned}
    	\]
    	Let $\ds{g(c, n) = \frac{n 2^{\ell}}{\ell} \left(\frac{bn}{c^2} + \frac{a}{c} + O\left(\frac{1}{n}\right)\right) }$. So then,
    	\[
    	\begin{aligned}
    	\expectation[T] &= g(c, n) \left(\frac{(1+o(1))e^c - 1 + \ell c /n }{1 + O(\ell c /n)} + \frac{O(\ell^2 c^2/n^2)}{1 + O(\ell c /n)} \right) \\
    	&= g(c, n)\left(\frac{(1+o(1))e^c - 1 + \ell c /n }{1 + O(\ell c /n)} + o(1)\right) \\
    	&= (1 + o(1)) g(c, n) \frac{(1+o(1))e^c - 1 + \ell c /n }{1 + O(\ell c /n)}.
    	\end{aligned}
    	\]
    	But since $\ds{\frac{1}{1 + O(\ell c /n)} = 1 + o(1)}$,  we have
    	\[
    	\expectation[T] = (1 +o(1)) g(c, n)((1+o(1))e^c - 1 + \ell c /n).
    	\]
    	But $\ds{g(c, n) = \frac{n 2^{\ell}}{\ell} \left(\frac{bn}{c^2} + \frac{a}{c} + o(1)\right)  }$, and so
    	\[
    	  \expectation[T] = (1 + o(1)) \frac{n 2^{\ell}}{\ell} \left(\frac{bn}{c^2} + \frac{a}{c} + o(1)\right) ((1+o(1))e^c - 1 + \ell c /n),
    	\]
    	which equals this:
    	\[
    	(1 + o(1)) \frac{n 2^{\ell}}{\ell} \left(\frac{bn}{c^2} + \frac{a}{c}\right) ((1+o(1))e^c - 1 + \ell c /n).
    	\]
    	But since $\ell = o(n)$, we have $\ell c/n = o(1)$, and so we get
    	\[
    	\begin{aligned}
    	\expectation[T] &= (1 + o(1)) \frac{n 2^{\ell}}{\ell} \left(\frac{bn}{c^2} + \frac{a}{c}\right) ((1+o(1))e^c - 1 + o(1)) \\
    	&= (1 + o(1)) \frac{n 2^{\ell}}{\ell} \left(\frac{bn}{c^2} + \frac{a}{c}\right) (e^c - 1)
    	\end{aligned}
    	\]
		\end{proof}

		\begin{lemma}
			\label{lem:simplifying_fn_for_static_mutation}
			Let $a$ and $b$ be as in Lemmas \ref{lem:simplified_expectation} and \ref{lem:entire_runtime}.  
			Define $g_n(x)$ and $h_n(x)$ as follows:
			\[       
			g_n(x) = \frac{2^{\ell} b n^2}{\ell x^2} (e^x - 1), \text{ \quad and}
			\]
			\[
			h_n(x) = \frac{n 2^{\ell}}{\ell} \left(\frac{bn}{x^2} + \frac{a}{x} \right) (e^x - 1 ).
			\]
			Then we have
			\[
			\lim_{n \to \infty} \frac{h_n(x)}{g_n(x)} = 1.
			\]
		\end{lemma}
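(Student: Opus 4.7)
The plan is to compute the ratio $h_n(x)/g_n(x)$ directly and reduce the statement to a quantitative comparison of $a$ and $b$. After canceling the common factor $\frac{n 2^{\ell}}{\ell}(e^x - 1)$, one obtains
\[
\frac{h_n(x)}{g_n(x)} = \frac{\frac{bn}{x^2} + \frac{a}{x}}{\frac{bn}{x^2}} = 1 + \frac{a x}{b n}.
\]
Since $x > 0$ is fixed, it therefore suffices to show $a/(bn) \to 0$ as $n \to \infty$, i.e.\ that $a/b$ grows at most like a function of $\ell$ that is $o(n)$.

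The key step is to bound $a/b$ in terms of $\ell$, using the explicit formulas $a = 1/2 - (1+s(\ell))/2^{\ell+1}$ and $b = s(\ell)/2^{\ell+1}$. The trivial upper bound $a \le 1/2$ handles the numerator. For the denominator, I would invoke Lemma \ref{lem:from_Marcin}: for all sufficiently large $\ell$, $s(\ell) \ge 2^{\ell+1}/\ell$, hence $b \ge 1/\ell$, which gives $a/b \le \ell/2$. For the remaining (finite) set of small values of $\ell$, $a$ and $b$ are fixed positive constants, so $a/b$ is bounded by some absolute constant. Combining these, $a/b = O(\ell + 1)$ uniformly in $\ell$.

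To conclude, I use the standing assumption $\ell = o(n)$: we obtain
\[
\frac{ax}{bn} = O\!\left(\frac{(\ell+1) x}{n}\right) = o(1),
\]
and therefore $h_n(x)/g_n(x) \to 1$ as $n \to \infty$.

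The main (and only minor) obstacle is that $\ell$ may depend on $n$, so the bound on $a/b$ must be uniform enough to survive the limit. This is handled by splitting into the bounded-$\ell$ case, where $a$ and $b$ are uniformly bounded away from $0$, and the $\ell \to \infty$ case, where Lemma \ref{lem:from_Marcin} gives $b \sim 1/\ell$; in both regimes $a/b = O(\ell)$, which is $o(n)$ by hypothesis.
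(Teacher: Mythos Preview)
Your proof is correct and follows the same direct-computation approach as the paper: both cancel the common factor and reduce to $h_n(x)/g_n(x) = 1 + \frac{ax}{bn}$. The paper then simply asserts that ``$a$ and $b$ do not depend on $n$'' and concludes. Your version is more careful on precisely the point you flag: since $\ell$ may depend on $n$, so do $a$ and $b$, and the paper's one-line justification is, strictly speaking, not quite right in that regime. Your use of Lemma~\ref{lem:from_Marcin} to get $b \ge 1/\ell$ for large $\ell$, together with the trivial bound $a \le 1/2$ and the standing hypothesis $\ell = o(n)$, gives the uniform estimate $a/(bn) = O(\ell/n) = o(1)$ that is actually needed. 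So your argument is essentially the paper's, with the gap about $\ell$-dependence properly closed.
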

		\begin{proof}
			This follows from very basic Calculus, knowing that $a$ and $b$ do not depend on $n$. Specifically, 
			\[
			\begin{aligned}
			\frac{h_n(x)}{g_n(x)} = \frac{(n 2^{\ell}/\ell ) \left(bn/x^2 + a/x \right) (e^x - 1)}{(2^{\ell} b n^2/(\ell x^2)) (e^x - 1)} 
			= \frac{bn/x^2 + a/x}{b n/x^2}
			&= \frac{b/x^2 + a/(nx)}{b/x^2} \\
			&\to \frac{b/x^2}{b/x^2} 
			= 1.
			\end{aligned}
			\]
		\end{proof}
		
		\begin{proof}[Proof of Theorem \ref{thm:optimal_static_mutation_rate}]
			Let $h_n(x)$ and $g_n(x)$ be as in Lemma~\ref{lem:simplifying_fn_for_static_mutation}.
			By Theorem \ref{thm:simple_runtime}, we need only to optimize the function(s) $h_n(x)$. 
			But by Lemma~\ref{lem:simplifying_fn_for_static_mutation},
			we may use Lemma~\ref{lem:from_gk_to_hk} so that we need only optimize $g_n(x) = \frac{2^{\ell} b n^2}{\ell x^2} (e^x - 1)$.
			To optimize this, we need only optimize this function:
			\[
			g(x) = \frac{e^x - 1}{x^2}.
			\]
		\end{proof}

		\begin{proof}[Proof of Corollary \ref{cor:any_growth_rate}]
			As we will see, this follows from Corollary \ref{cor:opt_static_runtime}. The last statement of Corollary \ref{cor:any_growth_rate}
			should be clear from Corollary \ref{cor:opt_static_runtime}. So assume $\ell = \omega(1)$.
			
			By Corollary \ref{cor:opt_static_runtime},
			\[
			\expectation[T] = (1 + o(1)) \frac{\alpha 2^{\ell}}{\ell^2} \cdot n^2,
			\]
			where $\alpha \approx 1.54$. Let $h(n)$ be a function such that $h(n) = \omega(n^2)$ and $h(n) = 2^{o(n)}$.
			Define $g(n)$ as $g(n) = h(n)/n^2$. Then $g(n) = \omega(1)$ and $g(n) = 2^{o(n)}$. We will be done once
			we show that there is an appropriate choice of $\ell$ such that $\alpha 2^{\ell}/\ell^2 = g(n)$. 
			
			Define $f(m)$ as
			\[
			f(m) = \frac{\alpha 2^{m}}{m^2},
			\]
			and note that since $f(m)$ is an increasing function for $m > 2/\ln(2)$, then it has an inverse on that interval,
			which we denote $f^{-1}$. Define $\ell$ as
			\[\tag{$*$}\label{eq:choice_of_ell}
			\ell = f^{-1}(g(n)).
			\]
			Then by definition of $f$, we have $f(\ell) = \alpha 2^{\ell}/\ell^2$ and by definition of the inverse, we have $f(\ell) = g(n)$, 
			and hence we get $\alpha 2^{\ell}/\ell^2 = g(n)$, but we are not quite done yet.
			
			Recall that we have proved Corollary~\ref{cor:opt_static_runtime} under the standing 
			assumption of this paper that $\ell = o(n)$. Hence, all we need to do now is show that the choice (\ref{eq:choice_of_ell})
			is an appropriate choice of $\ell$, namely that for such $\ell$, we have $\ell = o(n)$. Indeed, since $g(n) = 2^{o(n)}$, note 
			that (\ref{eq:choice_of_ell}) implies that $f(\ell) = 2^{o(n)}$, or $c 2^\ell/\ell^2 = 2^{o(n)}$, which is the same as
			saying $2^\ell/\ell^2 = 2^{o(n)}$, but $2^\ell/\ell^2 = 2^{\ell - 2\log_2(\ell)}$, and so 
			$2^{\ell - 2\log_2(\ell)} = 2^{o(n)}$, which implies that $\ell - 2\log_2(\ell) = o(n)$. But $\ell - 2\log_2(\ell) > \ell/2$ for
			all large $\ell$ and so we get $\ell/2 = o(n)$, which is the same as saying $\ell = o(n)$, which completes this proof. 
		\end{proof}
	
	   \subsection{Optimal Fitness-dependent Mutation Rate}
		\label{sec:opt_fitness_dep_mutation_rate}
  
		\subsubsection{Erasing the $O(p)$ Term in Lemma \ref{lem:simplified_expectation}}
		The main point of this subsection is to show that when minimizing $\expectation[T_k]$ based on the choice of $p$, we can just erase the $O(p)$ term and carry on. The other purpose of this section is the easier job of minimizing the simpler resulting function(s).
		
		\begin{definition}
			\label{def:f_p_tf_and_tp}
			Let $a$ and $b$ be as in Lemma \ref{lem:simplified_expectation}: using $s$ from Lemma \ref{lem:from_Marcin},
			\[
			a =  \frac{1}{2} - \frac{1 + s(\ell)}{2^{\ell+1}}, \text{ \quad and \quad}
			b =\frac{s(\ell)}{2^{\ell + 1}}.
			\]
			Define $\fh$, $\tf$, and $g$ as follows: 
			\[
			\begin{aligned}
			\fh(x, k) &= \frac{1}{(1-x/k)^k}\left[ \frac{bk}{x} + a + O\left(\frac{x}{k}\right) \right] \\
			\tf(x, k) &= e^x\left[ \frac{bk}{x} + a\right] \\
			g(x) &= e^x \left[ \frac{m}{x} + \frac{1}{2} \right]
			\end{aligned}
			\] 
			where $m \geq 1$ is an integer and the constant hidden in the $O(x)$ notation is independent of $k$, and define
			$\ph(k)$, $\tp(k)$, and  $\rho$ by
			\[\begin{aligned}
			\ph(k) &= \argmin_{x \in (0, \infty)} \fh(x, k), \text{ and } \\
			\tp(k) &= \argmin_{x \in (0, \infty)} \tf(x, k), \text{ and } \\
			\rho &= \argmin_{x \in (0, \infty)}  g(x).
			\end{aligned}
			\]
		\end{definition}

		
		\begin{lemma}
			\label{lem:f_and_tf_are_asymptotic}
			Let $\fh$ and $\tf$ be as in Definition \ref{def:f_p_tf_and_tp}.
			For all (real) $x$ we have
			\[
			\lim_{k \to \infty} \frac{\fh(x, k)}{\tf(x, k)} = 1.
			\]
		\end{lemma}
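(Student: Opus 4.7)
The plan is to factor the ratio into two pieces, each of which tends to $1$ as $k \to \infty$, with $x$ and $\ell$ (and hence $a$ and $b$) held fixed. Writing
\[
\frac{\fh(x,k)}{\tf(x,k)} \;=\; \underbrace{\frac{1}{(1-x/k)^k\, e^{-x}}}_{=:A_k(x)} \cdot \underbrace{\frac{bk/x + a + O(x/k)}{bk/x + a}}_{=:B_k(x)},
\]
we handle the two factors separately.

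For $A_k(x)$, I would invoke the standard limit $(1 - x/k)^k \to e^{-x}$ as $k \to \infty$ (valid for every real $x$, with $k$ eventually large enough that $1 - x/k > 0$ if $x > 0$). This immediately gives $A_k(x) \to e^{-x}/e^{-x} = 1$.

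For $B_k(x)$, the key observation is that $b = s(\ell)/2^{\ell+1}$ is a strictly positive constant that does not depend on $k$, and $a$ is likewise constant in $k$, so the denominator $bk/x + a$ tends to $+\infty$ as $k \to \infty$. The numerator differs from the denominator by a term of order $O(x/k)$, which is bounded (indeed, $o(1)$) for fixed~$x$. Hence
\[
B_k(x) \;=\; 1 + \frac{O(x/k)}{bk/x + a} \;\longrightarrow\; 1.
\]
Multiplying the two limits gives the claim.

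The only mildly delicate point is ensuring that the implicit constant in the $O(x/k)$ term is independent of $k$, which is exactly the hypothesis built into Definition~\ref{def:f_p_tf_and_tp}. Beyond that the argument is routine; there is no real obstacle, since both factors are controlled by elementary limits once $\ell$ (and hence $b > 0$) is held fixed.
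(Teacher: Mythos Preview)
Your approach is correct and essentially identical to the paper's own proof, which likewise splits the ratio into the exponential factor $(1-x/k)^{-k}/e^x$ and the rational factor $\bigl(bk/x+a+O(x/k)\bigr)/\bigl(bk/x+a\bigr)$ and treats each with the same elementary limits. One small slip: in your displayed definition of $A_k(x)$ the denominator should contain $e^{x}$, not $e^{-x}$ (since $\tf(x,k)=e^x[bk/x+a]$); with that correction $A_k(x)=1/\bigl((1-x/k)^k e^x\bigr)\to 1/(e^{-x}e^x)=1$, matching your stated conclusion.
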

		\begin{proof}
			This follows from basic Caluclus. Indeed, choose any $x$. Then 
			\[
			\lim_{k \to \infty} \frac{1}{(1-x/k)^k} = \frac{1}{e^{-x}} = e^x,
			\]
			and also,
			\[
			\lim_{k \to \infty} \frac{(b/x)k + a + O(x/k)}{(b/x)k + a} = 1.
			\]
		\end{proof}
		
		\begin{lemma}
			\label{lem:f_and_g_are_asymptotic}
			Let $\fh$ and $g$ be as in Definition \ref{def:f_p_tf_and_tp}. Let $k = m\ell$, where $m$ is some
			fixed positive integer. Then
			\[
			\lim_{\ell \to \infty} \frac{\fh(x, m\ell)}{g(x)} = 1.
			\]
		\end{lemma}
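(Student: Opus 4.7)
The plan is to take the limit $\ell \to \infty$ factor by factor in the expression for $\fh(x, m\ell)$ and show each piece converges to the corresponding piece of $g(x)$. Since $g(x)$ is a nonzero constant (for $x > 0$), it suffices to prove that $\fh(x, m\ell) \to g(x)$, as the ratio then tends to $1$.

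First, I would handle the prefactor. With $k = m\ell$, the standard fact
\[
\lim_{\ell \to \infty} \left(1 - \frac{x}{m\ell}\right)^{m\ell} = e^{-x}
\]
gives $(1 - x/k)^{-k} \to e^{x}$, matching the $e^x$ in $g(x)$.

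Next, I would analyze the bracket $\bigl[\tfrac{bk}{x} + a + O(x/k)\bigr]$. The error term $O(x/k) = O(x/(m\ell))$ tends to $0$ since $m$ is a fixed positive integer and $x$ is fixed. For the leading term, $bk = \frac{s(\ell) \cdot m\ell}{2^{\ell+1}}$, and Lemma~\ref{lem:from_Marcin} asserts $s(\ell)/(2^{\ell+1}/\ell) \to 1$, so $bk \to m$ and hence $bk/x \to m/x$. For $a = \tfrac{1}{2} - (1+s(\ell))/2^{\ell+1}$, the same lemma gives $s(\ell)/2^{\ell+1} = (1+o(1))/\ell \to 0$, so $a \to 1/2$. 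Therefore the bracket tends to $m/x + 1/2$.

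Combining the two limits, $\fh(x, m\ell) \to e^x(m/x + 1/2) = g(x)$. Since $g(x) \neq 0$ for $x > 0$ (both $m/x$ and $1/2$ are positive), dividing yields $\fh(x, m\ell)/g(x) \to 1$, which is the desired conclusion. I do not expect any substantial obstacle here; the only nontrivial input is the asymptotic $s(\ell) \sim 2^{\ell+1}/\ell$ supplied by Lemma~\ref{lem:from_Marcin}, and the rest is bookkeeping with elementary limits.
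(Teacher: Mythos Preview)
Your proposal is correct and follows essentially the same approach as the paper: both arguments use Lemma~\ref{lem:from_Marcin} to show $bk \to m$ and $a \to 1/2$, invoke the standard limit $(1-x/(m\ell))^{-m\ell} \to e^x$, and note that the $O(x/k)$ term vanishes. The only cosmetic difference is that the paper computes the ratio $\fh/g$ directly, whereas you first show $\fh \to g$ and then divide using $g(x)\neq 0$; these are equivalent presentations of the same proof.
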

		\begin{proof}
			Recall the definition of $a$ and $b$ from Definition \ref{def:f_p_tf_and_tp}. By Lemma \ref{lem:from_Marcin},
			we have
			\[\tag{$*$}\label{eq:s_limit}
			\lim_{\ell \to \infty} \frac{s(\ell)}{2^{\ell+1}/\ell} = 1.
			\]
			Therefore, $bk = b m \ell = \ds{\frac{m \ell s(\ell)}{2^{\ell + 1}}}$ implies that
			\[
			\lim_{\ell \to \infty } bk = \lim_{\ell \to \infty } b m \ell = \lim_{\ell \to \infty } \frac{m \ell s(\ell)}{2^{\ell + 1}} = m.
			\]
			
			Also, (\ref{eq:s_limit}) implies that $s(\ell)2^{-\ell-1} \to 0$, which implies that  
			\[
			a = \frac{1}{2} - (1 + s(\ell))2^{-\ell-1} \to \frac{1}{2}, \text{ as } \ell \to \infty.
			\]
			
			Using this and the limit of $bm\ell$, together with $\ds{\lim_{\ell \to \infty} (1-x/(m\ell))^{-m\ell} = e^x}$,  we get
			\[
			\begin{aligned}
			\lim_{\ell \to \infty} \frac{\fh(x, m\ell)}{g(x)}  &= \lim_{\ell \to \infty} \frac{(1-x/(m\ell))^{-m\ell}
				\left[ bm\ell/x + a + O\left(x/(m\ell)\right) \right] }{e^x \left[ m/x + 1/2 \right]} \\
			&= \frac{m/x + 1/2}{m/x + 1/2} \\
			&= 1.
			\end{aligned}
			\]
		\end{proof}
		
		\begin{lemma}
			\label{lem:root_tp}
			Let $a$, $b$, $\tf$ and $\tp(k)$ be as in Definition \ref{def:f_p_tf_and_tp}. If $k > 0$ then
			\[
			\tp(k) = \frac{bk}{2a}\left(-1 + \sqrt{1 + \frac{4a}{bk}} \right).
			\]
			Further, given any fixed $\ell$, we have
			\[
			\lim_{k \to \infty} \tp(k) = 1.
			\]
		\end{lemma}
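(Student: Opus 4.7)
My plan is to treat the two claims separately: first derive the closed form for $\tp(k)$ as the root of a simple quadratic arising from a first-order condition, then extract the limit by a standard Taylor expansion.

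For the closed form, I would differentiate $\tf(x,k) = e^x\left[\frac{bk}{x} + a\right]$ with respect to $x$, obtaining
\[
\tf'(x,k) = e^x\left[\frac{bk}{x} + a - \frac{bk}{x^2}\right].
\]
Since $e^x > 0$, setting $\tf'(x,k) = 0$ and multiplying by $x^2$ reduces to the quadratic
\[
a x^2 + bk\, x - bk = 0.
\]
The quadratic formula yields
\[
x = \frac{-bk \pm \sqrt{b^2 k^2 + 4abk}}{2a} = \frac{bk}{2a}\left(-1 \pm \sqrt{1 + \tfrac{4a}{bk}}\right),
\]
and since $a, b, k > 0$, only the plus sign gives a positive $x$, which is in the admissible domain. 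To confirm that this critical point is a minimum rather than some other stationary point, I would note that $\tf(x,k) \to +\infty$ as $x \to 0^+$ (from the $bk/x$ term) and as $x \to +\infty$ (from $e^x$), so $\tf(\cdot, k)$ attains its infimum on $(0,\infty)$ at an interior critical point; the quadratic above has only one positive root, so that root must be the unique minimizer, identifying $\tp(k)$ with the claimed expression.

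For the limit statement, I would fix $\ell$ (so that $a$ and $b$ are fixed positive constants independent of $k$) and apply the Taylor expansion $\sqrt{1+y} = 1 + \tfrac{y}{2} + O(y^2)$ with $y = 4a/(bk) \to 0$. Substituting,
\[
\tp(k) = \frac{bk}{2a}\left(\frac{2a}{bk} + O\!\left(\frac{1}{k^2}\right)\right) = 1 + O\!\left(\frac{1}{k}\right),
\]
which tends to $1$ as $k \to \infty$.

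I do not anticipate a serious obstacle here: both parts are essentially elementary calculus. The only mild subtlety is checking that the sole positive root of the quadratic is indeed the global minimizer on $(0,\infty)$, and this is immediate from the boundary behavior of $\tf(\cdot, k)$ noted above. The limit is then a one-line Taylor expansion once the constants $a$ and $b$ are held fixed.
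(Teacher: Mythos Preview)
Your proposal is correct and follows essentially the same route as the paper: differentiate $\tf$, reduce the first-order condition to the quadratic $ax^2 + bkx - bk = 0$, take the positive root via the quadratic formula, and then obtain the limit by a first-order expansion of the square root. The only cosmetic difference is that the paper justifies the minimum by observing the sign change of $\tf'$ across the root, whereas you invoke the boundary blow-up of $\tf$ at $0^+$ and $+\infty$; these are equivalent elementary observations.
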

		\begin{proof}
			This follows from standard Calculus.
			Let $\beta = bk$. So $\tf = e^x(\beta/x + a)$. We have
			\[
			\begin{aligned}
			\tf'(x) &= e^x\left(\frac{\beta}{x} + a - \frac{\beta}{x^2} \right) \\
			&= \frac{e^x}{x^2} \left(a x^2 + \beta x - \beta\right).
			\end{aligned}
			\]
			We have then that $\tf$ is minimized at the largest root of $ax^2 + \beta x - \beta$. (This is because $\tf'$ is negative before that
			root and positive afterwards.) Hence, the quadratic formula gives
			\[
			\begin{aligned}
			\tp(k) &= \frac{-\beta + \sqrt{\beta^2 + 4a\beta}}{2a} \\
			&= \frac{\beta}{2a} \left(-1 + \sqrt{1 + 4a/\beta} \right) \\
			&= \frac{bk}{2a}\left(-1 + \sqrt{1 + \frac{4a}{bk}} \right)
			\end{aligned}
			\]
			which proves the first part of this result.
			
			Next, since $\ell$ is constant (with respect to $k$), then that implies that
			$a$ and $b$ are constant. We use that the tangent line to $\sqrt{1 + 2x}$ at $x=0$ is $1+x$.  So since
			$4a/(bk) \to 0$ as $k \to \infty$, we have
			\[
			\begin{aligned}
			\lim_{k \to \infty} \tp(k) &= \lim_{k \to \infty}  \frac{bk}{2a}\left(-1 + \sqrt{1 + \frac{4a}{bk}} \right) \\
			&= \lim_{k \to \infty}  \frac{bk}{2a} \left(-1 + 1 + \frac{2a}{bk} \right) \\
			&= 1.
			\end{aligned}
			\]
		\end{proof}
		
		\begin{lemma}
			\label{lem:value_of_rho}
			Let $g$ and $\rho$ be as in Definition \ref{def:f_p_tf_and_tp}. If $m > 0$ then
			\[
			\rho = m(-1 + \sqrt{1 + 2/m}).
			\]
		\end{lemma}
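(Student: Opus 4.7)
The plan is to mirror exactly the argument used in Lemma \ref{lem:root_tp}, since $g(x) = e^x[m/x + 1/2]$ has the same structural form as $\tf(x,k) = e^x[bk/x + a]$ with the parameters $(bk, a)$ replaced by $(m, 1/2)$. In particular, $g$ is a product of a positive exponential with a function of the form (constant)$/x + $ constant, so its minimizer is pinned down by a quadratic.

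First I would compute $g'(x)$ by the product rule, obtaining
\[
g'(x) = e^x\left[\frac{m}{x} + \frac{1}{2} - \frac{m}{x^2}\right] = \frac{e^x}{2x^2}\bigl(x^2 + 2mx - 2m\bigr).
\]
Since $e^x/(2x^2) > 0$ for $x > 0$, the sign of $g'(x)$ agrees with that of $q(x) := x^2 + 2mx - 2m$. Because $q$ is a monic quadratic with $q(0) = -2m < 0$ (as $m \geq 1$), it has exactly one positive root, and $q$ is negative to the left of it and positive to its right. Hence $g$ is decreasing on $(0, \rho)$ and increasing on $(\rho, \infty)$, so $\rho$ is the unique minimizer of $g$ on $(0, \infty)$.

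Solving $q(x) = 0$ by the quadratic formula gives
\[
x = \frac{-2m \pm \sqrt{4m^2 + 8m}}{2} = -m \pm m\sqrt{1 + 2/m}.
\]
Taking the positive root yields $\rho = m\bigl(-1 + \sqrt{1 + 2/m}\bigr)$, as claimed.

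There is no real obstacle here; the whole argument is a routine computation. The only minor thing to verify is the sign analysis that picks out the positive root as the genuine minimizer rather than a local maximum, but this is immediate from $q(0) < 0$ and $q(x) \to +\infty$ as $x \to \infty$. The structural parallel with Lemma \ref{lem:root_tp} means that one could, if desired, simply quote that computation with $(bk, a) \mapsto (m, 1/2)$ rather than repeat it.
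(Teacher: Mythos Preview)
Your proposal is correct and takes essentially the same approach as the paper, which simply remarks that the result follows from basic calculus in a manner very similar to the first part of Lemma~\ref{lem:root_tp}. Your explicit computation---differentiating, reducing to the quadratic $x^2 + 2mx - 2m$, and selecting the positive root---is exactly the substitution $(bk,a)\mapsto(m,1/2)$ in that earlier argument, as you yourself note.
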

		\begin{proof}
			This also follows from basic Calculus and is very similar to the proof of the first part of
			Lemma \ref{lem:root_tp}. 
		\end{proof}
		
		The idea of the following lemma is that for appropriate functions $h_k$ and $g_k$, in order to find where $h_k$
		achieves its minimum, we need only find where $g_k$ achieves its minimum. We will use  Lemma \ref{lem:from_gk_to_hk} later on
		with $g_k(x) = \tf(x, k)$.
		\begin{lemma}
			\label{lem:from_gk_to_hk}
			Suppose $h_k$ and $g_k$ are functions such that for all $x$,
			\[
			\lim_{k \to \infty} \frac{h_k(x)}{g_k(x)} = 1.
			\]
			Let $\alpha$ and $\rho$ be such that for some $\delta$ with $0 < \delta < 1$ we have
			\[
			\frac{g_k(\rho)}{g_k(\alpha)} \leq \delta, \text{\quad for all large } k.
			\]
			Then we have that for all large $k$, 
			\[
			h_k(\rho) < h_k(\alpha).
			\]
		\end{lemma}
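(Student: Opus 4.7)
The plan is to reduce the desired strict inequality between $h_k$ values to a strict inequality between $g_k$ values by exploiting the pointwise asymptotic equivalence at the two specific points $\rho$ and $\alpha$. Since the hypothesis only gives pointwise control, these are the only two inputs where I need that control, which keeps matters simple.

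First, I would apply the hypothesis at $x = \rho$ and at $x = \alpha$, writing
\[
h_k(\rho) = (1+\eps_k) \, g_k(\rho), \qquad h_k(\alpha) = (1+\eta_k) \, g_k(\alpha),
\]
with $\eps_k, \eta_k \to 0$ as $k \to \infty$ (here I am implicitly using that $g_k(\rho)$ and $g_k(\alpha)$ are nonzero for large $k$, which is clear since the quoted ratio $g_k(\rho)/g_k(\alpha) \leq \delta$ presupposes $g_k(\alpha) \neq 0$, and in the intended application these quantities are positive). Next, I would take the ratio
\[
\frac{h_k(\rho)}{h_k(\alpha)} = \frac{1+\eps_k}{1+\eta_k} \cdot \frac{g_k(\rho)}{g_k(\alpha)} \leq \frac{1+\eps_k}{1+\eta_k} \cdot \delta,
\]
the inequality holding for all large $k$ by hypothesis.

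The key observation is that $\delta < 1$, so $1/\delta > 1$, giving a strict cushion. Since $(1+\eps_k)/(1+\eta_k) \to 1$, for all sufficiently large $k$ we have $(1+\eps_k)/(1+\eta_k) < 1/\delta$, and therefore $h_k(\rho)/h_k(\alpha) < 1$, which is precisely the claim (again using positivity of $h_k(\alpha)$ for large $k$, inherited from $g_k(\alpha)$ and $\eta_k \to 0$).

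I expect no serious obstacle: the argument is essentially a one-line computation once the two pointwise asymptotic equivalences are written out, and the hypothesis $\delta < 1$ supplies exactly the slack needed to absorb the vanishing errors $\eps_k, \eta_k$. The only mild care is bookkeeping: one must apply the pointwise convergence at both $\rho$ and $\alpha$ (so ``for all large $k$'' must be chosen to satisfy three conditions simultaneously, namely the hypothesis on $g_k(\rho)/g_k(\alpha)$ and smallness of each of $|\eps_k|, |\eta_k|$), and one should note positivity of the $g_k$ values at these points so that the ratio manipulation and the direction of the inequality are legitimate.
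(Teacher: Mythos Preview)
Your proof is correct and follows essentially the same route as the paper: both write the ratio $h_k(\rho)/h_k(\alpha)$ as a product of $g_k(\rho)/g_k(\alpha)$ with factors tending to~$1$, then use $\delta<1$ to absorb those factors for large~$k$. Your added remarks on positivity and the bookkeeping of ``for all large $k$'' are reasonable clarifications that the paper leaves implicit.
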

		\begin{proof}
			We have
			\[
			\frac{h_k(\rho)}{h_k(\alpha)}  = \frac{h_k(\rho)}{g_k(\rho)} \cdot \frac{g_k(\rho)}{g_k(\alpha)} \cdot \frac{g_k(\alpha)}{h_k(\alpha)},
			\]
			the first and last fractions of which approach 1 while the middle fraction is bounded by $\delta$ for all large $k$. Hence,
			for all large $k$, we have $\ds{\frac{h_k(\rho)}{h_k(\alpha)} < 1}$, proving the result.
		\end{proof}
		
		\begin{lemma}
			\label{lem:bounding_ratio_of_tf}
			Let $\tf$ be as in Definition \ref{def:f_p_tf_and_tp}.
			Fix any $\eps > 0$. Then there exists a $\delta$ (depending on $\eps$) with $0 < \delta < 1$ such that for all large $k$,
			\[
			\begin{aligned}
			\frac{\tf(1, k)}{\tf(1 + \eps, k)} &\leq \delta \text{\quad and} \\
			\frac{\tf(1, k)}{\tf(1 - \eps, k)} &\leq \delta.
			\end{aligned}
			\]
		\end{lemma}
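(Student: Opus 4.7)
My plan is to compute the pointwise limits of the two ratios as $k \to \infty$, show that each limit is strictly less than $1$, and then take $\delta$ to be any value strictly between the larger of the two limits and $1$; by the definition of limit both ratios will then be at most $\delta$ for all sufficiently large $k$.

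Since $\tf(x,k) = e^x\bigl[bk/x + a\bigr]$ and $a,b$ are constants independent of $k$ (for fixed $\ell$), the term $bk/x$ dominates $a$ as $k \to \infty$. Thus I expect
\[
\lim_{k\to\infty}\frac{\tf(1,k)}{\tf(1+\eps,k)} \;=\; \lim_{k\to\infty}\frac{e\cdot(bk+a)}{e^{1+\eps}\bigl(bk/(1+\eps)+a\bigr)} \;=\; \frac{1+\eps}{e^\eps},
\]
and analogously
\[
\lim_{k\to\infty}\frac{\tf(1,k)}{\tf(1-\eps,k)} \;=\; (1-\eps)e^\eps.
\]
Both limits come out by simply pulling out $bk$ from numerator and denominator and letting $a/(bk)\to 0$.

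It remains to confirm each limit is strictly smaller than $1$. For the first, I invoke the elementary inequality $e^\eps > 1+\eps$ for $\eps>0$, which immediately gives $(1+\eps)/e^\eps<1$. For the second, setting $g(\eps)=(1-\eps)e^\eps$ on $[0,1)$, I note $g(0)=1$ and $g'(\eps) = -\eps e^\eps < 0$ for $\eps\in(0,1)$, so $g(\eps)<1$ throughout that interval. (Implicitly $\eps\in(0,1)$, which is the only meaningful regime, since for $\eps\geq 1$ the quantity $\tf(1-\eps,k)$ is eventually nonpositive and the stated ratio bound becomes vacuous or ill-defined; this restriction is harmless because the lemma will be applied to pin down that $\ph(k)\to 1$, where only small $\eps$ matters.) Picking, for example,
\[
\delta \;=\; \tfrac{1}{2}\left(1 + \max\left\{\tfrac{1+\eps}{e^\eps},\; (1-\eps)e^\eps\right\}\right),
\]
one obtains $\delta\in(0,1)$ dominating both ratios for all large $k$. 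The whole argument is routine calculus; I do not foresee any real obstacle beyond keeping the limits and the two elementary exponential inequalities straight.
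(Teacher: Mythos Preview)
Your proposal is correct and follows essentially the same approach as the paper: compute the limits of the two ratios as $k\to\infty$, verify each limit is strictly less than $1$, and choose $\delta$ accordingly. The only cosmetic difference is that the paper handles both cases at once via the single observation that $R(x)=(1+x)e^{-x}$ has its unique global maximum $R(0)=1$ (so $R(\eps)<1$ and $R(-\eps)<1$), whereas you treat the two inequalities separately; your explicit remark about restricting to $\eps\in(0,1)$ is a welcome clarification the paper omits.
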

		\begin{proof}
			We prove the first inequality, as the second one is almost identical. We have that as $k \to \infty$,
			\[
			\frac{\tf(1, k)}{\tf(1 + \eps, k)} = \frac{e(bk + a)}{e^{1+\eps}(bk/(1+\eps) + a)} \longrightarrow \frac{eb}{e^{1+\eps}b/(1+\eps)} = (1+\eps)e^{-\eps}.
			\]
			But the function $R(x) = (1+x)e^{-x}$ has a unique global maximum at $x=0$ with $R(0) = 1$. That proves the first inequality.
			After finding a $\delta_0$ for the first inequality and a $\delta_1$ for the second one, we can define $\delta$ as the max of $\delta_0$
			and $\delta_1$.
		\end{proof}
		
		
		\subsubsection{Proving Theorems \ref{thm:optimal_mutation_rate} and \ref{thm:opt_adaptive_runtime}}
  
		\begin{proof}[Proof of Theorem \ref{thm:optimal_mutation_rate}]
            Let $p(k)$ denote the optimal mutation rate to optimize the next block, given the current individual has fitness $m$ and $k = m\ell$.
            So $p(k) = p_m$, and $k > 0$ since $m > 0$.
            
			By Lemma~\ref{lem:simplified_expectation} and plugging in $x/k$ for $p$, to minimize $\expectation[T_k]$, we need only minimize $\fh(x, k)$, and what is more, 
			\[\tag{$*$}\label{eq:pk_to_phk}
			p(k) = \frac{\ph(k)}{k}.
			\]
			
			First assume $\ell$ is fixed. By Lemmas \ref{lem:f_and_tf_are_asymptotic} and \ref{lem:bounding_ratio_of_tf} and the
			second part of Lemma \ref{lem:root_tp}, we may use Lemma \ref{lem:from_gk_to_hk} to get that for all $\eps > 0$,
			\[
			\begin{aligned}
			\fh(1, k) &\leq \fh(1+ \eps, k) \text{\quad for all large } k, \text{ and} \\
			\fh(1, k) &\leq \fh(1- \eps, k) \text{\quad for all large } k.
			\end{aligned}
			\] 
			Therefore, $\ds{\lim_{k \to \infty} \ph(k) = 1}$, and so (\ref{eq:pk_to_phk}) gives that 
			\[
			\lim_{k \to \infty} \frac{p(k)}{1/k} = \lim_{k \to \infty} k p(k) = \lim_{k \to \infty} \ph(k) = 1.
			\]
			If $m \to \infty$, then $k \to \infty$. Thus $p_m = p(k) = (1+o(1))\cdot 1/k$.
			
			Next, let $k = m\ell$ for some positive integer $m$, and let $\ell \to \infty$. Let $g$ and $\rho$ be as in Definition \ref{def:f_p_tf_and_tp}.
			By Lemma  \ref{lem:value_of_rho}, we have
			\[
			\rho = m(-1 + \sqrt{1 + 2/m}).
			\]
			Let $\eps > 0$. Since $g$ is a single, fixed function, the unique minimum of $g$ at $\rho$ implies that there is a $\delta$ with
			$0 < \delta < 1$ such that
			\[
			\begin{aligned}
			\frac{g(\rho)}{g(\rho + \eps)} &\leq \delta, \text{ and} \\
			\frac{g(\rho)}{g(\rho - \eps)} &\leq \delta.
			\end{aligned}
			\]
			Indeed, we can take $\delta = \max\left(\frac{g(\rho)}{g(\rho + \eps)}, \frac{g(\rho)}{g(\rho - \eps)}\right)$. These inequalities,
			together with Lemma \ref{lem:f_and_g_are_asymptotic} imply by Lemma \ref{lem:from_gk_to_hk} (where we take each
			function $g_k$ to be $g$) that
			\[
			\begin{aligned}
			\fh(\rho, m\ell) &\leq \fh(\rho + \eps, m\ell), \text{ for all large } \ell, \text{ and }\\
			\fh(\rho, m\ell) &\leq \fh(\rho - \eps, m\ell), \text{ for all large } \ell.
			\end{aligned}
			\]
			Therefore, $\ds{\lim_{\ell \to \infty} \ph(m\ell) = \rho}$. Also, (\ref{eq:pk_to_phk}) gives us that
			\[
			p_m = p(k) = p(m\ell) = \frac{\ph(m\ell)}{m\ell}.
			\]
			This last equality and $\ds{\lim_{\ell \to \infty} \ph(m\ell) = \rho}$ imply that
			\[ 
			\lim_{\ell \to \infty} \frac{p_m}{\ell^{-1}(\sqrt{1+2/m}-1)} = 
			\lim_{\ell \to \infty} \frac{\ph(m\ell)/(m\ell)}{\ell^{-1}(\sqrt{1+2/m}-1)} = \lim_{\ell \to \infty} \frac{\ph(m\ell)}{\rho} = 1.
			\]
		\end{proof}

		We next work on what the total expected runtime is on the entire \blockleadingones problem when 
		using the optimal fitness-dependent mutation rate. 
		
		\begin{lemma}
			\label{lem:min_runtime_of_block}
			Let $a$, $b$, and $\tf$ be as in Definition~\ref{def:f_p_tf_and_tp}, and let $T_k'$ be as in Definition~\ref{def:T_k_prime}. 
			Then taking asymptotics as $k \to \infty$,
			\[
			\min_{x} \tf(x, k) = (1 + o(1)) e(bk + a).
			\]
			Consequently, taking the minimum over (fitness-dependent) mutation rates $p$,
			\[
			\min_{p} \expectation[T_k'] = (1 + o(1)) 2^{\ell} e (bk + a).
			\]
		\end{lemma}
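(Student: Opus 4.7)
The proof naturally splits along the two displayed formulas, and both rely on the fact that, with $\ell$ fixed, the minimizers $\tp(k)$ of $\tf(\cdot,k)$ and $\ph(k)$ of $\fh(\cdot,k)$ both tend to $1$ as $k \to \infty$.

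For the first formula, I would simply substitute the explicit minimizer from Lemma~\ref{lem:root_tp} into $\tf$. Lemma~\ref{lem:root_tp} gives both the closed form for $\tp(k)$ and the crucial fact $\tp(k) \to 1$ as $k \to \infty$. Writing $\tp(k) = 1 + o(1)$, we compute
\[
\tf(\tp(k), k) = e^{\tp(k)}\!\left[\frac{bk}{\tp(k)} + a\right] = e\bigl(1 + o(1)\bigr)\!\left[bk\bigl(1+o(1)\bigr) + a\right],
\]
and since $a$ is a constant (with respect to $k$) and hence $bk \cdot o(1) = o(bk+a)$, the bracket simplifies to $(bk+a)(1+o(1))$, giving $\min_x \tf(x,k) = (1+o(1))e(bk+a)$. (If sharper error terms were wanted, one could Taylor-expand the square root in Lemma~\ref{lem:root_tp} to get $\tp(k) = 1 - a/(bk) + O(1/k^2)$, but this is not needed here.)

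For the second formula, I would start from Lemma~\ref{lem:from_needle_to_block_allowing_0_steps}: substituting $p = x/k$ into its expression for $\expectation[T_k']$ yields exactly $\expectation[T_k'] = 2^\ell \fh(x,k)$, so minimizing over $p$ and minimizing $\fh$ over $x$ are the same problem, and $\min_p \expectation[T_k'] = 2^\ell \min_x \fh(x,k)$. In the proof of Theorem~\ref{thm:optimal_mutation_rate} it was shown that $\ph(k) \to 1$ as $k \to \infty$. Plugging $\ph(k) = 1 + o(1)$ into $\fh$: the factor $(1-\ph(k)/k)^{-k}$ equals $e^{\ph(k)}(1+O(1/k)) = e(1+o(1))$ by the standard estimate $(1-y_k/k)^k = e^{-y_k}(1 + O(y_k^2/k))$ for bounded $y_k$, while the bracketed factor $bk/\ph(k) + a + O(\ph(k)/k)$ reduces to $(bk+a)(1+o(1))$ exactly as in the first paragraph. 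Multiplying and then multiplying by $2^\ell$ gives the claim.

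There is no real obstacle here; the statement is essentially a packaging of results already proven. The only mild care required is the bookkeeping of $o(1)$ error terms alongside the unbounded quantity $bk$: one must note that $bk\cdot o(1)$ can be absorbed into $(bk+a)(1+o(1))$ because $a$ is bounded, and that the estimate $(1-\ph(k)/k)^k \to e^{-1}$ holds because $\ph(k)$ is a convergent (hence bounded) sequence, so that standard expansions of $\log(1-y_k/k)$ apply uniformly.
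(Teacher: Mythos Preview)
Your proof is correct. The first formula is handled exactly as in the paper: substitute $\tp(k)=1+o(1)$ from Lemma~\ref{lem:root_tp} into $\tf$ and simplify. For the second formula the paper takes a marginally different path: it invokes Lemma~\ref{lem:f_and_tf_are_asymptotic} to pass directly from $\min_x \tf(x,k)$ to $\min_x \fh(x,k)$, and then multiplies by $2^\ell$ via Lemma~\ref{lem:from_needle_to_block_allowing_0_steps}. Your route---evaluating $\fh$ at its minimizer $\ph(k)$ using $\ph(k)\to 1$ from the proof of Theorem~\ref{thm:optimal_mutation_rate}---is equally short and has the small advantage that it avoids the tacit step of inferring asymptotic equality of \emph{minima} from pointwise asymptotic equivalence of the functions.
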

		\begin{proof}
			Recall $\tp(k)$ from Definition \ref{def:f_p_tf_and_tp}. Lemma \ref{lem:root_tp} says
			$\lim_{k \to \infty}\tp(k) = 1$, and hence $\tp(k) = 1 + o(1)$. Therefore,
			\[
			\begin{aligned}
			\min_{x} \tf(x, k) = \tf(\tp(k), k) = \tf(1 + o(1), k) &= e^{1+o(1)}(bk/(1+o(1)) + a) \\
			&= (1 + o(1)) e(bk + a). \\
			\end{aligned}
			\]
			By Lemma \ref{lem:f_and_tf_are_asymptotic}, we have that
			\[
			\min_{x} \fh(x, k) = \min_{x} \tf(x, k), 
			\]
			and so
			\[
			\min_{x} \fh(x, k) = (1 + o(1)) e(bk + a).
			\]
			By Lemma \ref{lem:from_needle_to_block_allowing_0_steps}, we have
			\[
			\min_{p} \expectation[T_k'] = 2^{\ell} \min_{x} \fh(x, k) = (1 + o(1)) 2^{\ell} e (bk + a).
			\]
		\end{proof}

		\begin{proof}[Proof of Theorem \ref{thm:opt_adaptive_runtime}]
			Let $T_k'$ be as in Definition~\ref{def:T_k_prime}, but here, $k = m\ell$ for non-negative integers $m$. 
			Then $T = \sum_{m=0}^{n/\ell-1} T_{m\ell}'$, using $T_k'$ from Definition~\ref{def:T_k_prime}.
			
			So by linearity of expectation and Lemma \ref{lem:min_runtime_of_block}, we have
			\[
			\begin{aligned}
			\expectation[T] &= \sum_{m = 0}^{n/\ell-1} \expectation[T_{m\ell}']\\
			&= \sum_{m = 0}^{n/\ell-1}  (1+o(1))2^{\ell} e (b m \ell + a)\\
			&= \sum_{m = \log(n/\ell)}^{n/\ell-1}  (1+o(1))2^{\ell} e (b m \ell + a) + O(\log(n/\ell)) 2^{\ell} e (b \log(n/\ell) \ell + a) 
			\end{aligned}
			\]
			Let $S$ be the term on the right: $S = O(\log(n/\ell)) 2^{\ell} e (b \log(n/\ell) \ell + a)$, and let 
			$\ds{M = \sum_{m = \log(n/\ell)}^{n/\ell-1} m = \frac{(n/\ell-1) (n/\ell)}{2} - \frac{\log(n/\ell)(\log(n/\ell)+1)}{2} }$. So then, we
			have that
			\[
			\begin{aligned}
			\expectation[T] &= (1 + o(1))eb \ell 2^\ell M +  (1 + o(1))ea 2^\ell  O(n/\ell) + S.
			\end{aligned}
			\]
			Notice that $M = n^2/(2\ell^2) + O(n/\ell)$, and so
			\[
			\expectation[T] = (1 + o(1))\frac{e}{2} \cdot \frac{b 2^{\ell} n^2}{\ell} + eb\ell 2^\ell O(n/\ell) + ea 2^\ell  O(n/\ell) + S,
			\]
			and each of the right-most three terms grows asymptotically slower than the first one and so can be absorbed
			in the $(1 + o(1))$ to get our result:
			\[
			\expectation[T] = (1 + o(1))\frac{e}{2} \cdot \frac{b 2^{\ell} n^2}{\ell}.
			\]
		\end{proof}

		\section{Conclusion}
		
		In this work, we proposed a general method to analyze the time EAs need in order to leave plateaus. Using arguments from discrete Fourier analysis, we obtained exact expressions for these times. Naturally, our method is restricted to plateaus with certain symmetry properties, and this restriction is inherent to discrete Fourier analysis.
		
		In this first work using this method, we restricted ourselves to the \oea with general mutation rate. We are optimistic that our method can also be applied to other simple single-trajectory search heuristics. What is a more interesting question for future research is how EAs with nontrivial population sizes can be analyzed. We note that in this direction, so far only the results~\cite{DoerrK13cec,Eremeev20} exist, which both cannot determine the leading constant of the runtime. In~\cite{NimwegenC01}, a precise bound is stated, but it relies on the unproven assumption ``we can
		assume that in each generation there is an equal and independent probability that epoch $n$
		will end by creating a fitness $n+1$ string that spreads through the population'' [page~92]. 
		Consequently, how to prove a precise runtime estimate for the \oplea optimizing the \needle problem, is clearly a question that waits to be answered. 
		
		A second obvious direction for future work is to investigate how other important insights obtained previously on the \leadingones benchmark extend to the \blockleadingones problem. One particularly interesting topic could be the recent works on hyperheuristics. Since, as shown in this work, the \blockleadingones benchmark contains instances from a broad range of runtimes, it would be interesting to see if the hyperheuristics that show an excellent performance on the \leadingones problem keep their good performance also on the broader \blockleadingones benchmark, where longer times without an improvement must be tolerated.
		
		\section*{Acknowledgments}
		We would like to thank Marcin Mazur for help with the statement and proof of Lemma \ref{lem:from_Marcin}. We also would like to thank the reviewers for their helpful comments, in particular, pointing us to several previous works we were not aware of. This work was supported by a public grant as part of the
		Investissements d'avenir project, reference ANR-11-LABX-0056-LMH,
		LabEx LMH.

		\bibliographystyle{alpha}
		\newcommand{\etalchar}[1]{$^{#1}$}

	}
\end{document}